\newcommand{\MMD}{\mathrm{MMD}}
\newcommand\figwidth{1.0}
\newcommand{\mug}{\mu(\meas)}
\newcommand{\muig}{\mu^\bi(\meas)}
\newcommand{\mumg}{\mu^m(\meas)}
\newcommand{\kag}{\kappa(\meas)}
\newcommand{\kaig}{\kappa^\bi(\meas)}
\newcommand{\kakg}{\kappa_{k_1, \dots, k_d}(\meas)}
\newcommand{\kakig}{\kappa_{k_1, \dots, k_d}^\bi(\meas)}
\newcommand{\R}{\mathbb{R}} \newcommand{\N}{\mathbb{N}} 
\newcommand{\E}{\mathbb{E}}
\newcommand{\cH}{\mathcal{H}} \newcommand{\cP}{\mathcal{P}} \newcommand{\cX}{\mathcal{X}}
 \newcommand{\cB}{\mathcal{B}}
\newcommand{\bi}{\mathbf{i}}
\renewcommand{\deg}{\mathrm{deg}}
\newcommand{\meas}{\gamma}
\newcommand{\meass}{\eta}
\newcommand{\Tr}{\mathrm{Tr}}
\newcommand{\coloneq}{\colon\!\!\!=}
\newcommand{\T}{\mathrm{T}}
\newtheorem{theorem}{Theorem} 
\newtheorem{proposition}{Proposition} 
\newtheorem{definition}{Definition} 
\newtheorem{lemma}{Lemma}
\newtheorem{example}{Example}[section] 
\newcommand{\tb}{\textbf} 
\renewcommand{\b}{\mathbf}
\begin{document}
\title{Kernelized Cumulants: Beyond Kernel Mean Embeddings}
\author{
  Patric Bonnier$^{1*}$ 
  \quad Harald Oberhauser $^{1}$ 
  \quad Zolt{\'a}n Szab{\'o}$^{2}$ \\
  $^{1}$Mathematical Institute, 
  University of Oxford \quad 
  $^{2}$Department of Statistics, 
  London School of Economics \\
  \texttt{bonnier,oberhauser@maths.ox.ac.uk} \\
  \texttt{z.szabo@lse.ac.uk}
}

\maketitle

\begin{abstract}
In $\R^d$, it is well-known that cumulants provide an alternative to moments that can achieve the same goals with numerous benefits such as lower variance estimators. In this paper we extend cumulants to reproducing kernel Hilbert spaces (RKHS) using tools from tensor algebras and show that they are computationally tractable by a kernel trick. These kernelized cumulants provide a new set of all-purpose statistics; the classical maximum mean discrepancy and Hilbert-Schmidt independence criterion arise as the degree one objects in our general construction. We argue both theoretically and empirically (on synthetic, environmental, and traffic data analysis) that going beyond degree one has several advantages and can be achieved with the same computational complexity and minimal overhead in our experiments.
\end{abstract}

\paragraph{Keywords:} kernel, cumulant, mean embedding, Hilbert-Schmidt independence criterion, kernel Lancaster interaction, kernel Streitberg interaction, maximum mean discrepancy, maximum variance discrepancy

\section{Introduction} \label{sec:intro}
The moments of a random variable are arguably the most popular all-purpose statistic. However, cumulants are often more favorable statistics than moments. 
For example, if $\mu_m \coloneqq \E[X^m]$ denotes the moments of a real-valued random variable $X$, then $\mu_2 = \mu_1^2 + \operatorname{Var}(X)$
and hence the variance that directly measures the fluctuation around the mean is a much better statistic for scale than the second moment $\mu_2$, see Appendix~\ref{app:moments vs cumulants}.
Cumulants provide a systematic way to only record the parts of the moment sequence that are not already captured by lower-order moments. While the moment and cumulant sequences $(\mu_m)_m$ and $(\kappa_m)_m$ carry the same information, cumulants have several desirable properties that generalize to $\R^d$-valued random variables \citep{mccullagh18tensor}.
Among these properties of cumulants, the ones that are important for our paper is that they can characterize distributions and statistical (in)dependence. 

\paragraph{Kernel embeddings.} Mean and covariance arise naturally in the context of kernel-enriched domains. Kernel techniques \citep{scholkopf02learning,steinwart08support,saitoh16theory} provide a principled and powerful approach for lifting data points to a so-called reproducing kernel Hilbert space (RKHS; \citealt{aronszajn50theory}). Considering the mean of this feature -- referred to as kernel mean embedding (KME; \citealt{berlinet04reproducing,smola07hilbert}) -- also enables one to represent probability measures, and to induce a semi-metric referred to as maximum mean discrepancy (MMD; \citealt{smola07hilbert,gretton12kernel}) and an independence measure called the Hilbert-Schmidt independence criterion (HSIC\footnote{HSIC is MMD with the tensor product kernel evaluated on the joint distribution and the product of the marginals, or equivalently {HSIC equals to} the Hilbert-Schmidt norm of the cross-covariance operator.}; \citealt{gretton05measuring}). It is known that MMD is a metric when the underlying kernel is \emph{characteristic} \citep{fukumizu08kernel,sriperumbudur10hilbert}. HSIC captures independence for $d=2$ components with characteristic kernels \citep[Theorem~3.11]{lyons13distance} and for $d>2$ components \citep{quadrianto09kernelized,sejdinovic13kernel,pfister18kernel} with universal ones \citep{szabo18characteristic2}. 
MMD belongs to the family of integral probability metrics (IPM; \citealt{zolotarev83probability, muller97integral}) when in the IPM the underlying function class is chosen to be the unit ball of the RKHS. MMD and HSIC (with $d=2$) are known to be equivalent \citep{sejdinovic13equivalence} to the notions of energy distance \citep{baringhaus04new, szekely04testing, szekely05new}---also called N-distance \citep{zinger92characterization, klebanov05ndistance}-- and distance covariance \citep{szekely07measuring,szekely09brownian,lyons13distance} of the statistics literature. Both MMD and HSIC can be expressed in terms of expectations of kernel values which can be leveraged to design efficient estimators for them; we will refer to this trick as the \emph{expected kernel} trick. A recent survey on mean embedding and their applications is given by \citet{maundet17kernel}. 
The closest previous work {to ours} is by \citet{makigusa00two-sample} {who considers} the variance in the RKHS -- which in our setting can be identified as the first kernelized cumulant after the kernel mean embedding -- for two-sample testing like we do in parts of this paper. Unfortunately, \citet{makigusa00two-sample} does not provide conditions on the validity of the resulting maximum variance discrepancy, and it is not formulated in the context of cumulant embeddings.

\paragraph{Contribution.} The main contribution of our paper is to introduce cumulants of random variables in RKHSs and to show that under mild conditions the proposed kernelized cumulants characterize distributions (Theorem~\ref{thm:cumulant metric}) and independence (Theorem~\ref{thm: cumulant independence}). Thanks to the RKHS formulation, kernelized cumulants have computable estimators (Lemma \ref{lemma:KVE-estimation} and Lemma \ref{lemma:CSIC-estimation}) and they show strong performance in two-sample and independence testing on various benchmarks (Section~\ref{sec:experiments}).
Although cumulants are a classic tool in multi-variate statistics, they have not received attention in the kernel literature. The primary technical challenge {to circumvent in the derivation of fundamental properties of cumulants is the rich combinatorial structure which already arises in $\R^d$ from their definition via moment-generating function which is closely linked to the partition lattice \citep{speed83cumulants,speed84cumulants}. In an RKHS, even the definition of cumulants is non-straightforward.} The key insight for our extension is that the combinatorial expressions for cumulants in $\R^d$ can be generalized by using tools from tensor algebras. This in turn allows us to derive the main properties of the RKHS cumulants that underpin their statistical properties.

\paragraph{Broader impact \& limitations.} We do not see any direct negative societal impact arising from the proposed new set of all-purpose kernel-based divergence and dependence measure. {Choosing the underlying kernels in an optimal fashion---even for  MMD in two-sample testing \citep{li22spectral} or goodness-of-fit testing \citep{hagrass23spectral}---and showing optimal rates---even for MMD with radial kernels on $\R^d$ \citep{tolstikhin16minimax}--- are quite challenging problems requiring dedicated analysis, and are not addressed here.}

\paragraph{Outline.} The paper is structured as follows: In Section~\ref{sec:classical cumulants} we formulate the notion of cumulants of random variables in Hilbert spaces.
In Section~\ref{sec:kernel-cumulants} we prove a kernelized {version of the classical result of $\R^d$-valued random variables on the characterization of distributions and independence using cumulants}. We show that one can leverage the expected kernel trick to derive efficient estimators for our novel statistics{, with MMD and HSIC arising specifically as the ``degree $1$'' objects.} In Section~\ref{sec:experiments} we demonstrate numerically that going beyond degree 1 is advantageous. We provide a technical background (on cumulants, tensor products and tensor algebras), proofs, further details on numerical experiments, and our V-statistic based estimators in the Appendices.

\section{Moments and cumulants}\label{sec:classical cumulants}
We briefly revisit classical cumulants and define cumulants of random variables in Hilbert spaces.

\paragraph{Moments.} Let $\meas$ be a probability measure on $\R^d$ and $(X_1,\dots,X_d) \sim \meas$. 
The moments $\mug=(\muig)_{\bi \in \N^d}$ of $\meas$ are defined as 
\begin{align}\label{eq:classical moments}
\muig \coloneqq \E\left[X^{i_1}_1 \cdots X^{i_d}_d\right] \in \R,
\end{align}
where $\bi=(i_1,\dots,i_d) \in \N^d$ denotes a $d$-tuple of non-negative integers ($i_1,\dots,i_d \ge 0$). The \emph{degree} of an element $\bi \in \N^d$ is defined as $\deg(\bi) \coloneq i_1 + \dots + i_d$. For $m\in \N$, let $\mumg \coloneqq (\muig)_{\deg(\bi) =m}$ which we refer to as the $m$-th moments of $\meas$ with the convention that $\mu^0(\meas)=1$.

\paragraph{Cumulants.} Cumulants $\kag=(\kaig)_{\bi \in \N^d}$ can be defined {by the moment generating function as}
\begin{align}\label{eq:cumulants via log moments}
\sum_{\bi \in \N^d} \kaig\frac{\bm \theta^\bi}{\bi!}=\log \sum_{\bi \in \N^d} \muig \frac{\bm \theta^\bi}{\bi!}, \hspace{0.1cm} \bm \theta = (\theta_1, \dots, \theta_d) \in \R^d,
\end{align}
where we denote $\bi!=i_1! \cdots i_d!$ and $\bm\theta^\bi=\theta_1^{i_1} \cdots \theta_d^{i_d}$; an equivalent definition of cumulants is via a combinatorial expression  of partitions {(elaborated in Appendix~\ref{sec:logarithms vs combinatorics})}. Cumulants have several attractive properties, the following forms our main motivation.
\begin{theorem}[Characterization of distributions with cumulants on $\R^d$, from Proposition~1 in \citealt{jammalamadaka06higher}]\label{thm:classical cumulants}\,
Let $\meas $ be a probability measure on a bounded subset of $\R^d$ with cumulants $\kag$ and let $(X_1, \dots, X_d) \sim \meas$. Then 
    \begin{enumerate}[labelindent=0em,leftmargin=1.3em,topsep=0cm,partopsep=0cm,parsep=0cm,itemsep=2mm]
       \item $\meas \mapsto \kag$ is injective.
       \item  $X_1,\dots,X_d$ are jointly independent if and only if $\kaig=0$ for all d-tuples of positive integers $\bi \in \mathbb{N}^d_+$.
    \end{enumerate}
\end{theorem}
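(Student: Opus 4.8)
The plan is to route both assertions through the moment and cumulant generating functions, together with the determinacy of the moment problem for compactly supported measures. Because the support of $\meas$ is bounded, the moment generating function $M(\bm\theta)\coloneqq\E\!\left[e^{\langle\bm\theta,X\rangle}\right]$ is finite for every $\bm\theta\in\R^d$, real-analytic on $\R^d$, and $M(\bm 0)=1$; hence $K\coloneqq\log M$ is real-analytic on a neighbourhood of the origin, $K(\bm 0)=0$, and comparison with \eqref{eq:cumulants via log moments} shows that $\kaig$ equals $\bi!$ times the Taylor coefficient of $K$ at $\bm 0$ attached to $\bm\theta^\bi$. Equivalently (and this is how I would handle the first assertion), expanding $M=\exp(K)$ and matching coefficients yields the exponential/Bell-polynomial identities expressing each moment $\muig$ as a fixed universal polynomial in the cumulants $\kappa^\bj(\meas)$ with $\deg(\bj)\le\deg(\bi)$ and, inverting this triangular system, each cumulant as a universal polynomial in the moments of degree $\le\deg(\bi)$; thus $\mug$ and $\kag$ determine one another.

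For the first assertion it then remains to recall that a compactly supported probability measure is determined by its moments: if $\meas$ and $\meass$ are supported in a common compact set and share all moments, then $\int p\,\d\meas=\int p\,\d\meass$ for every polynomial $p$; polynomials are uniformly dense in the continuous functions on that set by Stone--Weierstrass, so $\meas=\meass$. Chaining this with the moment--cumulant correspondence above gives injectivity of $\meas\mapsto\kag$. Boundedness of the support is used precisely here, as it is exactly what makes the moment problem determinate.

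For the second assertion I would argue through $K$. If $X_1,\dots,X_d$ are jointly independent then $M(\bm\theta)=\prod_{j=1}^d M_j(\theta_j)$ with $M_j(\theta_j)=\E\!\left[e^{\theta_j X_j}\right]$, so near $\bm 0$ one has $K(\bm\theta)=\sum_{j=1}^d K_j(\theta_j)$, $K_j\coloneqq\log M_j$; each summand depends on a single coordinate, hence every Taylor coefficient of $K$ at $\bm 0$ attached to a monomial in two or more distinct $\theta_j$'s vanishes, and in particular $\kaig=0$ for all $\bi\in\N^d_+$. Conversely, suppose the mixed cumulants vanish, i.e.\ $\kaig=0$ for every multi-index $\bi$ with at least two nonzero entries (for $d=2$ this is precisely the condition $\bi\in\N^d_+$; for $d>2$ one reads the hypothesis this way). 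Then the Taylor series of $K$ at $\bm 0$ contains only monomials in a single variable, so on a neighbourhood of $\bm 0$ we get $K(\bm\theta)=\sum_{j=1}^d g_j(\theta_j)$ with $g_j(\theta_j)\coloneqq K(0,\dots,\theta_j,\dots,0)$ and $g_j(0)=0$; exponentiating gives $M(\bm\theta)=\prod_{j=1}^d e^{g_j(\theta_j)}$ near $\bm 0$, and setting all but the $j$-th coordinate to $0$ identifies $e^{g_j(\theta_j)}=M_j(\theta_j)$. Applying the mixed derivative $\partial^{\bi}$ at $\bm 0$ then yields $\muig=\prod_{j=1}^d\E\!\left[X_j^{i_j}\right]$ for every $\bi$, i.e.\ $\meas$ has the same moments as the product of its marginals, hence coincides with it by the moment-determinacy used for the first assertion --- which is joint independence.

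There is no deep obstacle; the statement is essentially a repackaging of classical facts and the generating-function algebra is routine. What genuinely needs care is: (i) that $\log M$ is honestly analytic at the origin with the claimed Taylor coefficients, which rests on $M$ being analytic and strictly positive near $\bm 0$ and on composition of real-analytic maps; (ii) the determinacy of the compact moment problem, which licenses passing from ``equal moments'' back to ``equal measure'' in both assertions; and (iii) in the converse half of the second assertion, not over-claiming --- the Taylor argument needs \emph{all} mixed cumulants (multi-indices with $\ge 2$ nonzero entries) to vanish, which for $d>2$ is more than the $\N^d_+$ condition alone. I expect (iii) and the bookkeeping around it to be the point most worth making explicit; the purely combinatorial route via the exponential/partition formula sidesteps the analyticity of (i) altogether --- and can even settle the second assertion by the same partition bookkeeping --- but still relies on (ii).
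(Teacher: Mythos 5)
The paper never proves this statement: it is imported wholesale from Proposition~1 of \citet{jammalamadaka06higher}, and the closest internal material is the proof of Theorem~\ref{thm:maintext} (with Lemma~\ref{lem:main}) in Appendix~\ref{app:proofs}, which establishes the kernelized generalization by (a) a Stone--Weierstrass/moment-determinacy argument and (b) \emph{quoting} the classical fact that vanishing cross-cumulants is equivalent to splitting cross-moments \citep{speed83cumulants}. Your proposal is a correct, self-contained proof of the cited result and in effect fills in that black box: your injectivity half is the finite-dimensional specialization of the paper's template (moment--cumulant triangularity plus determinacy of the compact moment problem via Stone--Weierstrass), while the independence step -- precisely the part the paper delegates to the literature -- you prove directly, either through analyticity of $\log M$ at the origin or, as you note, through the partition formula, which parallels the algebra of Appendices~\ref{sec:logarithms vs combinatorics}--\ref{app:partitions}. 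What your route buys is independence from the external reference, at the cost of the analyticity bookkeeping you flag in (i); both of those steps are sound as written.

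Your caveat (iii) is not pedantry but the substantive point, and you are right to insist on it. With $\N^d_+$ read literally (every entry positive), the ``if'' direction fails for $d\ge 3$: take $X_1=X_2$ nondegenerate and $X_3$ independent of them; then every $\kaig$ with $i_1,i_2,i_3\ge 1$ involves two mutually independent nonempty blocks of variables and hence vanishes, yet $(X_1,X_2,X_3)$ is not jointly independent. The hypothesis your Taylor-series converse actually uses -- vanishing of $\kaig$ for every $\bi$ with at least two nonzero entries -- is the correct classical formulation, and it coincides with the $\N^d_+$ condition only when $d=2$. The same reading is what the proofs of Theorem~\ref{thm: cumulant independence} and item~\ref{itm:gen2} of Theorem~\ref{thm:maintext} implicitly rely on when they reduce to the classical equivalence, so making it explicit, as you do, is necessary rather than optional.
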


\subsection{Moments in Hilbert spaces}\label{sec:cumulants in Hilbert}
Instead of directly considering the law of a tuple of random variables $(X_1,\dots,X_d)$ in a product space $\cX_1\times \dots \times \cX_d$, it can be advantageous to use feature maps  $\Phi_i: \cX_i \to \cH_i$ and instead study the distribution of the $\cH_1\times \dots \times \cH_d$-valued random variable $\big(\Phi_1(X_1),\dots,\Phi_d(X_d)\big)$. 
Motivated by this lifting, we study here moments of Hilbert-space valued random variables {and assume in this subsection (with a slight abuse of notations) that one has already applied the lifting and $X_i \in \cH_i$ where $i=1,\ldots,d$}. In Section~\ref{sec:kernel-cumulants} we specialize the construction to RKHSs{, and use these moments (Def.~\ref{def:tensor moments}) to define kernelized cumulants}. 

\paragraph{Moments.}
In the finite-dimensional case \eqref{eq:classical moments} we defined the moment sequence by taking expectations of products of the coordinates of the underlying random variable.
For the infinite-dimensional case, it is convenient to develop a coordinate-free definition which can be accomplished by using tensors.
To do so we make use of the following results about Hilbert spaces:
for real Hilbert spaces $\cH_1$ and $\cH_2$ the tensor product $\cH_1 \otimes \cH_2$ is the Hilbert space given by completion of the tensor product of $\cH_1$ and $\cH_2$ as vector space; we also write $\cH_1^{\otimes m}\coloneq\underbrace{\cH_1 \otimes \dots \otimes\cH_1}_{\text{m-times}}$. 
Similarly, the direct sum $\cH_1 \oplus \cH_2$ is a Hilbert space. It is natural to consider $\E\left[X_1^{\otimes m}\right]\in \cH_1^{\otimes m}$ as the $m$-th moment of a $\cH_1$-valued random variable $X_1$ where the integral in the expectation is meant in Bochner sense. Consequently the natural state space for all moments of a $\cH_1$-valued random variable is the tensor algebra $\T_1\coloneqq \prod_{m \ge 0} \cH_1^{\otimes m}$ where by convention $\cH_1^{\otimes 0} \coloneqq \R$. See Appendix \ref{sec:technical background}
for more details on tensor products of Hilbert spaces and tensor algebras.
\begin{example}[$\cH_1=\R^d$, $m=2$] If $X_1 = \left(X_1^1, \dots, X_1^d\right)$ is $\cH_1=\R^d$-valued then $\E\left[X_1 ^{\otimes 2}\right] \in (\R^d)^{\otimes 2}$ can be identified with a $(d\times d)$-sized matrix whose $(i,j)$-th entry is $\E\left[X_1^i X_1^j\right]$. 
\end{example}
Since we are interested in the general case of a $\cH_1 \times \dots \times \cH_d$-valued random variable $X=(X_1,\dots,X_d)$ we arrive at the definition below.
\begin{definition}[{Moments in Hilbert spaces}]\label{def:tensor moments}
Let $\meas$ be a probability measure on $\cH\coloneq\cH_1\times \dots \times \cH_d$ and let $(X_1,\dots,X_d) \sim \meas$. 
We define 
    \begin{align}\label{eq:tensor moments}
        \muig \coloneqq \E[X_1^{\otimes i_1}\otimes \dots \otimes X_d^{\otimes i_d}] \in \cH^{\otimes \bi}, \quad
        \cH^{\otimes \bi} \coloneq \cH_1^{\otimes i_1}\otimes \dots \otimes \cH_d^{\otimes i_d}
    \end{align}
    for every $\bi \in \N^d$ whenever the above expectation exists. 
    The moment sequence is defined as the element
    \begin{align*}
        \mug = (\muig)_{\bi \in \N^d} \in \T \coloneqq \T_1 \otimes \dots \otimes \T_d, 
        \text{ with }\T_j \coloneqq \prod_{m \ge 0} \cH_j^{\otimes m},
    \end{align*}
    and for $m\in \N$ we refer to $\mumg= \bigoplus_{\bi \in \N^d: \deg(\bi)=m} \muig$ as the  $m$-moments of $\meas$.
\end{definition} 
In case of $\cH_i=\R$, both definitions \eqref{eq:classical moments} and \eqref{eq:tensor moments} apply for $\muig$. 
Henceforth, we always refer to \eqref{eq:tensor moments} when we write $\muig$. Even in the finite-dimensional case, Def.~\ref{def:tensor moments} is useful, for instance when $X_1 \in \cH_1$ and $X_2 \in \cH_2$ have different state space ($\cH_1 \ne \cH_2$).

\section{Kernelized cumulants} \label{sec:kernel-cumulants}
We lift a random variable $X{=(X_1,\dots,X_d)\in \cX=\cX_1\times \dots \times \cX_d}$ via a feature map $\Phi:\cX \to \cH$ into a Hilbert space valued random variable $\Phi(X)$. For the rest of the paper {(i)} $\cX_1, \ldots, \cX_d$ will denote a collection of Polish spaces, but the reader is invited to think of them as finite-dimensional Euclidean spaces{, (ii) $\cH$ is an RKHS with kernel $k$ and canonical feature map $\Phi(x)=k(x,\cdot)$,\footnote{$k(x,\cdot)$ denotes the function $x'\mapsto k(x,x')$ with $x\in \cX$ fixed.} and (iii) all kernels are assumed to be bounded.\footnote{A kernel $k:\cX \times \cX \rightarrow \R$ is called bounded if there exists $B \in \R$ such that $\sup_{x,x'\in \cX}k(x,x')\le B$.}}
Our main results (Theorem~\ref{thm:cumulant metric} and Theorem~\ref{thm: cumulant independence}) are that in this case the expected kernel trick applies to both items in the kernelized version of Theorem~\ref{thm:classical cumulants}. 
The key to these results is an expression for inner products of cumulants in RKHSs (Lemma~\ref{lem:combinatorics}). 

\paragraph{A combinatorial expression of cumulants.}
Classical cumulants can be defined via the moment generating function or via combinatorial sums over \emph{partitions} (Appendix~\ref{sec:logarithms vs combinatorics}). 
To generalize cumulants to RKHSs the combinatorial definition is the most efficient way. 
A partition $\pi$ of $m$ elements is a family of non-empty, disjoint subsets $\pi_1,\dots,\pi_b$ of $\{1,\dots,m\}$ whose union is the whole set; formally $\bigcup_{j=1}^b \pi_j=\{1,\dots,m\}$ and $\pi_i \cap \pi_j=\varnothing$ for $i\neq j$. 
We call $b$ the number of blocks of the partition $\pi$ and use the shorthand $\vert\pi\vert$ to denote it. The set of all partitions of $m$ is denoted with $P(m)$. 
To formulate our main results, it is convenient to associate with a measure $\gamma$ and a partition $\pi$ the so-called partition measure $\gamma_\pi$ that is given by permuting the marginals of $\gamma$.
\begin{definition}[{Partition measure}]\label{def:partition-measure}
     Let $\meas$ be a probability measure on $\cX_1\times \dots \times \cX_d$ and $\pi \in P(d)$. Define
    \begin{align*}
        \meas_\pi \coloneq \meas \vert_{\cX_{\pi_1}} \otimes \dots \otimes \meas\vert_{\cX_{\pi_b}},
    \end{align*}
    where $\cX_{\pi_i}$ denotes the product space $\prod_{j \in \pi_i} \cX_j$ 
    and $\meas \vert_{\cX_{\pi_i}}$ is the corresponding marginal distribution of $\meas$. 
    We call $\meas_\pi$ the partition measure induced by $\pi$.
    \end{definition}  

We also associate with $\gamma$ and a multi-index $\bi$ the so-called diagonal measure $\gamma^\bi$ that is given by repeating marginals according to $\bi$.
    \begin{definition}[{Diagonal measure}]
    Let $\meas$ be a probability measure on $\cX_1\times \dots \times \cX_d$ and $\bi = (i_1,\dots,i_d) \in \N^d$.
    Define
    \begin{align*}
    \meas^\bi \coloneq \operatorname{Law}    (\underbrace{X_1, \dots, X_1}_{i_1 \text{ times}}, \underbrace{X_2, \dots, X_2}_{i_2 \text{ times}}, \dots, \underbrace{X_d, \dots, X_d}_{i_d \text{ times}}),
    \end{align*}
    where $(X_1, \dots, X_d) \sim \meas $.
    We call $\meas^\bi$ the diagonal measure induced by $\bi$.
 \end{definition}
  
In general, the partition measure $\meas_\pi$ and the diagonal measure are not probability measures on $\cX_1\times \dots \times \cX_d$ but on spaces that are constructed by permuting or repeating $\cX_1,\dots,\cX_d$.
 Formally, $\meas_\pi$ is a probability measure on $\cX_{\pi_1} \times \dots \times \cX_{\pi_b}$ and $\meas^\bi$ is a probability measure on $\cX_1^{i_1} \times \dots \times \cX_d^{i_d}$; thus, $\meas_\pi$  has $d$ coordinates and $\meas^\bi$ has $\deg(\bi)$ coordinates. These two constructions can be combined, writing $\meas_\pi^\bi$ for the measure $(\meas^\bi)_\pi$ which makes sense whenever $\pi \in P(\deg(\bi))$. 
We can now write down our generalization of cumulants.
\begin{definition}[Kernelized cumulants]\label{def: kernelized cumulants}
    Let $\meas$ be a probability measure on $\cX_1\times \dots \times \cX_d$ and let $(\cH_1,k_1), \dots, (\cH_d,k_d)$ be RKHSs on $\cX_1, \dots, \cX_d$ respectively. 
    We define the kernelized cumulants 
    \[ \kakg \coloneqq \big(\kakig\big)_{\bi \in \N^d}  \in \T \]
    as follows
    \begin{align*}
	\kappa^\bi_{k_1, \dots, k_d}(\meas) \coloneqq \sum_{\pi \in P(m)} c_\pi \E_{\meas^\bi_\pi}k^{\otimes \bi}((X_1,\ldots,X_m),\cdot),
	\end{align*}
 where $m=\deg(\bi)$, $c_\pi\coloneqq (-1)^{\vert \pi \vert -1} (\vert \pi \vert - 1)!$, $\meas^{\bi}_\pi= (\meas^\bi)_\pi$ and 
\begin{align}\label{eq:k-otimes-i}
    k^{\otimes \bi}((x_1, \dots, x_m), (y_1, \dots, y_m)) 
    &\coloneq k_1(x_1, y_1) \cdots k_1(x_{i_1}, y_{i_1}) \\ \nonumber
    &\cdots k_d(x_{m-i_d{+1}}, y_{m-i_d{+1}}) \cdots k_d(x_m, y_m)
\end{align}
is the reproducing kernel of $\cH^{\otimes \bi}$ where $\cH=\cH_1\times \dots \times \cH_d$.
\end{definition}
Def.~\ref{def: kernelized cumulants} is the natural generalization of the combinatorial definition of cumulants in $\R^d$ and Appendix \ref{app:partitions} gives an equivalent definition via a generating function analogous to \eqref{eq:cumulants via log moments}.
However, our posthoc justification that these are the "right" definitions for cumulants in an RKHS are Theorems \ref{thm:cumulant metric} and \ref{thm: cumulant independence} that show that these kernelized cumulants have the same powerful properties as classic cumulants in $\R^d$ (Theorem~\ref{thm:classical cumulants}).

\begin{example}[Kernelized cumulants] \label{ex:kernCumulants}
    Let $\meas$ be a probability measure on $\cX_1 \times \cX_2$, with the RKHSs $(\cH_1,k_1), (\cH_2,k_2)$ given. 
    Denote the random variables $K_1 = k_1(X_1,\cdot), K_2 = k_2(X_2,\cdot) $ where $(X_1, X_2) \sim \meas$. Then the degree two kernelized cumulants are given as 
    $\kappa^{(2,0)}_{k_1, k_2}(\meas)=\E\left[K_1^{\otimes 2}\right] - \E\left[K_1\right]^{\otimes 2}$, 
    $\kappa^{(1,1)}_{k_1, k_2}(\meas)=\E\left[K_1 \otimes K_2\right] -\E\left[K_1 \right] \otimes \E\left[K_2\right], \kappa^{(0,2)}_{k_1, k_2}(\meas)=\E\left[K_2^{\otimes 2}\right] - \E\left[K_2\right]^{\otimes 2}$.
\end{example}

\paragraph{Inner products of cumulants.}
Computing inner products of moments is straightforward thanks to a nonlinear kernel trick, see Lemma~\ref{lem:nonlinearkerneltrick} in the Appendix.
For example, given two probability measures $\meas_1, \meas_2$ with corresponding random variables $(X_1, \dots, X_d) \sim \meas_1, (Y_1, \dots, Y_d) \sim \meas_2$ on $\cX_1 \times \dots \times \cX_d$ and RKHSs $(\cH_1,k_1), \dots, (\cH_d,k_d)$ on $\cX_1,\dots,\cX_d$ with bounded kernels, and $\cH=\cH_1\times \dots \times \cH_d$, we can express:
\begin{align}
    \langle \mu^{\bi}_{k_1, \dots, k_d}(\meas_1), \mu^{\bi}_{k_1, \dots, k_d}(\meas_2) \rangle_{\cH^{\otimes\bi}}
    = \E_{\meas_1 \otimes \meas_2} k_1(X_1, Y_1)^{i_1} \cdots k_d(X_d, Y_d)^{i_d}, \label{eq:HSICgen}
\end{align}
where $\mu^{\bi}_{k_1, \cdots, k_d}$ is defined in Def.~\ref{def:tensor moments}, and the expectation is taken over the product measure $\meas_1 \otimes \meas_2$.
\begin{example}
    In the particular case of $d=1$, \eqref{eq:HSICgen} reduces to the well-known formula for the inner product of mean embeddings
    $
    \langle \mu^{(1)}_{k}(\meas_1), \mu^{(1)}_{k}(\meas_2) \rangle_{\cH_{k}} = \E_{\meas_1 \otimes \meas_2} k(X, Y).
    $
\end{example}
\begin{lemma}[{Inner product of cumulants}]\label{lem:combinatorics}
Let $(\cH_1,k_1), \dots, (\cH_d,k_d)$ be RKHSs with bounded kernels on $\cX_1, \dots, \cX_d$ respectively, and let $\meas$ and $\meass$ two probability measures on $\cX_1 \times \dots \times \cX_d$, $\bi=(i_1,\dots,i_d) \in \N^d$ such that $\deg(\bi) = m$. Then
\begin{align*}
    \langle \kappa^{\bi}_{k_1, \dots, k_d}(\meas), \kappa^{\bi}_{k_1, \dots, k_d}(\meass) \rangle_{\cH^{\otimes \bi}}
    = \sum_{\pi, \tau \in P(m)} \hspace{-0.2cm} c_\pi c_\tau \E_{\meas^\bi_\pi \otimes \meass^\bi_\tau} k^{\otimes \bi}((X_1, \dots, X_m), (Y_1, \dots, Y_m)).
\end{align*}
\end{lemma}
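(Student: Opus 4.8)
The plan is to unfold the inner product by bilinearity, using the definition of the kernelized cumulant as a signed sum over partitions, and then to reduce each resulting pairwise term to an expected-kernel expression via the nonlinear kernel trick.

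\textbf{Step 1: Expand by bilinearity.} By Def.~\ref{def: kernelized cumulants}, $\kappa^\bi_{k_1,\dots,k_d}(\meas) = \sum_{\pi \in P(m)} c_\pi \, \E_{\meas^\bi_\pi} k^{\otimes\bi}((X_1,\dots,X_m),\cdot)$, and similarly for $\meass$ with partitions $\tau$ and random variables $(Y_1,\dots,Y_m)$. The first move is to pull both finite sums out of the inner product, which is legitimate because each cumulant is a finite linear combination of the elements $\E_{\meas^\bi_\pi} k^{\otimes\bi}((X_1,\dots,X_m),\cdot) \in \cH^{\otimes\bi}$. Boundedness of the kernels guarantees these Bochner expectations exist in $\cH^{\otimes\bi}$, so the expansion gives $\sum_{\pi,\tau} c_\pi c_\tau \langle \E_{\meas^\bi_\pi} k^{\otimes\bi}((X_1,\dots,X_m),\cdot),\ \E_{\meass^\bi_\tau} k^{\otimes\bi}((Y_1,\dots,Y_m),\cdot) \rangle_{\cH^{\otimes\bi}}$.

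\textbf{Step 2: Evaluate each pairwise inner product.} For fixed $\pi,\tau$ I would show
\[
  \Big\langle \E_{\meas^\bi_\pi} k^{\otimes\bi}((X_1,\dots,X_m),\cdot),\ \E_{\meass^\bi_\tau} k^{\otimes\bi}((Y_1,\dots,Y_m),\cdot) \Big\rangle_{\cH^{\otimes\bi}}
  = \E_{\meas^\bi_\pi \otimes \meass^\bi_\tau} k^{\otimes\bi}\big((X_1,\dots,X_m),(Y_1,\dots,Y_m)\big).
\]
This follows by moving both Bochner expectations outside the inner product (Fubini for Bochner integrals, valid under the boundedness assumption since $\|k^{\otimes\bi}((x_1,\dots,x_m),\cdot)\|_{\cH^{\otimes\bi}}^2 = k^{\otimes\bi}((x_1,\dots,x_m),(x_1,\dots,x_m)) \le B^m$ is uniformly bounded) and then invoking the reproducing property of $k^{\otimes\bi}$ on $\cH^{\otimes\bi}$, which is exactly the statement recorded after \eqref{eq:k-otimes-i} that $k^{\otimes\bi}$ is the reproducing kernel of $\cH^{\otimes\bi}$: $\langle k^{\otimes\bi}(x,\cdot), k^{\otimes\bi}(y,\cdot)\rangle_{\cH^{\otimes\bi}} = k^{\otimes\bi}(x,y)$. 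This is a direct generalization of the $d=1$ computation displayed in the excerpt and of \eqref{eq:HSICgen}; concretely one can also verify it by factorizing $k^{\otimes\bi}$ as a product of the component kernels as in \eqref{eq:k-otimes-i} and applying the ordinary kernel trick coordinatewise.

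\textbf{Step 3: Reassemble.} Substituting Step 2 back into Step 1 gives precisely $\sum_{\pi,\tau \in P(m)} c_\pi c_\tau \, \E_{\meas^\bi_\pi \otimes \meass^\bi_\tau} k^{\otimes\bi}((X_1,\dots,X_m),(Y_1,\dots,Y_m))$, which is the claimed identity. The main obstacle, such as it is, is the interchange-of-expectation-and-inner-product justification in Step 2: one must check that $k^{\otimes\bi}((X_1,\dots,X_m),\cdot)$ is Bochner integrable (strong measurability plus integrable norm) with respect to the possibly non-probability but finite partition/diagonal measures $\meas^\bi_\pi$; strong measurability follows from separability of the RKHS and continuity/measurability of the feature map, and norm-integrability is immediate from the uniform bound $B^m$. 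Everything else — the bilinear expansion and the reproducing-property evaluation — is routine once that measure-theoretic point is in place.
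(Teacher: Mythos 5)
Your proposal is correct and matches the paper's own argument (Appendices~\ref{app:partitions} and \ref{app:KT}): expand the inner product bilinearly over the partition sums, exchange the Bochner expectations with the inner product, and evaluate via the reproducing property of $k^{\otimes\bi}$ (equivalently, the inner product of tensor-product feature maps). Your extra care about Bochner integrability and the uniform bound $B^m$ is a justification the paper leaves implicit, but it does not change the route.
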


\paragraph{Point separating kernels.}
In the classic MMD setting the injectivity of the mean embedding $\gamma \mapsto \E_{X\sim \gamma}[k(X,\cdot)]$ on probability measures (known as the characteristic property of the kernel $k$) is equivalent to the MMD being a metric; this property is central in applications. We formulate our theoretical results in the next section using the much weaker property of what we term ``point-separating'' which is satisfied for essentially all popular kernels. 
\begin{definition}[{Point-separating kernel}]
    We call a kernel $k:\cX \times \cX \to \R$ point-separating if the canonical feature map $\Phi:x \mapsto k(x,\cdot)$ is injective.
\end{definition}

\subsection{(Semi-)metrics for probability measures} 
In this section we use cumulants to characterize probability measures and show how to compute the distance between kernelized cumulants with the expected kernel trick.
\label{Thm2:item1=semi-metric-on-measures}
\begin{theorem}[Characterization of distributions with cumulants] \label{thm:cumulant metric}
Let $\meas$ and $\meass$ be two probability measures on $\cX_1\times \dots \times \cX_d$, $(\cH_1,k_1), \dots, (\cH_d,k_d)$ RKHSs on the Polish spaces $\cX_1, \dots, \cX_d$ such that for every $1 \leq j \leq d$ $k_j$ is a bounded, continuous, point-separating kernel. Then 
\begin{align*}
\meas = \meass \text{ if and only if }\kappa_{k_1, \dots, k_d}(\meas) = \kappa_{k_1, \dots, k_d}(\meass).
\end{align*}
Moreover, the expected kernel trick applies and for $\bi \in \N^d$ with $\deg(\bi) = m$, and $k^{\otimes \bi}$ and $\cH^{\otimes \bi}$ as in \eqref{eq:k-otimes-i}
\begin{align}\label{ex: kernel cumulant metric}
    d^\bi(\gamma,\eta)&\coloneqq\lVert \kappa_{k_1, \dots, k_d}^\bi(\meas) - \kappa_{k_1, \dots, k_d}^\bi(\meass)\rVert_{\cH^{\otimes \bi}}^2 \\ \nonumber
    &= \sum_{\pi, \tau \in P(m)}\hspace{-0.2cm} c_\pi c_\tau \Big[ 
       \E_{\meas^\bi_\pi \otimes \meas^\bi_\tau}  k^{\otimes \bi}((X_1, \dots, X_m), (Y_1, \dots, Y_m)) \\ \nonumber
     &\hspace{2.3cm}+ \E_{\meass^\bi_\pi \otimes \meass^\bi_\tau}k^{\otimes \bi}((X_1, \dots, X_m), (Y_1, \dots, Y_m)) \\ \nonumber
    &\hspace{2.3cm} - 2\E_{\meas^\bi_\pi \otimes \meass^\bi_\tau} k^{\otimes \bi}((X_1, \dots, X_m), (Y_1, \dots, Y_m)) \Big].
\end{align}
\end{theorem}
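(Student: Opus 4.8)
The plan is to establish the two assertions separately. The displayed formula for $d^\bi(\gamma,\eta)$ is immediate from Lemma~\ref{lem:combinatorics}: writing $a=\kappa^\bi_{k_1,\ldots,k_d}(\gamma)$ and $b=\kappa^\bi_{k_1,\ldots,k_d}(\eta)$, bilinearity of the inner product on $\cH^{\otimes\bi}$ gives $\lVert a-b\rVert^2=\langle a,a\rangle-2\langle a,b\rangle+\langle b,b\rangle$, and substituting the three inner products provided by Lemma~\ref{lem:combinatorics} for the pairs $(\gamma,\gamma)$, $(\gamma,\eta)$, $(\eta,\eta)$ yields exactly the claimed double sum over $P(m)\times P(m)$; boundedness of the kernels makes every Bochner expectation in sight finite. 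The substance of the theorem is the equivalence $\gamma=\eta\iff\kappa_{k_1,\ldots,k_d}(\gamma)=\kappa_{k_1,\ldots,k_d}(\eta)$; the forward direction is trivial since $\kappa_{k_1,\ldots,k_d}$ is a function of the measure, so the work is in the converse, which I would reduce to the finite-dimensional statement already quoted as Theorem~\ref{thm:classical cumulants}.

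The bridge is an \emph{equivariance} of kernelized cumulants under evaluation functionals. Fix $h=(h_1,\ldots,h_d)$ with $h_j\in\cH_j$ and set $Z\coloneq(h_1(X_1),\ldots,h_d(X_d))$ where $(X_1,\ldots,X_d)\sim\gamma$; since $|h_j(X_j)|=|\langle h_j,k_j(X_j,\cdot)\rangle|\le\lVert h_j\rVert_{\cH_j}\sqrt{k_j(X_j,X_j)}$ and the kernels are bounded, $Z$ has bounded support in $\R^d$. I claim that $\langle\kappa^\bi_{k_1,\ldots,k_d}(\gamma),h_1^{\otimes i_1}\otimes\cdots\otimes h_d^{\otimes i_d}\rangle_{\cH^{\otimes\bi}}=\kappa^\bi(\operatorname{Law}_\gamma Z)$ for every $\bi\in\N^d$, the right-hand side being the classical order-$\bi$ cumulant. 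To see this, write $k^{\otimes\bi}((x_1,\ldots,x_m),\cdot)=\bigotimes_{s=1}^m k_{\sigma(s)}(x_s,\cdot)$ (where $m=\deg(\bi)$ and $\sigma(s)$ denotes which of the $d$ coordinates the $s$-th slot belongs to); pairing against $h_1^{\otimes i_1}\otimes\cdots\otimes h_d^{\otimes i_d}$ gives $\prod_{s=1}^m h_{\sigma(s)}(x_s)$ by the reproducing property, and taking the $(\gamma^\bi)_\pi$-expectation factorizes this product over the blocks of $\pi$ into a product of joint moments of $Z$ -- which is precisely the classical combinatorial cumulant expansion, with the same coefficients $c_\pi=(-1)^{|\pi|-1}(|\pi|-1)!$ as in Def.~\ref{def: kernelized cumulants}. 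Hence $\kappa_{k_1,\ldots,k_d}(\gamma)=\kappa_{k_1,\ldots,k_d}(\eta)$ forces all classical cumulants of $Z$ to coincide under $\gamma$ and $\eta$, and the first part of Theorem~\ref{thm:classical cumulants} (injectivity of the cumulant map for bounded-support laws) gives $\operatorname{Law}_\gamma Z=\operatorname{Law}_\eta Z$.

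To conclude, let $\Phi=(\Phi_1,\ldots,\Phi_d)\colon\cX\to\cH$ be the joint canonical feature map and $\Phi_\#\gamma$ its pushforward. Its characteristic functional is $h\mapsto\E_\gamma[e^{i\langle h,\Phi(X)\rangle}]=\E_\gamma[e^{i\sum_j h_j(X_j)}]$, which depends only on $\operatorname{Law}_\gamma Z$; so the previous step gives $\E_\gamma[e^{i\langle h,\Phi(X)\rangle}]=\E_\eta[e^{i\langle h,\Phi(X)\rangle}]$ for every $h\in\cH$. Because each $k_j$ is continuous on the separable space $\cX_j$, each $\cH_j$ -- hence $\cH$ -- is a separable Hilbert space, on which a Borel probability measure is determined by its characteristic functional; therefore $\Phi_\#\gamma=\Phi_\#\eta$. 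Finally, $\Phi_j$ is continuous (from $\lVert\Phi_j(x)-\Phi_j(x')\rVert^2=k_j(x,x)-2k_j(x,x')+k_j(x',x')$) and injective (point-separating), so by the Lusin--Souslin theorem it is a Borel isomorphism of $\cX_j$ onto its Borel image in $\cH_j$; hence so is $\Phi$, its pushforward map is injective on Borel probability measures, and $\Phi_\#\gamma=\Phi_\#\eta$ yields $\gamma=\eta$.

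The main obstacle is the equivariance identity in the second paragraph: one must fix a clean correspondence between the $m$ tensor slots of $\cH^{\otimes\bi}$ (grouped into blocks of sizes $i_1,\ldots,i_d$), the ground set of $P(m)$, and the monomials appearing in the classical cumulant of $(h_1(X_1),\ldots,h_d(X_d))$, and verify that $\E_{(\gamma^\bi)_\pi}$ factorizes across the blocks of $\pi$ in exactly the way the classical moment products do; since the coefficients $c_\pi$ agree by construction, this is a bookkeeping check, but it is the conceptual core (it is also the point at which one could, alternatively, invoke the moment--cumulant bijection of Appendix~\ref{app:partitions} together with the classical moment problem on bounded sets, instead of Theorem~\ref{thm:classical cumulants} directly). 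A secondary point requiring care is purely measure-theoretic: one should record separability of the RKHSs, so that a Borel measure on $\cH$ is pinned down by its characteristic functional, and invoke Lusin--Souslin to turn injectivity of a continuous feature map into injectivity of the induced pushforward; boundedness of the kernels is used throughout to ensure all cumulants exist as Bochner integrals and that $Z$ has bounded support.
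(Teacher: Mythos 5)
Your proposal is correct, and while the kernel-trick formula is obtained exactly as in the paper (bilinearity plus Lemma~\ref{lem:combinatorics}, which is how Appendix~\ref{app:KT} derives \eqref{ex: kernel cumulant metric}), your proof of the equivalence $\meas=\meass \iff \kappa_{k_1,\dots,k_d}(\meas)=\kappa_{k_1,\dots,k_d}(\meass)$ takes a genuinely different route. The paper (Lemma~\ref{lem:main}, Theorem~\ref{thm:maintext}, Proposition~\ref{eq:main-result-specialized-to-kernels}) argues: equal cumulants give equal moments, hence equal expectations of all polynomials in the feature maps; a Stone--Weierstrass argument for Radon measures on the bounded Polish image $K=\prod_i\varphi_i(\cX_i)$ shows these polynomial expectations separate points of $\cP(K)$; and injectivity of the pushforward is obtained from inner regularity (Borel measures on Polish spaces are Radon) together with the fact that continuous injections into Hausdorff spaces are homeomorphisms on compacts. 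You instead pair $\kappa^{\bi}$ with rank-one tensors $h_1^{\otimes i_1}\otimes\cdots\otimes h_d^{\otimes i_d}$ to produce the classical cumulants of the bounded $\R^d$-valued vector $\big(h_1(X_1),\dots,h_d(X_d)\big)$, invoke Theorem~\ref{thm:classical cumulants} to identify its law for every $h$, recover the characteristic functional of $\Phi_\#\meas$ on the separable Hilbert space $\cH$, and finish with Lusin--Souslin to make the pushforward injective; your equivariance identity is verified correctly (the block factorization of $\E_{\meas^{\bi}_\pi}$ matches the classical moment products with the same $c_\pi$). What each approach buys: yours directly reuses the quoted finite-dimensional result and mirrors the paper's own treatment of independence in Theorem~\ref{thm:maintext} (where scalar projections $f_j\circ\varphi_j$ reduce to classical results of Speed), so the two theorems get a unified projection-based proof, at the cost of importing determinacy of Borel measures by characteristic functionals and the Lusin--Souslin theorem; the paper's Stone--Weierstrass route avoids infinite-dimensional Fourier arguments and its injectivity step via compact regularity is self-contained, while also being phrased for general separable Banach-space-valued feature maps (where your pairing argument would go through with dual functionals in place of inner products).
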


We recall Example~\ref{ex:kernCumulants} and now give examples of distances between such expressions
\begin{example}[{$m=1$}]\label{ex:MMD}
Applied with $m=1$ and $d=1$, \eqref{ex: kernel cumulant metric} becomes $\MMD_k^2(\meas,\meass)$
	\begin{align*}
	\lVert \kappa^{(1)}_k(\meas) - \kappa^{(1)}_k(\meass) \rVert^2_{{\cH_k}} 
	= \E k(X,X') 
	+ \E k(Y,Y') 
	- 2 \E k(X,Y),
	\end{align*}
	where $X,X'$ denotes independent copies of $\meas$ and $Y,Y'$ denotes independent copies of $\meass$. 
	\end{example}
	\begin{example}[{$m=2$}] \label{MMD:m=2}
	For $m=2$ and $d=1$, \eqref{ex: kernel cumulant metric} reduces to
	\begin{align*}
	\lVert \kappa^{(2)}_k(\meas) - \kappa^{(2)}_k(\meass) \rVert^2_{{\cH}^{(1,1)}}
	=  \E k(X,X')k(X'',X''') 
	+   \E k(Y,Y')k(Y'',Y''') + \E k(X,X')^2 \\
        + \E k(Y,Y')^2 + 2\E k(X,Y)k(X',Y) +  2\E k(X,Y)k(X,Y') - 2\E k(X,Y)k(X',Y') \\
        -  2\E k(X,Y)^2  -  2\E k(X,X')k(X,X'') -  2\E k(Y,Y')k(Y,Y''),
	\end{align*}
	where $X,X', X'', X'''$ denotes independent copies of $\meas$ and $Y,Y',Y'',Y'''$ denotes independent copies of $\meass$. This expression compares the variances in the RKHS instead of the means. This is an example of the \emph{kernel variance embedding} defined in the next subsection.
\end{example}

The price for the weak assumption of a point-separating kernel is that without any stronger assumptions one does not get a metric in general, and the all-purpose way to achieve a metric is to take an infinite sum over all $d^\bi$'s. If we only use the degree $m=1$ term $d^\bi$ reduces to the well-known MMD formula which requires characteristicness to become a metric (see Example \ref{ex:MMD}). 
There are two reasons why working under weaker assumptions is useful: firstly, if the underlying kernel is not characteristic this sum gives a structured way to incorporate finer information that discriminates the two distributions; an extreme case is the linear kernel $k(x,y) = \langle x, y \rangle$ which is point-separating, and in this case the sum reduces to the differences of classical cumulants.
Secondly, under the stronger assumption of characteristicness one already has a metric after truncation at degree $m=1$ (the classical MMD). However, in the finite-sample case adding higher degree terms can lead to increased power. Indeed, our experiments (Section~\ref{sec:experiments}) show that even just going one degree further (i.e.\ taking $m=2$), can lead to more powerful tests.

\subsection{A characterization of independence}\label{Thm2:item2=independence}
Here we characterize independence in terms of kernelized cumulants.
\begin{theorem}[{Characterization of independence with cumulants}] \label{thm: cumulant independence}
Let $\meas$ be a probability measure on $\cX_1\times \dots \times \cX_d$, and $(\cH_1,k_1), \dots, (\cH_d,k_d)$ RKHSs on Polish spaces $\cX_1, \dots, \cX_d$ such that for every $1 \leq j \leq d$ $k_j$ is a bounded, continuous, point-separating kernel. Then 
\[\meas = \meas \vert_{\cX_1} \otimes \dots \otimes \meas \vert_{\cX_d} \text{ if and only if }\kakig = 0 \]
for every $\bi \in \N^d_+$. Moreover, the expected kernel trick applies in the sense that for $\bi \in \N^d_+$ 
\begin{align}\label{eq:kernrelized cumulant norm}
    \lVert\kakig\rVert_{\cH^{\otimes \bi}}^2 
    = \sum_{\pi, \tau \in P(m)} \hspace{-0.2cm} c_\pi c_\tau \E_{\meas^\bi_\pi \otimes \meas^\bi_\tau} k^{\otimes \bi}((X_1, \dots, X_m), (Y_1, \dots, Y_m)), 
\end{align}
where $m \coloneqq \deg(\bi)$, and {$k^{\otimes \bi}$ and $\cH^{\otimes\bi}$ are defined as in \eqref{eq:k-otimes-i}}. 
\end{theorem}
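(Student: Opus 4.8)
The plan is to prove the ``if and only if'' statement by relating the kernelized cumulants of $\meas$ to the cumulants of an associated $\R$-valued (or finite-dimensional) object via the point-separating property, then invoke the classical Theorem~\ref{thm:classical cumulants}, and finally to derive the norm formula \eqref{eq:kernrelized cumulant norm} as an immediate corollary of Lemma~\ref{lem:combinatorics} by setting $\meass = \meas$.

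\textbf{Main equivalence.} For the direction ``independence $\Rightarrow$ cumulants vanish'': if $\meas = \meas\vert_{\cX_1}\otimes\dots\otimes\meas\vert_{\cX_d}$, then after lifting through the feature maps $\Phi_j = k_j(\cdot,\cdot)$, the Hilbert-space-valued random variables $\Phi_1(X_1),\dots,\Phi_d(X_d)$ are jointly independent, so $\muig = \muig[\meas\vert_{\cX_1}]\otimes\dots$ factorizes; feeding this factorization into the combinatorial definition (Def.~\ref{def: kernelized cumulants}) and using the Möbius/partition-lattice identity $\sum_{\pi\in P(m)} c_\pi (\text{product over blocks}) = 0$ whenever the multi-index $\bi$ has all coordinates positive (equivalently, whenever the $m$ factors genuinely involve all $d$ groups), one gets $\kakig = 0$. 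This is exactly the RKHS analogue of the classical argument and should follow by the same algebra used to establish the generating-function identity in Appendix~\ref{app:partitions}. For the converse ``cumulants vanish $\Rightarrow$ independence'': I would argue that $\kakig = 0$ for all $\bi\in\N^d_+$ forces the corresponding scalar cumulants of the random variables $\langle \Phi_j(X_j), f_j\rangle_{\cH_j}$ to vanish for arbitrary $f_j\in\cH_j$ (by pairing the tensor identity against $f_1^{\otimes i_1}\otimes\dots\otimes f_d^{\otimes i_d}$ and using multilinearity), hence by the classical Theorem~\ref{thm:classical cumulants}(2) the family $\big(\langle\Phi_1(X_1),f_1\rangle,\dots,\langle\Phi_d(X_d),f_d\rangle\big)$ is jointly independent for every choice of the $f_j$'s; since the $\cH_j$ separate points on $\cX_j$ (point-separating kernels) and are continuous, this family of projections generates the Borel $\sigma$-algebra of $\cX_j$, so joint independence of the projections upgrades to joint independence of $X_1,\dots,X_d$, i.e.\ $\meas$ is the product of its marginals. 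Boundedness of the kernels guarantees all moments and cumulants exist so that these manipulations are licit.

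\textbf{The norm formula.} Equation \eqref{eq:kernrelized cumulant norm} is the specialization of Lemma~\ref{lem:combinatorics} to $\meass = \meas$: the inner product $\langle \kakig, \kakig\rangle_{\cH^{\otimes\bi}}$ equals $\sum_{\pi,\tau\in P(m)} c_\pi c_\tau \E_{\meas^\bi_\pi\otimes\meas^\bi_\tau} k^{\otimes\bi}((X_1,\dots,X_m),(Y_1,\dots,Y_m))$, and boundedness of the $k_j$ (hence of $k^{\otimes\bi}$) ensures the finitely many expectations are all finite, so no convergence subtlety arises. Nothing beyond Lemma~\ref{lem:combinatorics} is needed here.

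\textbf{Expected main obstacle.} The routine part is the norm formula and the ``independence $\Rightarrow$ vanishing'' direction, which is pure partition-lattice combinatorics transported verbatim from $\R^d$. The delicate step is the converse: I must carefully justify that componentwise vanishing of the tensor-valued cumulant is equivalent to vanishing of all scalar cumulants of the projected variables (this is where the tensor-algebra bookkeeping and the fact that $\{f_1^{\otimes i_1}\otimes\dots\}$ spans a dense subset of $\cH^{\otimes\bi}$ enter), and then that joint independence of all projections $\langle\Phi_j(X_j),f_j\rangle$ for varying $f_j$ genuinely implies joint independence of the original random variables — which is the point where continuity and the point-separating property are used to pass from the cylinder $\sigma$-algebra generated by these projections to the full Borel $\sigma$-algebra on the Polish spaces $\cX_j$. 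One should also take care that the classical result (Theorem~\ref{thm:classical cumulants}) is stated for measures on bounded subsets of $\R^d$, so the reduction must produce random variables that are bounded — which they are, since the kernels are bounded, so $\|\Phi_j(X_j)\|_{\cH_j}^2 = k_j(X_j,X_j) \le B$ and the projections $\langle\Phi_j(X_j),f_j\rangle$ are bounded by $\sqrt{B}\,\|f_j\|_{\cH_j}$.
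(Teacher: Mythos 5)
Your proposal is correct in substance, and its first move coincides with the paper's: both you and the paper pair $\kappa^{\bi}_{k_1,\dots,k_d}(\meas)$ with elementary tensors of functionals $f_1^{\otimes i_1}\otimes\dots\otimes f_d^{\otimes i_d}$ to recover the classical cumulants of the scalar vector $\big(\langle\Phi_1(X_1),f_1\rangle,\dots,\langle\Phi_d(X_d),f_d\rangle\big)$, and the norm identity \eqref{eq:kernrelized cumulant norm} is in both cases just Lemma~\ref{lem:combinatorics} with $\meass=\meas$ (re-derived in Appendix~\ref{app:KT}). Where you genuinely diverge is the way back from scalar statements to a statement about $\meas$: the paper (Theorem~\ref{thm:maintext}, item~2, via Proposition~\ref{eq:main-result-specialized-to-kernels}) converts vanishing scalar cumulants into the splitting of all polynomial expectations of the feature maps and then invokes the measure-determinacy Lemma~\ref{lem:main}, whose proof concentrates the topology and measure theory (Borel measures on Polish spaces are Radon, the pushforward is a homeomorphism on compacts, Stone--Weierstrass for Radon measures); you instead apply Theorem~\ref{thm:classical cumulants}(2) for each fixed $(f_1,\dots,f_d)$ and then upgrade joint independence of the projections to joint independence of $X_1,\dots,X_d$. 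That upgrade is where your measure-theoretic burden sits and is stated too quickly: (a) independence of the one-dimensional projections for every single choice of functionals implies independence of the $\cH_j$-valued lifts by linearity of $f\mapsto\langle\Phi_j(X_j),f\rangle$ together with factorization of the joint characteristic functional (a Cram\'er--Wold-type argument), not merely because "the projections generate the Borel $\sigma$-algebra" --- independence of generated $\sigma$-algebras needs a $\pi$-system argument, and single-functional preimages are not closed under intersection; and (b) transferring independence of the lifts (equivalently of $f(X_j)$, $f\in\cH_j$, by the reproducing property) to independence of the $X_j$ requires that a countable continuous point-separating family on a Polish space generates the Borel $\sigma$-algebra (separability of $\cH_j$ plus a Lusin--Souslin/Borel-injection argument), which is the analogue of the paper's injective-pushforward step. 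So the trade-off is: the paper packages all of this into one reusable lemma serving both Theorem~\ref{thm:cumulant metric} and Theorem~\ref{thm: cumulant independence}, while your route avoids Stone--Weierstrass at the price of supplying these two standard but nontrivial facts; your forward direction (factorized moments plus the partition-lattice identity) is a fine direct substitute for the paper's appeal to the classical moments-split equivalence, and your observation that boundedness of the kernels keeps all quantities finite matches the paper's use of that hypothesis.
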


Applied to $\bi=(1,1)$, the expression \eqref{eq:kernrelized cumulant norm} reduces to the classical HSIC for two components, see Example \ref{ex:hsic} below.
But for general $\bi$ this {construction} leads to genuine new statistics in RKHSs.
\begin{example}[{Specific case: HSIC, kernel Lancaster interaction, kernel Streitberg interaction}]\label{ex:hsic}
    If $d=2£$ there is only one order $2$ index in $\N^d_+$, namely $\bi=(1,1)$; in this case \eqref{eq:kernrelized cumulant norm} reduces to the classical HSIC equation
    \begin{align*}
        \lVert \kappa^{(1,1)}_{k_1, k_2}(\meas) \lVert^2_{\cH^{(1,1)}} = \E k_1(X,Y) k_2(X, Y) 
        + \E k_1(X,Y) k_2(X', Y') - 2 \E k_1(X,Y) k_2(X', Y),
    \end{align*}
    where $(X,Y)$ and $(X',Y')$ are independent copies of the same random variable following  $\meas$. More generally, with  $\bi=\b{1}_d$ one gets the kernel Streitberg interaction \citep{streitberg90lancaster,sejdinovic13kernel,liu13interaction}, and specifically the kernel Lancaster interaction  \citep{sejdinovic13kernel} for $d \in \{2,3\}$; the latter reduces to HSIC for two random variables ($d=2$).
\end{example}

\subsection{Finite-sample statistics}\label{sec:finite sample}
To apply Theorem~\ref{thm:cumulant metric} and Theorem~\ref{thm: cumulant independence} in practice, one needs to estimate expressions such as $\E k^{\otimes \bi}((X_1, \dots, X_m), (Y_1, \dots, Y_m))$.
One could use classical estimators such as \emph{U-statistic} {\citep{vanderwaart00asymptotic}} which lead to unbiased estimators.
However, we follow \citet{gretton08kernel} and use a \emph{V-statistic} which is biased but conceptually simpler, easier, and efficient to compute. We note that the estimators presented here all have quadratic complexity like MMD and HSIC, see Appendix~\ref{app:KT}.

\paragraph{A two-sample test for non-characteristic feature maps.}
If $ k $ is characteristic then MMD$_k(\meas,\meass)=0$ exactly when $ \meas = \meass $, but we can still increase testing power by considering the distance between the kernel variance and skewness embeddings, which leads us to use our semi-metrics $d^{(2)}(\meas,\meass)$ and  $d^{(3)}(\meas,\meass)$ as defined in \eqref{ex: kernel cumulant metric}. An efficient estimator for $d^{(3)}$ is given in detail in Appendix~\ref{app:KT}; we provide the full expression for $d^{(2)}$ here.

\begin{lemma}[$d^{(2)}$ estimation, see \eqref{ex: kernel cumulant metric}] \label{lemma:KVE-estimation}
The V-statistic for $ d^{(2)}(\meas, \meass) = \lVert \kappa_{k}^{(2)}(\meas) - \kappa_{k}^{(2)}(\meass) \rVert^2_{\cH^{(1,1)}}$ is
\begin{align*}
\frac{1}{N^2} \Tr\big[(\b K_x \b J_{N} )^2 \big] 
    + \frac{1}{M^2}\Tr\big[(\b K_y \b J_{M} )^2 \big] 
    - \frac{2}{NM} \Tr\big[\b K_{xy}\b J_{M} \b K_{xy}^\top \b J_{N} \big],
\end{align*}
where $\Tr$ denotes trace, $(x_n)_{n=1}^N \stackrel{\mathrm{i.i.d.}}{\sim}\meas$, $(y_m)_{m=1}^M \stackrel{\mathrm{i.i.d.}}{\sim}\meass$, 
$\b K_x  = [k(x_i, x_j)]_{i,j=1}^N \in \R^{N\times N}$, $\b K_y = [k(y_i, y_j)]_{i,j=1}^M \in \R^{M\times M}$, $\b K_{x,y} = [k(x_i, y_j)]_{i,j=1}^{N,M} \in \R^{N\times M}$, $\b J_n = \b I_n - \frac{1}{n} \b 1_n \b 1_n^\top \in \R^{n\times n}$, with $ \b 1_n=(1,\ldots,1)\in \R^n$.
\end{lemma}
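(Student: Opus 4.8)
The plan is to recognise the degree-two cumulant as a centred covariance operator in the RKHS, to observe that its V-statistic is the associated plug-in estimator, and then to read off the stated trace formula by elementary linear algebra. Write $K_x \coloneqq k(x,\cdot) \in \cH$. By Example~\ref{ex:kernCumulants},
\[
  \kappa_k^{(2)}(\meas) = \E[K_X^{\otimes 2}] - (\E K_X)^{\otimes 2} = \E\bigl[(K_X - \E K_X)^{\otimes 2}\bigr] =: C_\meas ,
\]
the covariance operator of the feature $K_X$ with $X \sim \meas$; likewise for $C_\meass$. Hence $d^{(2)}(\meas,\meass) = \lVert C_\meas - C_\meass\rVert_{\cH^{(1,1)}}^2 = \lVert C_\meas\rVert^2 + \lVert C_\meass\rVert^2 - 2\langle C_\meas, C_\meass\rangle$, and expanding each of the three summands in terms of kernel evaluations recovers exactly the ten expectations listed in Example~\ref{MMD:m=2} (the $\meas$-only group from $\lVert C_\meas\rVert^2$, the $\meass$-only group from $\lVert C_\meass\rVert^2$, and the mixed group from $-2\langle C_\meas,C_\meass\rangle$).

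Next I would unwind the definition of the V-statistic. Replacing every expectation over a product of $\meas$'s and $\meass$'s in Example~\ref{MMD:m=2} by the corresponding average over index tuples sampled with replacement, one checks term-by-term that the result equals $\lVert \hat C_\meas - \hat C_\meass\rVert^2$ (Hilbert--Schmidt norm) with the \emph{plug-in} empirical covariance operators $\hat C_\meas = \tfrac1N\sum_{n=1}^N (K_{x_n} - \bar K_x)^{\otimes 2}$, $\bar K_x = \tfrac1N\sum_n K_{x_n}$, and analogously $\hat C_\meass$. Introducing the (operator-valued) feature matrices $\Psi_x = (K_{x_1},\dots,K_{x_N})$ and $\Psi_y = (K_{y_1},\dots,K_{y_M})$, column-centring is right-multiplication by $\b J_N$, respectively $\b J_M$, so $\hat C_\meas = \tfrac1N \Psi_x \b J_N \Psi_x^\top$ and $\hat C_\meass = \tfrac1M \Psi_y \b J_M \Psi_y^\top$, while $\Psi_x^\top \Psi_x = \b K_x$, $\Psi_y^\top \Psi_y = \b K_y$, $\Psi_x^\top \Psi_y = \b K_{xy}$.

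The remaining step is trace bookkeeping. From $\lVert A \rVert^2 = \Tr(A^\top A)$, cyclicity of the trace, symmetry and idempotency of $\b J_N$, and the identity $\Tr((PQ)^2) = \Tr((QP)^2)$, we get $\lVert \hat C_\meas\rVert^2 = \tfrac1{N^2}\Tr[(\Psi_x \b J_N \Psi_x^\top)^2] = \tfrac1{N^2}\Tr[(\b J_N \b K_x)^2] = \tfrac1{N^2}\Tr[(\b K_x \b J_N)^2]$, and similarly $\lVert \hat C_\meass\rVert^2 = \tfrac1{M^2}\Tr[(\b K_y \b J_M)^2]$; for the cross term, $\langle \hat C_\meas, \hat C_\meass\rangle = \tfrac1{NM}\Tr[\Psi_x \b J_N \Psi_x^\top \Psi_y \b J_M \Psi_y^\top] = \tfrac1{NM}\Tr[\b K_{xy}\b J_M \b K_{xy}^\top \b J_N]$. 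Substituting into $\lVert \hat C_\meas\rVert^2 + \lVert \hat C_\meass\rVert^2 - 2\langle \hat C_\meas, \hat C_\meass\rangle$ yields the claimed expression. I do not expect a genuine obstacle here; the only points needing care are (i) checking that the naive term-by-term V-statistic of Example~\ref{MMD:m=2} really is the plug-in functional $\lVert \hat C_\meas - \hat C_\meass\rVert^2$, so that the centring matrices land in the right places, and (ii) tracking which $\b J$-factors are absorbed via $\b J^2 = \b J$ when permuting the traces.
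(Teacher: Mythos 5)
Your proposal is correct, but it takes a noticeably different route from the paper's own derivation (Example~\ref{ex:d2stat} in Appendix~\ref{app:KT}). The paper starts from the ten-term population expression of Example~\ref{MMD:m=2}, writes out the V-statistic as ten explicit multi-index sums \eqref{eq:k2-hat}, converts each sum separately into a trace involving $\b K_x, \b K_y, \b K_{xy}$ and the centering matrices $\b H_N, \b H_M$, and only then recombines the ten traces into the three-term formula with $\b J_n=\b I_n-\b H_n$. You instead exploit the structural identity $\kappa^{(2)}_k(\meas)=\E[(K_X-\E K_X)^{\otimes 2}]$, note that a V-statistic is nothing but the population functional evaluated at the empirical measures (so it equals $\lVert \hat C_\meas-\hat C_\meass\rVert^2$ with $\hat C_\meas=\tfrac1N\Psi_x\b J_N\Psi_x^\top$, which is exactly $\kappa^{(2)}_k(\hat\meas)$), and then obtain all three traces in one stroke from cyclicity and $\b J^2=\b J$. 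This buys a shorter and more conceptual argument that makes the appearance of the centering matrices transparent and sidesteps the bookkeeping of \eqref{eq:k2-hat}; what the paper's term-by-term computation buys is a template that extends verbatim to the higher-degree and non-symmetric cases ($d^{(3)}$, CSIC), where no such clean factorization of the cumulant as a single centered operator is available. Two points you flagged are indeed the only ones needing care and both go through: the plug-in identification is immediate because every summand in Example~\ref{MMD:m=2} is an expectation over independent copies, so replacing expectations by averages over index tuples with replacement is precisely evaluation at $\hat\meas,\hat\meass$; and the trace manipulations are legitimate since all operators involved are finite rank, with $\Tr\big[(\b J_N\b K_x)^2\big]=\Tr\big[(\b K_x\b J_N)^2\big]$ and $\Tr\big[\b J_N\b K_{xy}\b J_M\b K_{xy}^\top\big]=\Tr\big[\b K_{xy}\b J_M\b K_{xy}^\top\b J_N\big]$ by cyclicity, matching the stated formula.
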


\paragraph{A kernel independence test.} \label{sec:estimating} 
By Theorem~\ref{thm: cumulant independence}, if $\meas = \meas \vert_{\cX_1} \otimes \meas \vert_{\cX_2}$, then $\kappa^{(2,1)}(\meas) = 0$ and $\kappa^{(1,2)}(\meas) = 0$. We may compute the magnitude of either $ \kappa^{(2,1)}(\meas) $ or $ \kappa^{(1,2)}(\meas) $ -- we will refer to these quantities as \emph{cross skewness independence criterion} (CSIC). Note that these criteria are asymmetric. When $d=2$ we have a probability measure $\meas$ on $\cX_1 \times \cX_2$ and two kernels $k : \cX_1^2 \to \R$, $\ell : \cX_2^2 \to \R $. Assume that we have samples $(x_i,y_i)_{i=1}^N$ and use the shorthand notation $\b K = \b K_x, \b L = \b L_y $ (similarly to Lemma~\ref{lemma:KVE-estimation}) and $\b H = \b H_N = \frac{1}{N} \b 1_N \b 1_N^\top \in \R^{N\times N}$. Denote by $\circ$ the Hadamard product and $\langle \cdot \rangle $ the sum over all elements of a matrix. Then one can derive the following CSIC estimator.
(Note that matrix multiplication takes precedence over the Hadamard product.)
\begin{lemma}[CSIC estimation] \label{lemma:CSIC-estimation}
The V-statistic for $\lVert \kappa^{(1,2)}_{k,\ell}(\meas) \rVert^2_{\cH_k^{\otimes 1} \otimes \cH_\ell^{\otimes 2}}$ is
\begin{align*}
\frac{1}{N^2} \Bigg\langle &
 \b K \circ \b K \circ \b L 
- 4 \b K \circ \b K \b H \circ \b L 
- 2 \b K \circ \b K \circ \b L \b H 
+ 4\b K\b H \circ \b K \circ \b L\b H  \\
&+ 2 \b K \circ \b L \left\langle\frac{\b K}{N^2}\right\rangle 
+ 2 \b K \b H \circ \b H\b K \circ \b L 
+ 4 \b K \circ \b H\b K \circ \b L\b H  
+ \b K \circ \b K \left\langle\frac{\b L}{N^2}\right\rangle \\
&- 8 \b K \circ \b L\b H  \left\langle\frac{\b K}{N^2}\right\rangle 
- 4 \b K \circ \b H \b K \left\langle\frac{\b L}{N^2}\right\rangle 
+ 4\left\langle\frac{\b K}{N^2}\right\rangle^2 \b L \Bigg\rangle.
\end{align*}
\end{lemma}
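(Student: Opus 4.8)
The plan is to specialize the population identity \eqref{eq:kernrelized cumulant norm} of Theorem~\ref{thm: cumulant independence} to the relevant degree-three multi-index $\bi$ (here $d=2$ and $m=\deg(\bi)=3$) and then convert it term by term into a V-statistic.

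First I would write \eqref{eq:kernrelized cumulant norm} out explicitly for this $\bi$: the sum ranges over ordered pairs $(\pi,\tau)$ of partitions of $\{1,2,3\}$, of which there are $\lvert P(3)\rvert^2=25$. Recall that $P(3)$ consists of the all-singletons partition ($\lvert\pi\rvert=3$, $c_\pi=2$), the three one-doubleton-one-singleton partitions ($\lvert\pi\rvert=2$, $c_\pi=-1$), and the single-block partition ($\lvert\pi\rvert=1$, $c_\pi=1$). For each pair I would unwind $\meas^\bi_\pi\otimes\meas^\bi_\tau$: the diagonal measure $\meas^\bi$ already forces two of the three coordinates to coincide (the marginal repeated because one entry of $\bi$ equals $2$), and a partition $\pi$ further glues coordinates sharing a block so that they are driven by one draw of $\meas$ restricted to those coordinates, while distinct blocks of $\pi$, and the $\pi$-side versus the $\tau$-side, are independent. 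Hence each of the $25$ expectations of $k^{\otimes\bi}$ factors as a product of expected-kernel factors, one per block of $\pi$ and one per block of $\tau$, evaluated over $\lvert\pi\rvert+\lvert\tau\rvert$ mutually independent copies of $\meas$; the factors that arise have a small number of shapes — a single Gram evaluation, a product of two Gram evaluations when coordinates carrying different kernels fall in one block, a squared Gram evaluation when the two repeated coordinates fall in one block, and their combinations.

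Second, I would pass to the (biased) V-statistic by the standard recipe: each block corresponds to one independent draw, hence one free summation index and one factor of $N^{-1}$, so the $(\pi,\tau)$ term carries a prefactor $N^{-(\lvert\pi\rvert+\lvert\tau\rvert)}$ together with that many summation indices. Substituting the samples $(x_i,y_i)_{i=1}^N$ and the Gram matrices $\b K,\b L$, each raw term becomes a specific matrix contraction under the usual dictionary: coordinates tied inside a single factor give Hadamard products ($\circ$); a summation index appearing in exactly two factor-slots becomes an ordinary matrix product; a summation index appearing in a single slot contributes a row/column average implemented by $\b H=\tfrac1N\b1_N\b1_N^\top$; and a summation index over a factor disconnected from the rest produces a scalar $\langle\b K/N^2\rangle$ or $\langle\b L/N^2\rangle$.

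Third, I would collect the $25$ matrix terms, using the symmetry of $\b K,\b L,\b H$, the idempotence $\b H^2=\b H$, the transpose-invariance of the all-entries sum $\langle\cdot\rangle$, and the interchangeability of coordinates carrying the same kernel, to merge equal contributions into the stated eleven-term formula with its integer coefficients. The main obstacle is precisely this bookkeeping in the last two steps: keeping track of which indices are tied and which are free, the exact power of $N$, and whether a free index sits inside a matrix product, inside an $\b H$, or inside an isolated all-entries sum, is where coefficient and sign slips are easiest to make, and the diagonal constraint coming from the coordinate repeated twice in $\bi$ must be threaded consistently through every term. As a safeguard I would cross-check the result by an independent derivation — expanding $\kakig$ directly into moments via the partition formula (the RKHS version of the classical multilinear third-cumulant expansion, extending Example~\ref{ex:kernCumulants} by one degree), taking its squared norm with the nonlinear kernel trick of Lemma~\ref{lem:nonlinearkerneltrick}, and again replacing expectations by empirical averages — together with a numerical check on small synthetic samples. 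The companion estimator in Lemma~\ref{lemma:KVE-estimation}, and the degree-three two-sample estimator $d^{(3)}$ of the appendix, follow by the same procedure.
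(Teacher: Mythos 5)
Your plan is essentially the paper's own derivation (Appendix~\ref{app:KT}, Example~\ref{ex:CSICstat}): apply the expected kernel trick of Lemma~\ref{lem:nonlinearkerneltrick} to get the double partition sum \eqref{eq:kernrelized cumulant norm} for $m=3$, write each $(\pi,\tau)$-term as an expectation over independent copies, replace expectations by empirical averages to get the V-statistic with prefactors $N^{-(|\pi|+|\tau|)}$, and then merge the $25$ contractions into the eleven Hadamard/trace terms via $\b H$, symmetry and idempotence. The only slight imprecision is your claim that each expectation ``factors one per block of $\pi$ and $\tau$'' --- it factorizes only across connected components of the bipartite graph linking $\pi$-blocks to $\tau$-blocks through the kernel factors (e.g.\ $\E[k(X,X')\,\ell(Y,Y')^2]$ does not split) --- but your subsequent index-to-matrix dictionary handles connectivity correctly, so this does not affect the argument.
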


\tb{Remark (computational complexity w.r.t.\ degree $m$).} We saw that the computational complexity of the cumulant based measures is quadratic w.r.t.\ the sample size.  
Let $B_m=|P(m)|$ be the $m$-th Bell number, in other words the number of elements in $P(m)$.  The Bell numbers follow a recursion:
        $B_{m+1} = |P(m+1)|  = \sum_{k=0}^m \binom{m}{k} B_{k}$, with  the first elements of the sequence being $B_0 = B_1 = 1$, $B_2 = 2$, $B_3 = 5$, $B_4 = 15$, $B_5=52$. By \eqref{ex: kernel cumulant metric}-\eqref{eq:kernrelized cumulant norm}, in the worst case the number of operations  to compute $d^\b{i}(\gamma,\eta)$  or  $\Vert\kappa^{\b i}_{k_1,\ldots,k_d}(\gamma)\Vert_{\mathcal{H}^{\otimes i}}^2$ ($m=\text{deg}(\b i)$) is proportional to $B_m^2$ (it equals to $3B_m^2$ and to $B_m^2$, respectively). Though asymptotically $B_m$ grows quickly \citep{bruijn81asymptotic,lovasz93combinatorial}, for reasonably small degrees the computation is still manageable. In addition, merging various terms in the estimator can often be carried out, which leads to computational saving. For instance, the estimator of $d^{(2)}$ (see  Lemma~\ref{lemma:KVE-estimation}, Example~\ref{ex:d2stat}),  CSIC (Lemma~\ref{lemma:CSIC-estimation}, Example~\ref{ex:CSICstat}) and  $d^{(3)}$ (Example~\ref{ex:k3}) consists of only $3$, $11$ and  $10 + 2\times 7 = 24$ terms compared to the predicted worst-case setting of $3B_2^2 =12$, $B_3^2 = 25$, and $3B_3^2=75$ terms, respectively. On a practical side, we found that using $m \in \{2,3\}$ is a good compromise between gain in sample efficiency and ease of implementation.

\section{Experiments} \label{sec:experiments}
In this section, we demonstrate the efficiency of the proposed kernel cumulants in two-sample and independence testing.\footnote{All the code replicating our experiments is available at \url{https://github.com/PatricBonnier/Kernelized-Cumulants}.} 
\begin{itemize}[labelindent=0em,leftmargin=1em,topsep=0cm,partopsep=0cm,parsep=0cm,itemsep=2mm]
    \item Two-sample test: Given $N-N$ samples from two probability measures $\meas$ and $\meass$ on a space $\cX$, the goal was to test the null hypothesis $H_0 : \meas = \meass$ against the alternative $H_1 : \meas \ne \meass$. The compared test statistics ($S$) were MMD, $d^{(2)}$, and $d^{(3)}$.
    \item Independence test: Given $N$ paired samples from a probability measure $\meas$ on a product space $\cX_1 \times \cX_2$, the aim was to the test the null hypothesis $H_0 : \meas = \meas_1 \otimes \meas_2$ against the alternative $H_1 : \meas \ne  \meas_1 \otimes \meas_2$. The compared test statistics ($S$) were HSIC and CSIC.
\end{itemize}
In our experiments $H_1$ held, and the estimated power of the tests is reported. Permutation test was applied to approximate the null distribution and its $0.95$-quantile (which corresponds to the level choice $\alpha = 0.05$):  We first computed our test statistic $S$ using the given samples ($S_0 = S$), and then permuted the samples $100$ times. If $S_0$ was in a high percentile ($\geq 95\%$ in our case)  of the resulting distribution of $S$ under the permutations, we rejected the null. We repeated these experiments $100$ times to estimate the power of the test. This procedure was in turn repeated 5 times and the 5 samples are plotted as a box plot along with a line plot showing the mean against the number of samples $(N)$ used. All experiments were performed using the rbf-kernel $\mathrm{rbf}_\sigma(\b x,\b y) = e^{-\frac{\lVert \b x - \b y \rVert_2^2}{2 \sigma^2}}$, where the parameter $\sigma$ is called the \emph{bandwidth}. We performed all experiments for every bandwidth of the form $\sigma = a10^b $ where $a = 1, 2.5, 5, 7.5$ and $b = -5,-4,-3,-2,-1, 0$ and the optimal value across the bandwidths was chosen for each method and sample size. The experiments were carried out on a laptop with an i7 CPU and 16GBs of RAM.

\subsection{Synthetic data} \label{sec:synthetic}
For synthetic data we designed two experiments.
\begin{itemize}[labelindent=0em,leftmargin=1em,topsep=0cm,partopsep=0cm,parsep=0cm,itemsep=2mm]
    \item 2-sample test: We compared a uniform distribution with a mixture of two uniforms.
    \item Independence test: We considered the joint measure of a uniform and a correlated $\chi^2$ random variable. We also use this same benchmark to compare the efficiency of classical and kernelized cumulants in Appendix~\ref{app:experiments}.
\end{itemize}
\paragraph{Comparing a uniform with a mixture of uniforms.} Even for simpler distributions like mixtures of uniform distributions it can be hard to pick up higher-order features, and $d^{(2)}$ can outperform MMD even when provided with a moderate number of samples. Here we compared one uniform distribution $U[-1,1]$ with an equal mixture of $U[0.35,0.778] $ and $ U[-0.35,-0.778]$. The endpoints in the mixture were chosen to match the first three moments of $U[-1,1]$. The number of samples used ranged from $5$ to $50$, and the results are summarized in Fig.~\ref{fig:uniforms}. One can see that with $d^{(2)}$ the power approaches $100\%$ much faster than with using MMD.

\begin{wrapfigure}[18]{r}{0.5\textwidth}
    \centering
    \includegraphics[width=0.5\textwidth]{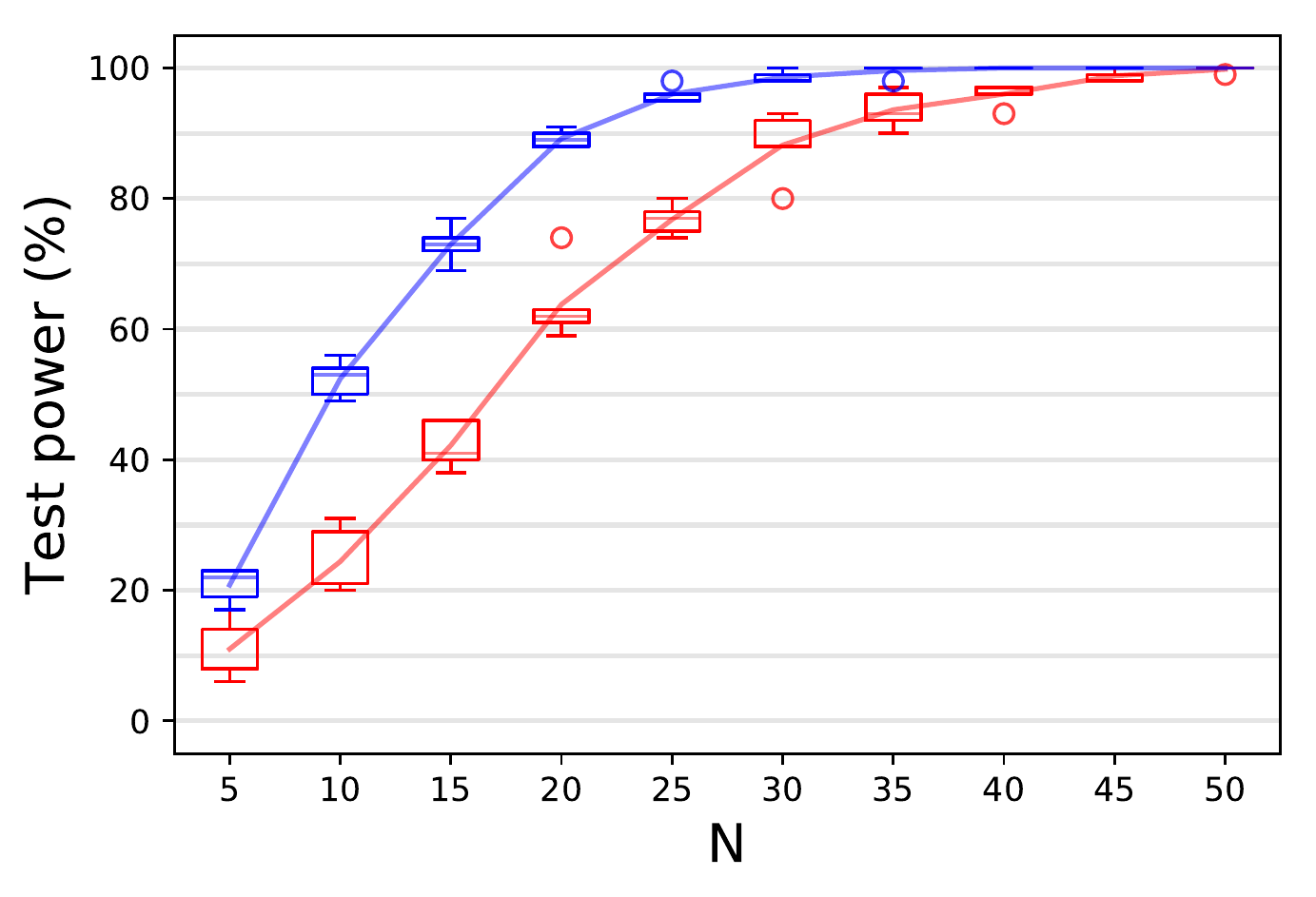}
    \caption{Test power as a function of the sample size ($N$) of two-sample test using the MMD (red) and $d^{(2)}$ statistics (blue), with $U[-1,1]$ and an equal mixture of $U[0.35,0.778] $ and $ U[-0.35,-0.778]$ compared.}
    \label{fig:uniforms}
\end{wrapfigure}

\paragraph{Independence between a uniform and a $\chi^2$.}
Let $ X \sim U[0,1] $ and $ Z \sim N(0,1) $ be independent of $X$. Denote by $\Phi$ the c.d.f.\ of a standard normal distribution and define $Y_p $ to be a mixture with weight $p$ at $ \Phi^{-1}(X) $ and weight $ 1-p $ at $ Z $ so that $Y_0 = Z$ and $Y_1 = \Phi^{-1}(X)$. We test for independence for $p=0.5$ between $Y_p^2$---which will be $\chi^2$ distributed with $1$ degree of freedom--and $X$. As the statistical dependence of $Y_p^2$ and $X$ is more complicated than a simple correlation we expect that higher-order features of the data will help in the independence testing. The number of samples used ranged from $6$ to $60$, with the results summarized in Fig.~\ref{fig:HSIC}. One can see that CSIC supersedes HSIC for every sample size, and the difference is more pronounced for smaller ones.

\begin{figure}
    \begin{minipage}[t]{0.49\linewidth}
        \centering
        \includegraphics[width=\figwidth\textwidth]{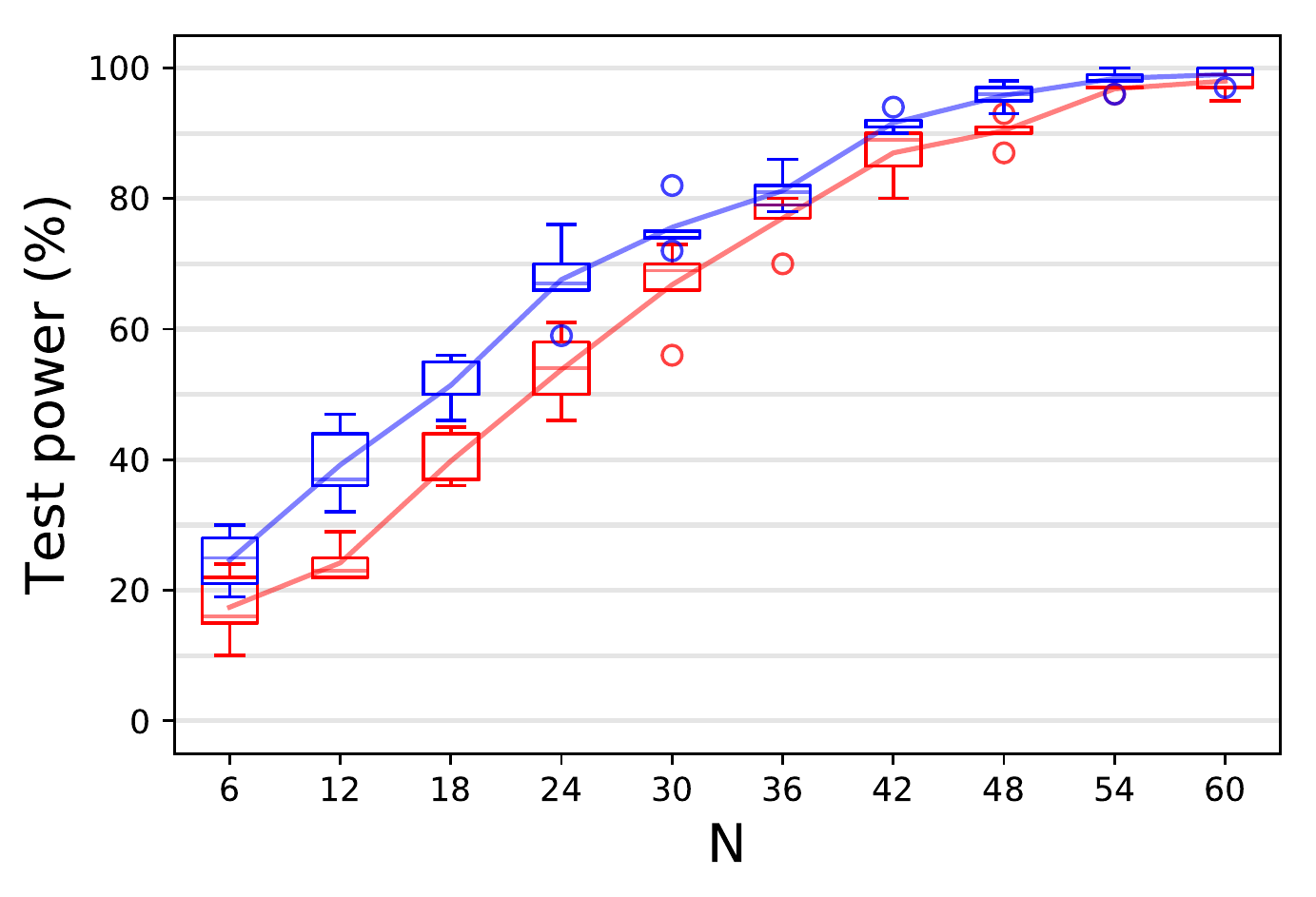}
        \caption{Test power as a function of the sample size ($N$) of independence testing using HSIC (red) and CSIC statistics (blue),  with the independence testing between $Y_{0.5}^2$ and $X$.}
        \label{fig:HSIC}
    \end{minipage}
    \hfill
     \begin{minipage}[t]{0.49\linewidth}
        \centering
        \includegraphics[width=\figwidth\textwidth]{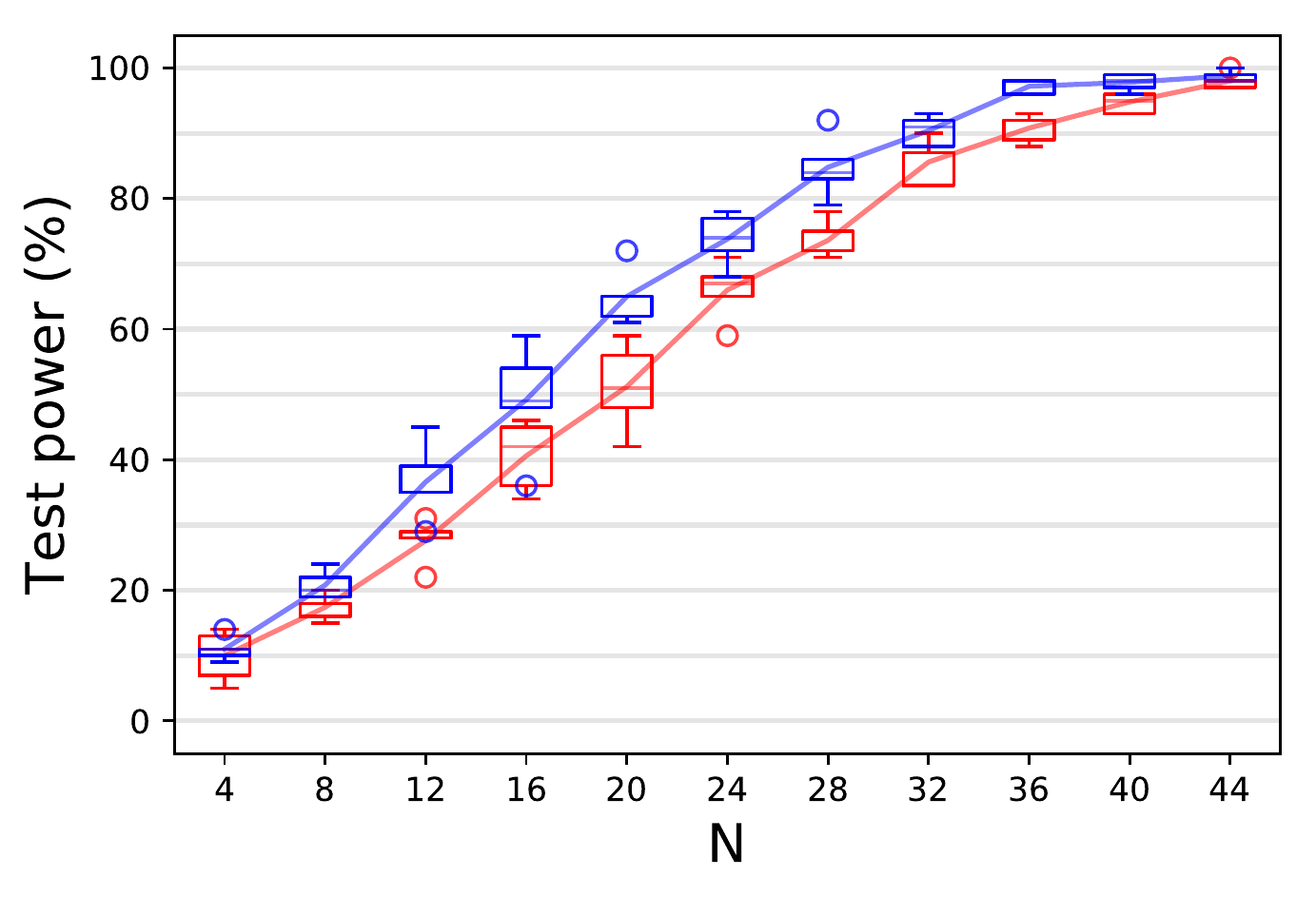}
        \caption{Two-sample testing using MMD (red) and $d^{(2)}$ (blue) on the Seoul bicycle data set.} \label{fig:seoul}
    \end{minipage}
\end{figure}
 
\subsection{Real-world data}
We demonstrate the efficiency of the kernelized cumulants on real-world data. We designed two experiments. 
\begin{itemize}[labelindent=0em,leftmargin=1em,topsep=0cm,partopsep=0cm,parsep=0cm,itemsep=2mm]
    \item 2-sample test: Here the goal was to test if environmental data in two different seasons, and traffic data at different speeds can be distinguished.
    \item independence test: The aim was to test if two distributions describing traffic flow and other traffic factors are independent.
\end{itemize} 
To improve performance of both test statistics, all features are standardized to lie between $0$ and $1$.

\paragraph{Seoul bicycle data.}
The Seoul bicycle data set \citep{ve20using} consists of environmental data along with the number of bicycle rentals. The environmental data consists of 9 numerical values, 1 categorical value (season), and two binary values. We compare the distribution of the environmental data in the winter and the fall, as we expect these distributions to be different. Concretely, we do 2-sample testing on two measures $\meas, \meass$ on $\R^{11}$, and assume that $\meas \not= \meass$ where $\meas$ is the distribution of the environmental data in winter, and $\meass$ is that of the data in the fall. Permutation testing was performed for $N$ between $4$ and $44$, with results summarized in Fig.~\ref{fig:seoul}. As it can be observed that $d^{(2)}$ outperforms MMD in terms of test power.  For the Type I error, i.e.\ the probability of falsely rejecting the null hypothesis (comparing winter data with itself), it hovers between $5-10\%$ for both statistics, which is admittedly slightly higher than the desired $5\%$ due to the small sample size, but very similar for both statistics; for further details, the reader is referred to Fig.~\ref{fig:seoul2} in Appendix~\ref{app:experiments}.

\paragraph{Brazilian traffic data.}
We used the  Sao Paulo traffic benchmark \citep{ferreira16combination} to perform independence testing. The dataset consists of 16 different integer-valued statistics about the hourly traffic in Sao Paulo such as blockages, fires and other reasons that might hold up traffic. This is combined with a number that describes the slowness of traffic at the given hour; so $\cX_1 = \R^{16}$, $\cX_2 = \R$. One expects a strong dependence between the two sets---or equivalently, for the null hypothesis to be false---and for the statistics are heavily skewed towards $0$ as it is naturally sparse. For independence testing we performed permutation testing for $N$ between $4$ and $40$. The resulting test powers are summarized in  Fig.~\ref{fig:brazil}. As it can be seen, HSIC and CSIC performs similarly for very low sample sizes, but for anything else CSIC is the favorable statistic in terms of test power. For two-sample testing, we sampled $N$ between $5$ and $50$ and compared the distribution of slow moving traffic with the fast moving traffic. The results are summarized in Fig.~\ref{fig:brazil2samp}. It is clear that $d^{(3)}$ performs similarly to MMD in terms of test power for very small sample sizes, but significantly better for larger ones.

\begin{figure}
    \begin{minipage}[t]{0.49\linewidth}
        \centering
        \includegraphics[width=\figwidth\textwidth]{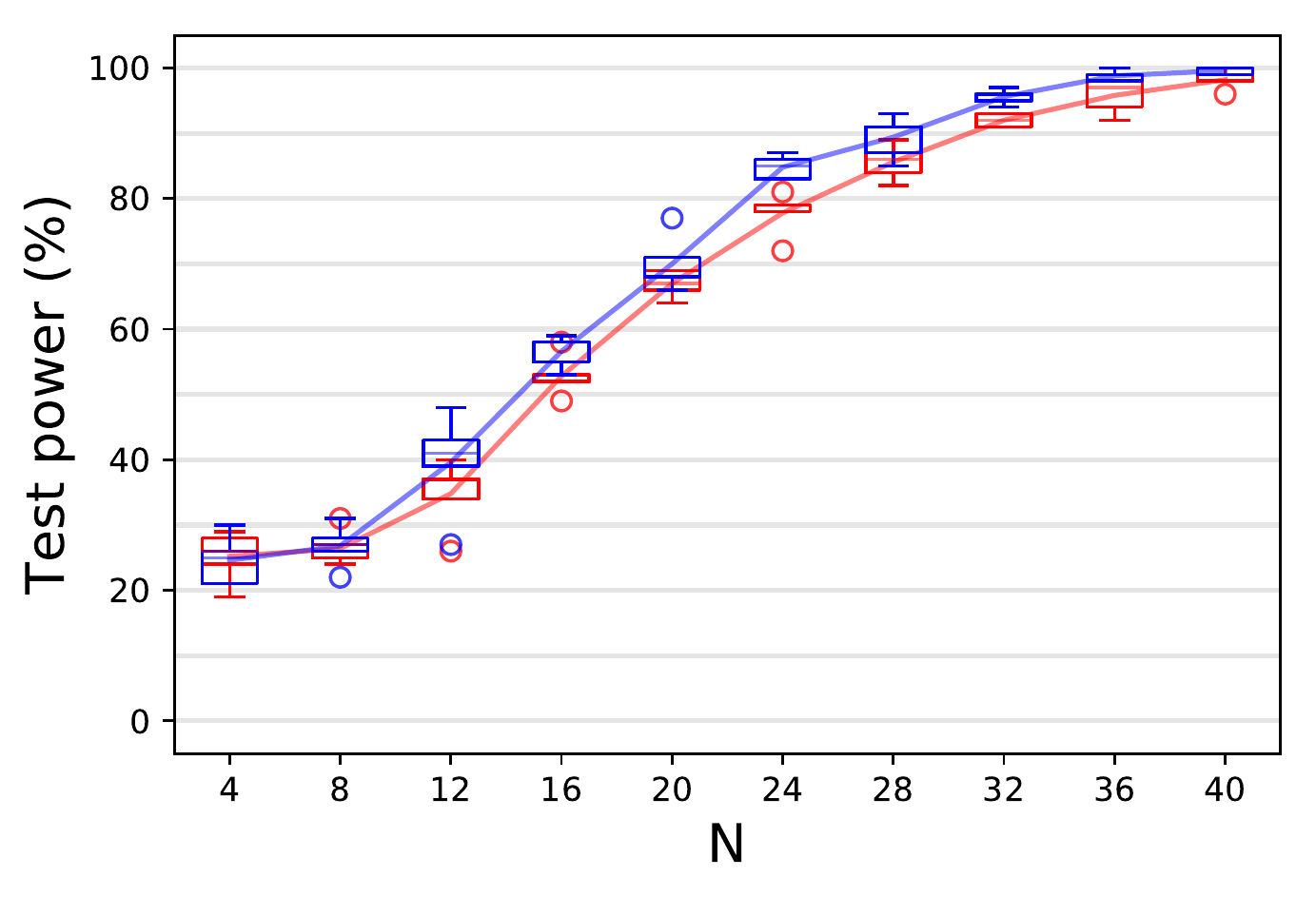}
        \caption{Independence testing using HSIC (red) and CSIC (blue) on the Sao Paulo traffic dataset.} \label{fig:brazil}
    \end{minipage}
    \hfill
    \begin{minipage}[t]{0.49\linewidth}
        \centering
        \includegraphics[width=\figwidth\textwidth]{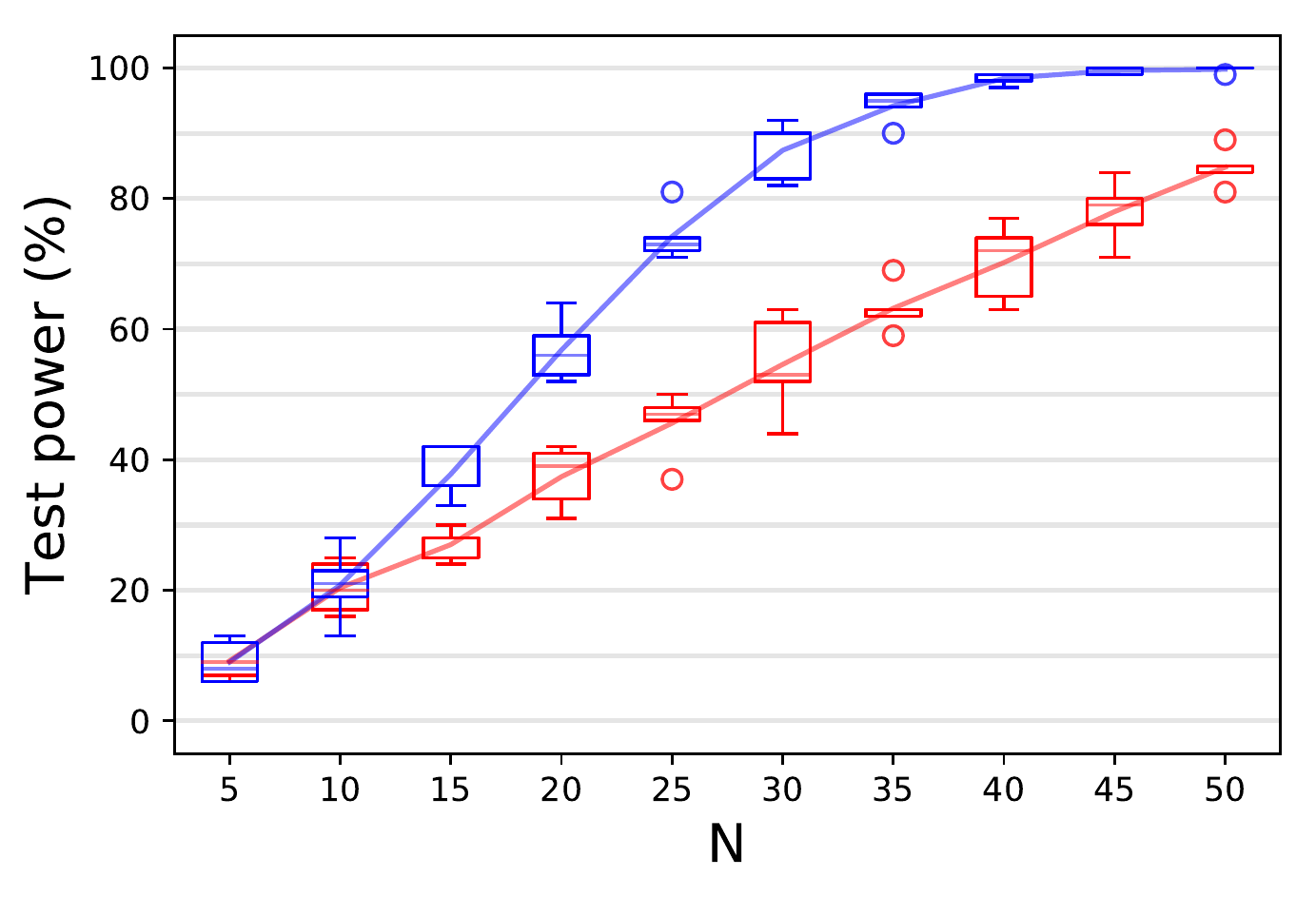}
        \caption{Two-sample testing using MMD (red) and $d^{(3)}$ (blue) on the Sao Paulo traffic dataset.} \label{fig:brazil2samp}
    \end{minipage}
\end{figure}

\section{Conclusion} 
We defined cumulants for random variables in RKHSs by extending the algebraic characterization of cumulants on $\R^d$. 
This {construction}  results in a structured description of the law of random variables that goes beyond the classic kernelized mean and covariance.
A kernel trick allows us to compute the resulting kernelized cumulants. 
We applied our theoretical results to two-sample and independence testing; although kernelized mean and covariance are sufficient for this task, the higher-order kernelized cumulants have the potential to increase the test power and to relax the assumptions on the kernel.
Our experiments on real and synthetic data show that kernelized cumulants can indeed lead to significant improvement of the test power. A disadvantage of these higher-order statistics is that their theoretical analysis requires more mathematical machinery although we emphasize that the resulting estimators are simple V-statistics. 

\bibliographystyle{apalike}
\bibliography{BIB/refs_smoothed}	
 
\appendix

These appendices provide additional background and elaborate on some of the finer points in the main text. In Appendix~\ref{app:moments vs cumulants} we illustrate that cumulants have typically lower variance estimators compared to moments. Technical background on tensor products and tensor sums of Hilbert spaces, and on tensor algebras is provided in Appendix~\ref{sec:technical background}. We present our proofs in Appendix~\ref{sec:proofs}.  
In Appendix~\ref{app:experiments} additional details on our numerical experiments are provided. Our V-statistic based estimators are detailed in Appendix~\ref{app:KT}.
	
\section{Moments and cumulants}\label{app:moments vs cumulants}

Already for real-valued random variable $X$, moments have well-known drawbacks that make cumulants often preferable as statistics. For a detailed introduction to the use of cumulants in statistics we refer to \cite{mccullagh18tensor}. Here we just mention that
    \begin{enumerate}[labelindent=0em,leftmargin=1.3em,topsep=0cm,partopsep=0cm,parsep=0cm,itemsep=2mm]
        \item the moment generating function $f(t)=\mathbb{E}[e^{ tX }]=\sum_m \mu_m t^{m}/m!$ describes the law of $X$ with sequence $(\mu_m)$ of moments $\mu_m=\mathbb{E}[X^m] \in \mathbb{R}$. 
        However, since the function $t\mapsto f(t)$ is the expectation of an exponential, one would often expect that $f$ is also "exponential in $t$", hence $g(t)=\log f(t) = \sum_m \kappa_m \frac{t^m}{m!}$ should be simpler to describe as a power series. 
        For example, for a Gaussian $f(t)=e^{t\mathbb{E}(X)+\frac{t^2}{2}Var(X)}$ and while $\mu_m$ can be in this case explicitly calculated and uneven moments vanish, the $m$-moments are fairly complicated compared to the power series expansion of $g(t)=\kappa_1 t+\kappa_2 \frac{t^2}{2}$ which just consists of $\kappa_1$ (mean) and $\kappa_2$ (variance).
        
    \item\label{itm:var} In the moment sequence $\mu_m$, lower moments can dominate higher moments.
    Hence, a natural idea to compensate for these "different scales" is to systematically subtract lower moments from higher moments.
    As mentioned in the introduction, this is in particular troublesome if finite samples are available.
Even in dimension $d=1$ the second moment is dominated by the squared mean, that is for  a real-valued random variable $X\sim \meas$ 
\begin{align*}
    \mu^2(\meas) = (\mu^{1}(\meas))^2+\operatorname{Var}(X),
\end{align*}
where $\operatorname{Var}(X) \coloneqq \E[(X-\mu^1(\meas))^2]$. It is well known that the minimum variance unbiased estimators for the variance are more efficient than that for the second moment: denoting them by $\widehat{\mu^2}$ and $\widehat \kappa $ respectively, one can show \citep{bonnier20signature} that given $N$ samples from $X$, the following holds
\begin{align*}
    \operatorname{Var}\left(\widehat{\mu^2}\right) = \operatorname{Var}  \left(\widehat{\kappa}\right)  
    + \frac{2}{N} \left[(\E X)^4 - (\E X)^2\operatorname{Var}(X) - 2 \frac{\operatorname{Var}(X)^2}{N-1} \right].
\end{align*}
This means that when $X$ has a large mean, it is more efficient to estimate its variance than its second moment since the last term in the above expression dominates.
Hence, the variance $\operatorname{Var}(X)$ is typically a much more sensible second-order statistic than $\mu^2(\meas)$.  
However, we emphasize that there are many other reasons why cumulants can have better properties as estimators

    \item Cumulants characterize laws and the independence of two random variables manifests itself simply as vanishing of cross-cumulants. In view of the above item \ref{itm:var}, this means for example that testing independence can be preferable in terms of vanishing cumulants rather than testing if moments factor $\mathbb{E}[X^mY^n]= \mathbb{E}[X^m]\mathbb{E}[X^n]$, and similarly for testing if distributions are the same.    
    \end{enumerate}
    The caveat to the above points is that it is not true that cumulants are always preferable. For example, there are distributions for which (a) the moment generating function is not naturally exponential in $t$, (b) lower moments do not dominate higher moments, (c) consequently independence or two-sample testing become worse with cumulants. 
    While one can write down conditions under which for example, the variance of the kernelized cumulants is lower, the use of cumulants among statisticians is to simply regard cumulants as arising from natural motivations which leads to another estimator in their toolbox.

    The main idea of our paper is simply that for the same reasons that cumulants can turn out to be powerful for real or vector-valued random variables, cumulants of RKHS-valued random variables are a natural choice of statistics. The situation is more complicated since it requires formalizing moment- and cumulant-generating functions in RKHS but ultimately a kernel trick allows for circumventing the computational bottleneck of working in infinite dimensions and leads to computable estimators for independence and two-sample testing.

Further, we note that although cumulants are classic for vector-valued data, there seems to be not much work done about extending their properties to general structured data. 
Our kernelized cumulants apply to any set $\cX$ where a kernel is given. This includes many practically relevant examples such as strings \citep{lodhi2002text}, graphs \citep{Kriege_2020}, or general sequentially ordered data \citep{kiraly2019kernels,chevyrevJMLR}; a  survey of kernels for structured data is provided by \citet{gartner2003survey}.

\section{Technical background} \label{sec:technical background}
In Section~\ref{app:tensors} the tensor products $(\bigotimes_{j=1}^d \cH_j)$ and direct sums of Hilbert spaces $(\bigoplus_{i\in I}\cH_i)$ are recalled.
Section~\ref{sec:tensor algebras} is about tensor algebras over Hilbert spaces ($\prod_{m \ge 0} \cH^{\otimes m}$).
\subsection{Tensor products and direct sums of Banach and Hilbert spaces} \label{app:tensors}

\paragraph{Tensor products of Hilbert spaces.} 
For Hilbert spaces  $\cH, \dots, \cH_d$ and $(h_1, \dots, h_d) \in \cH_1 \times \dots \times \cH_d$, the multi-linear operator $h_1 \otimes \dots \otimes h_d \in \cH_1 \otimes \dots \otimes \cH_d $  
is defined as 
\begin{align*} 
    (h_1 \otimes \dots \otimes h_d)(f_1,\dots,f_d) = \prod_{j=1}^d\langle h_j, f_j\rangle_{\cH_j}
\end{align*}
for all $(f_1,\dots,f_d) \in \cH_1 \times \dots \times \cH_d$. By extending the inner product 
\begin{align*}
    \langle a_1 \otimes \dots \otimes a_d , b_1 \otimes \dots \otimes b_d \rangle_{\cH_1 \otimes \dots \otimes \cH_d} := \prod_{j=1}^d\langle a_j, b_j\rangle_{\cH_j}
\end{align*}
to finite linear combinations of $a_1 \otimes \dots \otimes a_d$-s
\begin{align*}
    \left\{ \sum_{i=1}^n c_i \otimes_{j=1}^d a_{i,j}\,:\, c_i \in \R, a_{i,j} \in \cH_j, \,  n\ge 1\right\}
\end{align*}
by linearity, and taking the topological completion one arrives at  $\cH_1 \otimes \dots \otimes \cH_d$. Specifically, if $(\cH_1,k_1), \dots, (\cH_d,k_d)$ are RKHSs, then so is $ \cH_1 \otimes \dots \otimes \cH_d = \cH_{\otimes_{j=1}^d k_j} $ \citep[Theorem~13]{berlinet04reproducing} with the tensor product kernel 
\begin{align*}
    \big(\otimes_{j=1}^d k_j\big)\left(\left(x_1,\dots,x_d\right),\left(x_1',\dots,x_d'\right)\right) := \prod_{j=1}^d &k_j\left(x_j,x_j'\right),
\end{align*}
where $(x_1,\dots,x_d), (x_1',\dots,x_d')\in \cX_1 \times \cdots \times \cX_d$.  

\paragraph{Tensor products of Banach spaces.} For Banach spaces $ \cB_1, \ldots \cB_d $, the construction of $ \cB_1 \otimes \dots \otimes \cB_d $ is a little more involved \citep{lang02algebra} as one cannot rely on an inner product. 

\paragraph{Direct sums of Hilbert and Banach spaces.} Let $(\cH_i)_{i\in I}$ be Hilbert or Banach spaces where $I$ is some index set. The direct sum of $\cH_i$-s--- written as $\bigoplus_{i\in I}\cH_i$---consists of ordered tuples $h=(h_i)_{i\in I}$ such that $h_i \in \cH_i$ for all $i\in I$ and $h_i = 0$ for all but a finite number of $ i \in I $. Operations (addition, scalar multiplication) are performed coordinate-wise, and the inner product of $a,b\in \bigoplus_{i\in I} \cH_i$ is defined as $\langle a,b\rangle_{\bigoplus_{i\in I} \cH_i} = \sum_{i \in I}a_i b_i$.

\subsection{Tensor algebras} \label{sec:tensor algebras}
The tensor algebra $\T_j$ over a Hilbert space $\cH_j$ is defined as the topological completion of the space
\begin{align*}
    \bigoplus_{m \ge 0} \cH_j^{\otimes m}.
\end{align*}
Note that it can equivalently be defined as the subset of $ (h_0, h_1, h_2,\ldots ) \in \prod_{ m\geq 0}\cH_j^{\otimes m} $ such that $ \sum_{m \geq 0} \lVert h_m \rVert^2_{\cH_j^{\otimes m}} < \infty $, and as such it is a Hilbert space with norm
\begin{align*}
    \lVert (h_0, h_1, h_2,\ldots ) \rVert^2_{\prod_{m \ge 0} \cH_j^{\otimes m}} = \sum_{m \geq 0} \lVert h_m \rVert^2_{\cH_j^{\otimes m}}.
\end{align*}

$ \T_j $ is also an algebra, endowed with the tensor product over $ \cH_j $ as its product. For $ a = (a_0, a_1, a_2, a_2 \ldots), b = (b_0, b_1, b_2, b_2 \ldots) \in \T_j $, their product can be written down in coordinates as 
\begin{align*}
    a \cdot b &= \left(\sum_{i=0}^m a_i \otimes b_{m-i}\right)_{m\ge 0}.
\end{align*}	
For a sequence $\cH_1, \dots, \cH_d$ of Hilbert spaces, we define 
\begin{align*}
    \T \coloneqq \T_1 \otimes \dots \otimes \T_d, 
\end{align*}
where $\T_j=\prod_{m \ge 0} \cH_j^{\otimes m}$ ($j=1,\ldots,d$). Let $\cH = \cH_1 \times \dots \times \cH_d$, and recall that given a tuple of integers $\bi = (i_1, \dots, i_d) \in \N^d$ we define $ \cH^{\otimes \bi} \coloneq \cH_1^{\otimes i_1}\otimes \dots \otimes \cH_d^{\otimes i_d}$. This allows us to write down a multi-grading for $\T$ as 
\begin{align}
    \T = \prod_{\bi \in \N^d} \cH^{\otimes \bi}. \label{T:form1}
\end{align}
Note that this gives credence to us using multi-indices $\bi \in \N^d$ to describe elements of the tensor algebra, as the multi-indices form its multi-grading.

Furthermore, $\T$ is a multi-graded algebra when endowed with the (linear extension of the) following multiplication defined on the components of $\T$
\begin{align} \label{eq:Tmult}
    \star : \cH^{\otimes\bi^1} \times \cH^{\otimes\bi^2} &\to \cH^{\otimes (\bi^1+\bi^2)}, \\
    (x_1 \otimes \dots \otimes x_d) &\star (y_1 \otimes \dots \otimes y_d) = (x_1 \cdot y_1) \otimes \dots \otimes (x_d \cdot y_d),\nonumber
\end{align}
so that for $ a = \left(a^\bi\right)_{\bi \in \N^d}, b = \left(b^\bi\right)_{\bi \in \N^d} \in \T $, their product can be written down as
\begin{align} \label{eq:proddecomp}
    (a\star b)^{\bi} = \sum_{\bi^1 + \bi^2 = \bi} a^{\bi^1} \star b^{\bi^2}
\end{align}
where addition of tuples $\bi^1, \bi^2 \in \N^d$ is defined as $\bi^1 + \bi^2 = \left(i^1_1 + i^2_1, \dots, i^1_d + i^2_d\right)$. With the degree of a tuple defined as $\deg(\bi) = i_1 + \dots + i_d$, $\T$ is also a graded algebra, with the grading written down as
\begin{align*}
    \T = \prod_{m\geq 0 } \bigoplus_{\left\{\bi \in \N^d: \deg(\bi)=m\right\}}\cH^{\otimes \bi},
\end{align*} 
so that if one multiplies two elements together, the degree of their product is the sum of their degree.

Finally we note that $ \T $ is a unital algebra and the unit has the explicit form
\begin{align*}
    (1, 0, 0, \ldots),
\end{align*}
i.e.\ the element consisting of only a 1 at degree 0.

\section{Proofs} \label{sec:proofs}
This section is dedicated to proofs. The equivalence between the combinatorial expressions of cumulants and the definition via a moment generating function is proved in Section~\ref{app:partitions}. The derivation of our main results (Theorem~\ref{thm:cumulant metric} and Theorem~\ref{thm: cumulant independence}) are detailed in Section~\ref{app:proofs}. 
\subsection{Equivalent definitions of cumulants in $\R^d$}\label{sec:logarithms vs combinatorics}
Here we introduce a classical definition of cumulants via a moment generating function and its equivalence to the combinatorial expressions.
If $X=(X_1,\ldots,X_d)$ is an $\R^d$-valued random variable distributed according to $X \sim \gamma$, then recall that
\[\mu^{\bi}(\gamma) =\E[X_1^{i_1}\cdots X_d^{i_d}] \in \R\]
for $\bi=(i_1,\ldots,i_d)\in \N^d$.

The following definition of the cumulants $\kappa^\bi(\gamma)$ of $\gamma$ are equivalent
\begin{enumerate}[labelindent=0em,leftmargin=1.3em,topsep=0cm,partopsep=0cm,parsep=0cm,itemsep=2mm]
    \item\label{itm:generaring fcts} $\sum_{\bi \in \N^d} \kaig\frac{\bm \theta^\bi}{\bi!}=\log \sum_{\bi \in \N^d} \muig \frac{\bm \theta^\bi}{\bi!}$,
\item \label{itm: combinatorial} $\kappa^\bi(\gamma) = \underset{\pi \in P(d) }{\sum} c_\pi\mu^\bi(\gamma^\bi_\pi)$,
\end{enumerate}
where  
$\bm \theta = (\theta_1, \dots, \theta_d) \in \R^d$, $c_\pi = (-1)^{|\pi|}(|\pi|-1)!$.
The equivalence between these two definitions of cumulants via a generating function and via their combinatorial definition, is classical \citep{mccullagh18tensor} even if our notation here is non-standard in the classical case.
This equivalence is also at the heart of many proofs about properties of cumulants since some properties are easier to prove via one or the other definition. 

\subsection{Equivalent definitions of cumulants in RKHS} \label{app:partitions}
In the main text, we defined cumulants in RKHS by mimicking the combinatorial definition of cumulants in $\R^d$. 
It is natural and useful to also have the analogous definition via a "generating function" for RKHS-valued random variables.
However, to generalize the definition via the logarithm of the moment generating function to random variables in RKHS, requires to define a logarithm for tensor series of moments.
In this part, we show that this can be done and that indeed the two definitions are equivalent. 

We use the shorthand $\kappa(\gamma):=\kappa_{k_1,\ldots,k_d}(\gamma)$, $\mug:=\mu_{k_1,\ldots,k_d}(\gamma)$, and we overload the notation $(X_1,\ldots,X_d)$ with $(k_1(\cdot,X_1),\ldots,k_d(\cdot,X_d))$. With this notation,  we show that given coordinates $\bi \in \N^d$, one may express the generalized cumulant $\kaig$ as either a combinatorial sum over moments indexed by partitions, or by using the cumulant generating function.

More specifically, we show that the generalized cumulant of a probability measure $\meas$ on $\cH_1 \times \dots \times \cH_d$ defined as 
\begin{align*}
    \kappa^\bi(\meas) 
    = \sum_{\pi \in P(m)} c_\pi \E_{\meas^\bi_\pi}(X^{\otimes \bi}),
\end{align*}
where $c_\pi = (-1)^{\vert \pi \vert-1}(\vert \pi \vert-1)!$ can also be expressed as coordinates in the tensorized logarithm of the moment series. 
Motivated by the Taylor series expansion of the classic logarithm, we define
\begin{align*}
    \log : \T \to \T, \quad  
    x \mapsto \sum_{n\geq 1} \frac{(-1)^{n-1}}{n} (x-1)^{\star n},
\end{align*}
where $\star$ denotes the product as defined in \eqref{eq:Tmult} and for $t \in \T$, $t^{\star n}$ is defined as
\begin{align*}
    t^{\star n} = \underbrace{t \star \dots \star t}_{\text{n - times}},
\end{align*}
or coordinate-wise $(t^{\star n})^{\bi}=\sum_{\bi^{1}+\dots+\bi^{n}=\bi}t^{\bi^1}\star \dots \star t^{\bi^n}$ for $  \bi \in \N^d$. Note that unlike the classical logarithm $\log : \R_+ \to \R$, the tensorized logarithm is defined on the whole space as a formal expression. This can be summarized in the following lemma:
\begin{lemma}
    \begin{align} \label{eq:powerseries}
    \kappa^\bi(\meas) 
    = \sum_{\pi \in P(m)} c_\pi \E_{\meas^\bi_\pi}(X^{\otimes \bi})
    = \big(\log \mu(\meas^\bi) \big)^{\bm 1_m},
\end{align}
where  $ \b 1_m = (1, \ldots, 1) \in \N^m $.
\end{lemma}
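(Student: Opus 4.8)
The plan is to establish the identity~\eqref{eq:powerseries} by a purely algebraic manipulation in the tensor algebra $\T$, reducing the RKHS statement to the classical combinatorial--generating-function equivalence recalled in Section~\ref{sec:logarithms vs combinatorics}. The first observation is that the diagonal measure $\meas^\bi$ is a probability measure on the $m$-fold product $\cH_1^{i_1}\times\dots\times\cH_d^{i_d}$ (with $m=\deg(\bi)$), so its moment series $\mu(\meas^\bi)$ lives in the tensor algebra $\T' \coloneqq \T_1^{\otimes i_1}\otimes\dots\otimes\T_d^{\otimes i_d}$ over $m$ Hilbert-space factors, and the top multi-degree component we care about is the one indexed by $\b 1_m=(1,\dots,1)\in\N^m$, namely $\E[X^{\otimes\b 1_m}]$-type terms. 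Thus the right-hand side $\big(\log\mu(\meas^\bi)\big)^{\b 1_m}$ is, by definition of the formal logarithm, $\sum_{n\ge1}\frac{(-1)^{n-1}}{n}\big((\mu(\meas^\bi)-1)^{\star n}\big)^{\b 1_m}$, and I would expand this coordinate-wise using the formula $(t^{\star n})^{\b 1_m}=\sum_{\b j^1+\dots+\b j^n=\b 1_m}t^{\b j^1}\star\dots\star t^{\b j^n}$.

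The key combinatorial step is to match the terms of this logarithm expansion against the partition sum $\sum_{\pi\in P(m)}c_\pi\,\E_{\meas^\bi_\pi}(X^{\otimes\bi})$. Because the multi-degree is $\b 1_m$, each $\b j^\ell$ appearing in the decomposition is the indicator of a nonempty subset $\pi_\ell\subseteq\{1,\dots,m\}$, these subsets are disjoint and cover $\{1,\dots,m\}$, i.e.\ they form an \emph{ordered} partition into $n$ blocks; the $\star$-product of the corresponding components of $\mu(\meas^\bi)-1$ is exactly the moment of the partition measure $\meas^\bi_\pi$ evaluated against the appropriate tensor, since $\star$ on the components is the componentwise tensor product~\eqref{eq:Tmult} and subtracting $1$ kills the empty-block contribution (forcing each $\pi_\ell$ nonempty). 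Summing over $n$ and over ordered partitions, and absorbing the $\frac{(-1)^{n-1}}{n}$ together with the $n!$ orderings of an unordered $n$-block partition, produces precisely the Möbius coefficient $c_\pi=(-1)^{|\pi|-1}(|\pi|-1)!$. This is the standard identity $\log$ vs.\ combinatorial cumulants, and the cleanest way to run it is to invoke the scalar-valued case: pairing both sides of~\eqref{eq:powerseries} with an arbitrary elementary tensor $f_1\otimes\dots\otimes f_m$ (or testing against $\prod_j\langle X_j,f_j\rangle$) reduces the identity to the classical statement of Section~\ref{sec:logarithms vs combinatorics} applied to the real-valued random vector $(\langle X_1,f_1\rangle,\dots,\langle X_m,f_m\rangle)$ at multi-index $\b 1_m$, and the two coincide; since elementary tensors are total in $\cH^{\otimes\bi}$, this forces equality of the two vectors.

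The first equality in~\eqref{eq:powerseries}, $\kappa^\bi(\meas)=\sum_{\pi\in P(m)}c_\pi\E_{\meas^\bi_\pi}(X^{\otimes\bi})$, is simply Definition~\ref{def: kernelized cumulants} rewritten in the ``already lifted'' notation where $X_j$ stands for $k_j(\cdot,X_j)$, so nothing is to be proved there beyond unwinding notation; the content is the second equality. I expect the main obstacle to be bookkeeping rather than conceptual: one must be careful that the formal logarithm series is well-defined in the relevant multi-degree (only finitely many of the $(\mu(\meas^\bi)-1)^{\star n}$ contribute to the fixed component $\b 1_m$, namely $n\le m$, because $\mu(\meas^\bi)-1$ has no degree-$0$ part, so there is no convergence issue), and that the passage from ordered to unordered partitions correctly reproduces $(|\pi|-1)!$ — the factor $n!/|\mathrm{Aut}|$ must be handled per partition, or more safely one simply cites the classical result after the reduction-to-scalars step described above. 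A secondary point worth a line is Bochner integrability: boundedness of the kernels guarantees $\E_{\meas^\bi_\pi}(X^{\otimes\bi})$ exists in $\cH^{\otimes\bi}$, so all expectations in sight are legitimate elements of the tensor algebra.
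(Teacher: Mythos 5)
Your proposal is correct and follows essentially the same route as the paper's proof: expand the tensorized logarithm coordinatewise at $\b 1_m$ (noting only $n\le m$ terms survive because $\mu(\meas^\bi)-1$ has no degree-$0$ part), identify the nonzero decompositions $\b j^1+\dots+\b j^n=\b 1_m$ with ordered partitions of $[m]$ (the paper phrases these as surjections $[m]\to[j]$), and collapse to unordered partitions with the factor $n!$, giving $c_\pi$ and the identification of the block-wise $\star$-product with $\mu^{\bi}(\meas^\bi_\pi)=\E_{\meas^\bi_\pi}(X^{\otimes\bi})$. Your optional ``pair with elementary tensors and cite the scalar case'' step is a harmless variant of the point the paper handles directly (that reordering blocks does not change the product, and that the multiplicity is exactly $|\pi|!$ since blocks are distinct subsets, so no automorphism correction is needed).
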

By iterating \eqref{eq:proddecomp}  we can express \eqref{eq:powerseries} as 
\begin{align*}
    \sum_{j = 1}^{m} \frac{(-1)^{j-1}}{j} \sum_{\bi^1 + \dots + \bi^j = \bm 1_m} \mu^{\bi^1}(\meas^\bi) \star \dots \star \mu^{\bi^j}(\meas^\bi),
\end{align*}
and our goal is to express this as a sum over partitions. We will use the notation $[n]=\{ 1, \ldots, n \}$. We can achieve our goal in two parts:
\begin{enumerate}[labelindent=0em,leftmargin=1.3em,topsep=0cm,partopsep=0cm,parsep=0cm,itemsep=2mm]
    \item Show that for a fixed $\bi \in \N^d $ with $\deg(\bi) = m $ we can express \eqref{eq:powerseries} as a sum over all surjective functions from $ [m] $ to $ [j]$.
    \item Show that this sum over functions reduces to a sum over partitions.
\end{enumerate}
\tb{Part 1.} Note that given $ \bi^1 + \cdots + \bi^j = \bm 1_m $ we may define $h : [m] \to [j] $ by the relation $ (\bi^{h(n)})_n = 1 $, that is, we take $ h(n) $ to be the index $c$ for which the multi-index $ \bi^c $ is $ 1 $ at $ n $. Note that this function is necessarily surjective since the sum is taken over non-zero multi-indices. Equivalently, for any surjective function $ h : [m] \to [j] $ we may define multi-indices by setting
\begin{align*}
    (\bi^c)_n = 
    \begin{cases}
    1 & \text{if } n \in h^{-1}(c) \\
    0 & \text{otherwise }
    \end{cases}.
\end{align*}
Note that any such multi-index will be non-zero since the function is assumed to be surjective.	With this identification we can write 
\begin{align*}
    \big(\log \mu(\meas^\bi) \big)^{\bm 1_m}
    = \sum_{j=1}^m \frac{(-1)^{j-1}}{j} \sum_{h : [m] \to [j]} \mu^{\bi^{h^{-1}(1)}}(\meas^\bi) \star 
    \dots \star \mu^{\bi^{h^{-1}(j)}}(\meas^\bi).
\end{align*} 

\tb{Part 2.} 
Recall that given a function $h : [m] \to [j]$ we can associate it to its corresponding partition $\pi_h \in \cP(m)$ by considering the set $\{ h^{-1}(1), \dots, h^{-1}(j) \}$, and there are exactly $j!$ different functions corresponding to a given partition, which are given by re-ordering the values $1, \ldots, j$. 
This reordering of the blocks does not change the summands since the marginals of the partition measure are always copies of each other and hence self-commute, hence a product of moments like $\mu^{\bi^{h^{-1}(1)}}(\meas^\bi) \star \dots \star \mu^{\bi^{h^{-1}(j)}}(\meas^\bi)$ can always be written as $\mu^{\bi}(\meas^{\bi}_{\pi_h})$, the $\bi$-th coordinate of the moment sequence of the partition measure $\meas^{\bi}_{\pi_h}$. With this in mind we can write
\begin{align*}
    \big(\log \mu(\meas^\bi) \big)^{\bm 1_m}
    = &\sum_{j=1}^m \frac{(-1)^{j-1}}{j} \sum_{h : [m] \to [j]} \mu^{\bi} (\meas^{\bi}_{\pi_h}) 
    = \sum_{\pi \in P(m)} \frac{(-1)^{\vert \pi \vert-1}}{\vert \pi \vert} \vert \pi \vert! \mu^{\bi}(\meas^\bi_\pi) \\ 
    = &\sum_{\pi \in P(m)} c_\pi \mu^{\bi}(\meas^\bi_\pi) 
    = \sum_{\pi \in P(m)} c_\pi \E_{\meas^\bi_\pi}(X^{\otimes \bi}).
\end{align*}
From this it immediately follows that for two probability measures $\meas, \meass$ we can write
\begin{align*}
    \langle \kappa^{\bi}(\meas), \kappa^{\bi}(\meass) \rangle_{\cH^{\otimes \bi}} 
    &= \langle \sum_{\pi \in P(m)} c_\pi \E_{\meas^\bi_\pi}(X^{\otimes \bi}), \sum_{\tau \in P(m)} c_\tau \E_{\meass^\bi_\tau}(Y^{\otimes \bi}) \rangle_{\cH^{\otimes \bi}} \\ \nonumber 
    &= \sum_{\pi, \tau \in P(m)} c_\pi c_\tau \E_{(X,Y) \sim \meas^\bi_\pi \otimes \meass^\bi_\tau} \langle X^{\otimes \bi}, Y^{\otimes \bi} \rangle_{\cH^{\otimes \bi}}.
\end{align*}
Lemma~\ref{lem:combinatorics} then follows from the definition of the tensor products. 

\subsection{Proof of Theorem~\ref{thm:cumulant metric} and Theorem~\ref{thm: cumulant independence}} \label{app:proofs}	
In this section we present the proofs of Theorem~\ref{thm:cumulant metric} and Theorem~\ref{thm: cumulant independence}. We do this in a slightly more abstract setting where the feature maps take values in Banach spaces for clarity, until the end when we again restrict our attention to RKHSs. We start out by showing that polynomial functions of the feature maps characterize measures (Lemma~\ref{lem:main}). From this result we show that cumulants have the same property (Theorem~\ref{thm:maintext}), and lastly that this also holds when working directly with the kernels (Proposition~\ref{eq:main-result-specialized-to-kernels}).

A \emph{monomial} on separable Banach spaces $ \cB_1, \dots, \cB_d $ is any expression of the form 
\begin{align*}
    M(x_1, \dots, x_d) = \prod_{j=1}^{i_1} \langle f_j^1, x_1 \rangle \dots \prod_{j=1}^{i_d} \langle f_j^d, x_d \rangle
\end{align*}
for some $ (i_1, \dots, i_d)\in \N^d $, where $ f^i_j \in \cB_i^\star $ are elements of the dual space $\cB_i^\star$ and $x_i \in \cB_i$.\footnote{These monomials naturally extend the classical ones.} Finite linear combinations of monomials are called the \emph{polynomials}. Recall that a set of functions $ F $ on a set $ S $ is said to \emph{separate the points} of $ S $ if for every $x \ne y \in S$ there exists $f \in F$ such that $f(x) \ne f(y)$.

\begin{lemma}[Polynomial functions of feature maps characterize probability measures] \label{lem:main}
    Let $ \cX_1, \dots, \cX_d $ be Polish spaces, $ \cB_1 \dots, \cB_d $ separable Banach spaces and $ \varphi_i : \cX_i \to \cB_i $ be continuous, bounded, and injective functions. Then the set of functions on the Borel probability measures $\cP\left( \prod_{i=1}^d \cX_i\right)$ of $ \prod_{i=1}^d \cX_i$ 
    \begin{align*}
    \cP\left( \prod_{i=1}^d \cX_i\right) \to \R,  \quad
    \meas \mapsto \int_{ \prod_{i=1}^d \cX_i} p\big(\varphi_1(x_1), \dots, \varphi_d(x_d)\big) \mathrm d \meas(x_1, \dots, x_d),
    \end{align*}
    where $ p $ ranges over all polynomials, separates the points of $ \cP\left( \prod_{i=1}^d \cX_i\right) $.
\end{lemma}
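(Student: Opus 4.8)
The plan is to exploit the boundedness of the feature maps in order to reduce the claim to the classical fact that, on a compact metric space, the polynomials in a point-separating family of continuous functions are uniformly dense (Stone--Weierstrass), and then to transport measure-determinacy back to $\prod_{i=1}^d\cX_i$ via the Lusin--Souslin theorem on injective Borel images of Polish spaces.

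First I would construct, for each $i$, a countable family $(f^i_k)_{k\ge 1}$ in the closed unit ball of $\cB_i^\star$ that separates the points of $\cB_i$; such a family exists because $\cB_i$ is separable (e.g.\ a weak-$*$ dense sequence in the dual ball, which separates points by Hahn--Banach). Since $\varphi_i$ is bounded, $\varphi_i(\cX_i)$ lies in a ball of some radius $R_i$, so the map
\[
    g_i\colon \cX_i \to Q_i\coloneqq[-R_i,R_i]^{\mathbb N},\qquad x\mapsto \bigl(\langle f^i_k,\varphi_i(x)\rangle\bigr)_{k\ge 1},
\]
is well defined, continuous (each coordinate is a bounded functional composed with the continuous $\varphi_i$), and injective (if $g_i(x)=g_i(y)$ then $\varphi_i(x)=\varphi_i(y)$ since the $f^i_k$ separate points, hence $x=y$ by injectivity of $\varphi_i$). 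Taking the product over $i$, the map $g\coloneqq g_1\times\dots\times g_d\colon \prod_{i=1}^d\cX_i\to Q\coloneqq Q_1\times\dots\times Q_d$ is a continuous injection from a Polish space into the compact metrizable space $Q$.

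Next I would invoke the Lusin--Souslin theorem: a continuous injection of a Polish space is a Borel isomorphism onto its image, which is Borel in $Q$; hence the pushforward $g_\star\colon\cP(\prod_i\cX_i)\to\cP(Q)$ is injective. On the compact metric space $Q$ the coordinate projections separate points, so by Stone--Weierstrass the algebra they generate together with the constants---the polynomials in the coordinates of $Q$---is uniformly dense in $C(Q)$, and therefore two Borel probability measures on $Q$ coincide as soon as they integrate every such polynomial to the same value. Now if $\meas,\meass\in\cP(\prod_i\cX_i)$ integrate every polynomial in $\varphi_1(x_1),\dots,\varphi_d(x_d)$ identically, then in particular this holds for every polynomial built from the functionals $f^i_k$ only, and these are precisely the functions $P\circ g$ with $P$ a polynomial in the coordinates of $Q$; hence $\int P\,\d(g_\star\meas)=\int P\,\d(g_\star\meass)$ for all such $P$, so $g_\star\meas=g_\star\meass$, and injectivity of $g_\star$ yields $\meas=\meass$. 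This is exactly the assertion that the displayed family of functions separates the points of $\cP(\prod_i\cX_i)$.

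The step I expect to be the main obstacle is precisely the passage around the non-compactness of $\prod_i\cX_i$: one cannot apply Stone--Weierstrass to $C_b(\prod_i\cX_i)$ directly, and without boundedness of the $\varphi_i$ a moment-problem obstruction could reappear (unbounded polynomials fail to determine measures on all of $\R^d$). The boundedness hypothesis is what places the image in the compact cube $Q$, and the Lusin--Souslin theorem is what lets measure-determinacy on $Q$ descend to $\prod_i\cX_i$; one should also check the routine point that a polynomial in the coordinate functionals evaluated at the feature maps is genuinely among the polynomials in the statement, which holds by definition once the $f^i_j$ are chosen among the $f^i_k$.
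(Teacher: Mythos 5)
Your argument is correct, and it reaches the conclusion by a route whose technical ingredients differ from the paper's. The paper pushes $\meas,\meass$ forward under $\prod_{i=1}^d\varphi_i$ into the product Banach space itself: injectivity of this pushforward is obtained from the fact that Borel measures on Polish spaces are Radon (hence inner regular over compacts) together with the observation that a continuous injection into a Hausdorff space restricts to a homeomorphism on compact sets; the separation step is then carried out on the bounded, non-compact Polish image $K\subseteq\prod_i\cB_i$ by invoking a Stone--Weierstrass theorem adapted to Radon measures (Bogachev, Exercise 7.14.79). You instead compactify first: choosing countably many norm-one functionals $f^i_k$ that separate the points of the separable space $\cB_i$, you embed $\prod_i\cX_i$ continuously and injectively into the compact metrizable cube $Q$, obtain injectivity of the pushforward from the Lusin--Souslin theorem (the map is a Borel isomorphism onto a Borel image), and then only need the classical Stone--Weierstrass theorem on the compact space $Q$ plus Riesz representation. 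The trade-off is essentially which deep fact one leans on: your version replaces the Radon-regularity argument and the less standard ``Stone--Weierstrass for Radon measures'' by Lusin--Souslin and the textbook compact Stone--Weierstrass, which makes the density step more elementary; it also yields the slightly stronger byproduct that countably many polynomial moments (those built from the $f^i_k$) already determine the measure, whereas the paper's proof uses the full polynomial family. Both proofs use boundedness of the $\varphi_i$ in the same essential way, to confine the image to a bounded (respectively compact) set where polynomials are bounded and density arguments apply, so neither runs into the moment-problem obstruction you flag.
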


\begin{proof}
    We first show that the pushforward map 
    \begin{align*}
    \prod_{i=1}^d \varphi_i : \cP\left(\prod_{i=1}^d \cX_i\right) \to \cP\left(\prod_{i=1}^d \cB_i\right)
    \end{align*}
    is injective. This is done in two parts, first we show that every Borel measure on $\prod_{i=1}^d \cX_i$ is a Radon measure, then we show that the pushforward map is injective on Radon measures. To see the first part, note that since $ \cX_1, \dots, \cX_d $ are Polish spaces, so is their product space $ \prod_{i=1}^d \cX_i $ (\citealt[Theorem~2.5.7]{dudley04real}; \citealt[Theorem~16.4c]{willard70general}), and since Borel measures on Polish spaces are Radon measures \citep[Theorem 7.1.7]{bogachev07measure}, any $ \meas \in \cP(\prod_{i=1}^d \cX_i) $ must be a Radon measure.

    For the second part, note that 
    \begin{align*}
        \prod_{i=1}^d \varphi_i : \prod_{i=1}^d \cX_i \to \prod_{i=1}^d \cB_i, \quad
        \left(\prod_{i=1}^d \varphi_i\right)(x_1,\ldots,x_d) \mapsto \prod_{i=1}^d\varphi_i(x_i)    
    \end{align*}
     is a norm bounded, continuous injection. Since $ \prod_{i=1}^d \cB_i $ is a Hausdorff space, $\prod_{i=1}^d \varphi_i$ is a homeomorphism on compacts since continuous injections into Hausdorff spaces are homeomorphisms on compacts~\citep[Theorem~4.17]{Rudin}. Let $\mu, \nu \in \cP\left(\prod_{i=1}^d \cX_i\right)$ be two Radon measures such that their pushforwards are the same $\prod_{i=1}^d \varphi_i(\mu) = \prod_{i=1}^d \varphi_i(\nu)$, then for any compact $C \subseteq \prod_{i=1}^d \cX_i $ we have $\mu(C) = \nu(C) $ as $ \prod_{i=1}^d \varphi_i : C \to \prod_{i=1}^d \varphi_i(C) $ is a homeomorphism. Since Radon measures are characterized by their values on compacts, this implies that $\mu = \nu$. Hence the pushforward map is injective. 
        
    Denote by $ K $ the image of $ \prod_{i=1}^d \cX_i$ under the mapping $ \prod_{i=1}^d \varphi_i $ in $ \prod_{i=1}^d \cB_i $. Note that $ K $ is a bounded Polish space. It is enough to show that the polynomials separate the points of $ \cP(K) $. To see this, note that the polynomials form an algebra of continuous functions that  separate the points of $ \prod_{i=1}^d \cB_i $, and when restricted to $ K $ they are bounded, since $ K $ is norm bounded. Since $ K $ is Polish, any Borel measure is Radon, and we can apply the Stone-Weierstrass theorem for Radon measures \citep[Exercise 7.14.79]{bogachev07measure} to get the assertion. 
\end{proof} 
In what follows we will use the following index notation for linear functionals. Fix some tuple $ \bi = (i_1, \dots, i_d) \in \N^d $ with $\deg(\bi) = m$. Given separable Banach spaces $ \cB_1 \dots, \cB_d $ we use the notation 
\begin{align*}
    \cB^{\otimes \bi} \coloneq \cB_1^{\otimes i_1} \otimes \dots \otimes \cB_d^{\otimes i_d}
\end{align*}
and given an element $ x = (x_1, \dots, x_d) \in \prod_{i=1}^d \cB_i $ we write $ x^\bi \coloneq x_1^{\otimes i_1} \otimes \dots \otimes x_d^{\otimes i_d} $ so that $ x^\bi \in \cB^{\otimes\bi} $.
If we have functions $ (\varphi_i)_{i=1}^d $ such that $ \varphi_i : \cX_i \to \cB_i $ on some Polish spaces  $ \cX_1, \dots, \cX_d $, then we write 
\begin{align*}
    \varphi^{\otimes \bi} \coloneq \varphi_1^{\otimes i_1} \otimes \dots \otimes \varphi_d^{\otimes i_d}, \quad \varphi^{\otimes\bi} : \prod_{i=1}^d \cX_i \to \cB^{\otimes \bi}.
\end{align*}
Given a collection of linear functionals $ F \in \prod_{j=1}^{d} \big( \cB^\star_j \big)^{i_j} $ such that $ F =  (f_1, \dots, f_{d}) $ we write
\begin{align*}
    F^{\otimes \bi} \coloneq f_1 \otimes \dots \otimes f_{d}, \quad F^{\otimes \bi} \in \big( \cB^{\otimes \bi} \big)^\star.
\end{align*}
Note the following trick: the monomials on $ \prod_{i=1}^d \cB_i$ are exactly functions of the form 
\begin{align*}
    x \mapsto \langle F^{\otimes\bi}, x^\bi \rangle
\end{align*}
for $ F =  (f_1, \dots, f_{d}) $, this will be used in the proofs.
We can now restate and prove the our theorem. Note that the cumulants here are defined like in Definition \ref{def: kernelized cumulants} which is a sensible definition even if the feature maps are not associated to kernels. 
\begin{theorem}[Generalization of Theorem~\ref{thm:cumulant metric} and Theorem~\ref{thm: cumulant independence}]\label{thm:maintext}
    Let $ \cX_1, \dots, \cX_d $ be Polish spaces and $ \varphi_i : \cX_i \to \cB_i $ be  continuous, bounded and injective feature maps into separable Banach spaces $\cB_i$ for $i=1,\ldots d$. Let $ \meas$ and $ \meass $ be probability measures on $ \cX_1 \times \dots \times \cX_d $. Then
    \begin{enumerate}[labelindent=0em,leftmargin=1.3em,topsep=0cm,partopsep=0cm,parsep=0cm,itemsep=2mm]
        \item \label{itm:gen1} $ \meas = \meass $ if and only if $\kappa(\meas) = \kappa(\meass)$.
        \item \label{itm:gen2}$ \meas = \bigotimes_{i=1}^{d} \meas\vert_{\cX_i} $ if and only if the cross cumulants vanish, that is $\kappa^{\bi}(\meas)=0$ for all $ \bi \in \N^d_+ $.
    \end{enumerate} 
\end{theorem}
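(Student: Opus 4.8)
The plan is to reduce both assertions to Lemma~\ref{lem:main} through the exponential--logarithm correspondence between moments and cumulants on the tensor algebra $\T$. Since the $\varphi_i$ are bounded, all Bochner moments $\mu^\bi(\meas)=\E_\meas[\varphi^{\otimes\bi}]\in\cB^{\otimes\bi}$ are well-defined and $\mu^{\b{0}}(\meas)=1$. The first thing I would record is that the combinatorial cumulants of Def.~\ref{def: kernelized cumulants} satisfy $\kappa(\meas)=\log\mu(\meas)$ in $\T$, via exactly the computation in Appendix~\ref{app:partitions}: that argument is purely algebraic --- it uses only the partition lattice and the coordinate-wise finiteness of $\log$, which holds because $\mu^{\b{0}}(\meas)=1$ --- and so remains valid for Banach-valued feature maps. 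Writing $\exp(x)=\sum_{n\ge0}x^{\star n}/n!$ for the companion series (defined on elements with zero degree-$0$ part), $\exp$ and $\log$ are mutually inverse coordinate-wise, so $\mu(\meas)=\exp\kappa(\meas)$ and therefore
\[
\kappa(\meas)=\kappa(\meass)\iff\mu(\meas)=\mu(\meass).
\]

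For item~\ref{itm:gen1} it remains to prove $\mu(\meas)=\mu(\meass)\iff\meas=\meass$. Pairing $\mu^\bi(\meas)$ with a simple-tensor functional $F^{\otimes\bi}=f^1_1\otimes\dots\otimes f^d_{i_d}$ returns exactly $\int M\big(\varphi_1(x_1),\dots,\varphi_d(x_d)\big)\,\d\meas$ for the monomial $M(x)=\prod_{l,j}\langle f^l_j,x_l\rangle$, by the elementary properties of the Bochner integral. Hence equality of all moments forces equality of all monomial --- and so all polynomial --- expectations, and Lemma~\ref{lem:main} gives $\meas=\meass$; the reverse implication is immediate. To avoid assuming that simple tensors separate the dual of $\cB^{\otimes\bi}$ (automatic when the $\cB_i$ are RKHSs), one can instead observe that $\langle F^{\otimes\bi},\kappa^\bi(\meas)\rangle$ equals the classical $\R$-valued joint cumulant of the $\deg(\bi)$ scalars $\langle f^l_j,\varphi_l(X_l)\rangle$, so equality of the kernelized cumulants forces equality of all such classical joint cumulants, hence --- by the classical moments-from-cumulants formula --- of all their joint moments, which are precisely the monomial integrals, and Lemma~\ref{lem:main} applies as before.

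For item~\ref{itm:gen2}, if $\meas=\bigotimes_i\meas\vert_{\cX_i}$ then independence splits the Bochner expectation and the moment series factorizes, $\mu(\meas)=\mu^{(1)}\star\dots\star\mu^{(d)}$, where $\mu^{(j)}$ is the moment series of $\meas\vert_{\cX_j}$ regarded as the element of $\T$ supported on the $j$-th coordinate axis. These factors pairwise $\star$-commute --- elements supported on distinct tensor-algebra factors commute since within each $\T_j$ the unit commutes with everything --- and each has degree-$0$ part $1$, so $\log$ turns the product into a sum, $\kappa(\meas)=\sum_j\log\mu^{(j)}$, with each $\log\mu^{(j)}$ supported only on multi-indices lying on the $j$-th axis; in particular the cross cumulants vanish. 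Conversely, if the only nonzero components of $\kappa(\meas)$ are the pure-axis ones, decompose $\kappa(\meas)=\sum_j\kappa^{(j)}$ accordingly; applying $\exp$ and using commutativity again gives $\mu(\meas)=\prod_j\exp\kappa^{(j)}=\mu\big(\bigotimes_j\meas\vert_{\cX_j}\big)$, so item~\ref{itm:gen1} forces $\meas=\bigotimes_j\meas\vert_{\cX_j}$.

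The main obstacle is the formal-series bookkeeping in $\T$: verifying that $\exp$ and $\log$ are well-defined (coordinate-wise finite sums, using the normalization $\mu^{\b{0}}=1$) and mutually inverse, that $\kappa=\log\mu$ persists for Banach-valued feature maps, and --- crucially for the independence statement --- that pure-factor subalgebras pairwise commute so that $\log$ of their product splits as a sum of logs. A secondary technical point is the Banach generality versus RKHSs: either one fixes a tensor norm on $\cB^{\otimes\bi}$ whose dual is separated by simple tensors, or one uses the scalar-projection detour above. Finally, Theorems~\ref{thm:cumulant metric} and~\ref{thm: cumulant independence} follow by specializing to $\cB_i=\cH_i$ and $\varphi_i=k_i(\cdot,x_i)$, which is continuous, bounded, and injective precisely because $k_i$ is continuous, bounded, and point-separating, with the displayed expected-kernel-trick identities coming from Lemma~\ref{lem:combinatorics}.
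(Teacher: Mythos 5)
Your primary route rests on an identity that is false in $\T$, and this is a genuine gap. Appendix~\ref{app:partitions} does \emph{not} prove $\kappa(\meas)=\log\mu(\meas)$; it proves $\kappa^{\bi}(\meas)=\big(\log \mu(\meas^{\bi})\big)^{\b 1_m}$, i.e.\ the logarithm is applied to the moment series of the \emph{diagonal} measure $\meas^{\bi}$ (a different measure for each $\bi$), and only its all-ones coordinate is a cumulant. The $\bi$-coordinate of $\log\mu(\meas)$ genuinely differs from $\kappa^{\bi}(\meas)$ once $\deg(\bi)\ge 3$: for $d=1$, $\bi=(3)$ one has $\big(\log\mu(\meas)\big)^{(3)}=\mu_3-\tfrac12\,\mu_1\otimes\mu_2-\tfrac12\,\mu_2\otimes\mu_1+\tfrac13\,\mu_1^{\otimes 3}$, whereas $\kappa^{(3)}(\meas)=\mu_3-\mu_1\otimes\mu_2-\mu_2\otimes\mu_1-\E[X\otimes X'\otimes X]+2\mu_1^{\otimes 3}$ (with $X'$ an independent copy): the interleaved term $\E[X\otimes X'\otimes X]$ is not a $\star$-product of lower moments, and the coefficients disagree because the tensor series carries no $1/\bi!$ normalization --- which is precisely why the paper routes the generating-function picture through $\meas^{\bi}$ and the coordinate $\b 1_m$. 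Consequently ``$\mu=\exp\kappa$'', the claimed equivalence $\kappa(\meas)=\kappa(\meass)\iff\mu(\meas)=\mu(\meass)$, and, more seriously, your entire item~2 argument (factorize $\mu(\meas)$ under independence, split the log, read off vanishing cross cumulants, exponentiate for the converse) are not established as written. Item~1 survives only because your stated alternative --- pair $\kappa^{\bi}$ with $F^{\otimes\bi}$, recognize the classical joint cumulant of the scalars $\langle f^l_j,\varphi_l(X_l)\rangle$, invert classically to moments, and invoke Lemma~\ref{lem:main} --- is exactly the paper's proof; for item~2 you give no such fallback.

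For comparison, the paper proves item~2 the same way as item~1: $\langle F^{\otimes\bi},\kappa^{\bi}(\meas)\rangle$ is the classical cumulant of the $\R^d$-valued vector $\big((f_1\circ\varphi_1)(X_1),\dots,(f_d\circ\varphi_d)(X_d)\big)$, the classical equivalence between vanishing cross cumulants and factorizing cross moments (Speed) yields agreement of all monomial integrals of $\meas$ and $\bigotimes_i\meas\vert_{\cX_i}$, and Lemma~\ref{lem:main} concludes. Your algebraic route could be repaired, but it must be run per $\bi$ on the diagonal measure: under independence $\meas^{\bi}$ splits into $d$ independent blocks, its moment series is a commuting $\star$-product of series supported on those blocks, the log splits, and the $\b 1_m$-coordinate of each summand vanishes as soon as $\bi$ has at least two positive entries; the converse likewise has to be argued at the level of the $\meas^{\bi}$, not of $\mu(\meas)$. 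Two smaller points: your converse of item~2 assumes that \emph{all} cumulants off the pure axes vanish, whereas the hypothesis only gives $\kappa^{\bi}(\meas)=0$ for $\bi\in\N^d_+$ (all entries positive) --- these coincide for $d=2$ but not for $d\ge 3$, so as written you prove the statement under a strengthened assumption; and your final specialization to kernels (continuity, boundedness, injectivity of the canonical feature maps, plus Lemma~\ref{lem:combinatorics} for the kernel trick) does match the paper's Proposition~\ref{eq:main-result-specialized-to-kernels}.
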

\begin{proof}~ \\
    $\bullet$ Item~\ref{itm:gen2}: We want to show that the cross cumulants vanish if and only if $ \meas = \bigotimes_{i=1}^d \meas\vert_{\cX_i} $. By Lemma \ref{lem:main} it is enough to show that
    \begin{align*}
    \E_\meas\Big[p\big(\varphi_1(X_1), \dots, \varphi_d(X_d)\big) \Big]& = \E_{\bigotimes_{i=1}^d \meas\vert_{\cX_i}}\Big[p\big(\varphi_1(X_1), \dots, \varphi_d(X_d)\big) \Big]
    \end{align*}
    for any monomial function $ p $. Let us take linear functionals $ F = (f_1, \dots, f_{d}) $ and note that
    \begin{align*}
    \langle F^\bi, \kappa^{\bi}(\meas) \rangle 
    & = \sum_{\pi \in P(d)}c_\pi \E_{\meas^\bi_\pi} \big[ f_1(\varphi_1(X_1)) \cdots f_{d}(\varphi_d(X_d)) \big]
    \end{align*}
    which is the classical cumulant of the vector-valued random variable 
    \begin{align*}
    \big( (f_1 \circ \varphi_1)(X_1), \dots, (f_{d} \circ \varphi_d)(X_d) \big), 
    \end{align*}
    where $(X_1, \dots, X_d) \sim \meas$. Hence by classical results \citep{speed83cumulants}, all cross cumulants of $ \big( (f_1 \circ \varphi_1)(X_1), \dots, (f_{d} \circ \varphi_d)(X_d) \big) $ vanish if and only if the cross moments split, that is to say
    \begin{align*}
    \E_\meas \Big[p\big( (f_1 \circ \varphi_1)(X_1), \dots, (f_{d} \circ \varphi_d)(X_d) \big) \Big] &
    = \E_{\bigotimes_{i=1}^d \meas\vert_{\cX_i}}\Big[p\big( (f_1 \circ \varphi_1)(X_1), \dots, (f_{d} \circ \varphi_d)(X_d) \big) \Big]
    \end{align*}
    for any monomial $ p $ on $ \R^{d} $. Since $ f_1, \dots, f_{d} $ were arbitrary this holds for all monomials, which shows the assertion.
    
    $\bullet$ Item~\ref{itm:gen1}: By assumption $ \kappa^{\bi}(\meas) = \kappa^{\bi}(\meass) $ for every $ {\bi} \in \N^d $; this implies  that $ \E_\meas p(\varphi_1, \dots, \varphi_d) = \E_{\meass} p(\varphi_1, \dots, \varphi_d) $ for any polynomial $ p $, so we can apply Lemma~\ref{lem:main}.
\end{proof}

\begin{proposition}[Theorem~\ref{thm:cumulant metric} and Theorem~\ref{thm: cumulant independence}] \label{eq:main-result-specialized-to-kernels}
    Let $ \cX_1, \dots, \cX_d $ be Polish spaces and $ k_i : \cX_i^2 \to \R $ be a collection of bounded, continuous, point-separating kernels. Let  $ \meas$ and $\meass $ be be probability measures on $ \cX_1 \times \dots \times \cX_d $. Then
    \begin{enumerate}[labelindent=0em,leftmargin=1.3em,topsep=0cm,partopsep=0cm,parsep=0cm,itemsep=2mm]
        \item $ \meas = \meass $ if and only if $\kappa_{k_1, \dots, k_d}(\meas) = \kappa_{k_1, \dots, k_d}(\meass)$.
        \item $ \meas = \bigotimes_{i=1}^d \meas\vert_{\cX_i} $ if and only if $\kappa_{k_1, \dots, k_d}^{\bi}(\meas)=0$ for all $ \bi \in \N^d_+$.
    \end{enumerate} 
\end{proposition}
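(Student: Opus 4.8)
The plan is to deduce Proposition~\ref{eq:main-result-specialized-to-kernels} as a direct corollary of Theorem~\ref{thm:maintext} by exhibiting the canonical RKHS feature maps as special cases of the Banach-space feature maps appearing there. Concretely, for each $i$ set $\cB_i \coloneq \cH_i$ (a Hilbert space, hence in particular a Banach space) and $\varphi_i \coloneq \Phi_i : x \mapsto k_i(x,\cdot)$. The bulk of the argument is then a verification that these $\varphi_i$ satisfy the three hypotheses of Theorem~\ref{thm:maintext}: separability, continuity, boundedness, and injectivity.

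First I would check separability of $\cH_i$. Since $\cX_i$ is Polish it is separable, and $k_i$ is continuous, so $\cH_i$ is the closed span of the countable family $\{k_i(x,\cdot) : x \in D_i\}$ for a countable dense $D_i \subseteq \cX_i$; hence $\cH_i$ is separable. Next, boundedness of $\varphi_i$: by the reproducing property $\lVert \Phi_i(x)\rVert_{\cH_i}^2 = k_i(x,x) \le B_i$ for the bound $B_i$ of $k_i$, so $\varphi_i$ is norm-bounded. Continuity of $\varphi_i$ follows from continuity of $k_i$ via $\lVert \Phi_i(x) - \Phi_i(x')\rVert_{\cH_i}^2 = k_i(x,x) - 2k_i(x,x') + k_i(x',x')$, which tends to $0$ as $x'\to x$. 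Finally, injectivity of $\varphi_i$ is exactly the point-separating hypothesis on $k_i$ by Definition (point-separating kernel). With all hypotheses of Theorem~\ref{thm:maintext} verified, its two conclusions give precisely the two items of the proposition, noting that $\kappa_{k_1,\dots,k_d}(\gamma)$ as defined in Definition~\ref{def: kernelized cumulants} agrees with the cumulant $\kappa(\gamma)$ built from the feature maps $\varphi_i = \Phi_i$ in Definition~\ref{def: kernelized cumulants}'s abstract form used in Theorem~\ref{thm:maintext} (both are the same combinatorial sum $\sum_\pi c_\pi \E_{\gamma^\bi_\pi}[\varphi^{\otimes\bi}]$, which for the canonical feature map is $\sum_\pi c_\pi \E_{\gamma^\bi_\pi} k^{\otimes\bi}(\cdot,\cdot)$).

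I do not anticipate a serious obstacle: this proposition is a specialization rather than a new theorem, so the work is bookkeeping. The only mild subtlety worth stating carefully is the identification between the ``kernelized cumulant'' $\kappa_{k_1,\dots,k_d}^\bi(\gamma)$ expressed via the reproducing kernel $k^{\otimes\bi}$ and the ``feature-map cumulant'' $\kappa^\bi(\gamma) = \sum_{\pi} c_\pi \E_{\gamma^\bi_\pi}[\varphi^{\otimes\bi}]$ as an element of $\cH^{\otimes\bi}$; this is immediate from the fact that $k^{\otimes\bi}$ is the reproducing kernel of $\cH^{\otimes\bi}$ and $\varphi^{\otimes\bi}$ its canonical feature map (as recorded in Definition~\ref{def: kernelized cumulants} and Appendix~\ref{app:tensors}), together with the Bochner-integrability guaranteed by boundedness. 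Hence equality of the two cumulant sequences and vanishing of the cross-cumulants transfer verbatim between the two formulations, and the proposition follows.
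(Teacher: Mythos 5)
Your proposal is correct and follows essentially the same route as the paper's proof: reduce to Theorem~\ref{thm:maintext} with $\cB_i=\cH_{k_i}$ and $\varphi_i$ the canonical feature map, then verify boundedness, continuity, injectivity (point-separation), and separability of $\cH_{k_i}$. The only difference is cosmetic---you argue continuity and separability directly where the paper cites Lemmas 4.29 and 4.33 of \citet{steinwart08support}---and your remark identifying the kernelized cumulant with the abstract feature-map cumulant is a harmless elaboration of what the paper leaves implicit.
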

\begin{proof}
    We reduce the proof to the checking of the conditions of Theorem~\ref{thm:maintext}. Let $\varphi_i$ denote the canonical feature map of the kernel $k_i$, and let $\cB_i \coloneq \cH_{k_i}$ be the RKHS associated to $k_i$  ($i \in \{1,\ldots,d\})$. For all $i\in \{1,\ldots,d\}$, $ \varphi_i$ is (i) bounded by the boundedness of $k_i$ since 
        $\lVert \varphi_i(x) \rVert^2_{\cH_{k_i}} = k_i(x,x) \leq \sup_{x \in \cX_i} \vert k_i(x,x) \vert < \infty$, (ii) continuous by the continuity of $k_i$ \citep[Lemma 4.29]{steinwart08support}, (iii) injective by the point-separating property of $k_i$. The separability of $\cH_{k_i}$ follows \citep[Lemma~4.33]{steinwart08support} from the separability of $\cX_i$ and the continuity of $k_i$ ($i\in \{1,\ldots,d\}$). Note: Details on the expected kernel trick part of Theorem~\ref{thm:cumulant metric} and Theorem~\ref{thm: cumulant independence} are provided in Section~\ref{app:KT}.
\end{proof}

\section{Additional experiments and details} \label{app:experiments}	
Here we give additional details on the experiments that were performed, and discuss some further experiments that did not fit into the main text.

\paragraph{Background on permutation testing.}
Permutation testing works by bootstrapping the distribution of a test statistic under the null hypothesis. This allows the user to estimate confidence intervals under the null, which is a powerful all-purpose way of doing so when analytic expressions are unavailable. As an example, assume we have two probability measures $\meas, \meass$ on $\cX$ with i.i.d.\ samples $ x_1, \dots, x_N \sim \meas, y_1, \dots, y_N \sim \meass $. If the null hypothesis is that $\meas = \meass$ then we may set 
$$ (z_1, \dots, z_{2N}) \coloneq (x_1, \dots, x_N, y_1, \dots, y_N) $$
so that for any permutation $\sigma$ on $2N$ elements, we get two different set of of i.i.d.\ samples from $\meas = \meass$ by using the empirical measures 
$$ \tilde \meas_\sigma \coloneq (z_{\sigma(1)}, \dots, z_{\sigma(N)}), \quad 
\tilde \meass_\sigma \coloneq  (z_{\sigma(N+1)}, \dots, z_{\sigma(2N)}) $$
and for any statistic $S : \cP(\cX)^2 \to \R $, we may estimate $S(\meas, \meass)$ under the null by sampling from $S(\tilde \meas_\sigma, \tilde \meass_\sigma)$. If the null hypothesis were true, we might expect $S(\meas, \meass)$ to lie in a region with high probability of the permutation estimator, and we can use this as a criteria for rejecting the null. Under fairly weak assumptions, this yields a test at the appropriate level \citep{chung13exact}. 

\paragraph{Comparing a uniform and a mixture.}
Any uniform random variable over a symmetric interval will have $ 0 $ mean and skewness, so a symmetric mixture only needs to match the variance. If $ X $ is a $ 50/50 $ mixture of $ U[a,b] $ and $ U[-a, -b] $ then
\begin{align*}
\mathrm{Var}(X) = \frac23\left(b^2 + ba + a^2\right)
\end{align*}
so if $ Y $ is distributed according to $ U[-c, c] $ then we only need to solve
\begin{align*}
b^2 + ba + a^2 = c^2
\end{align*}
which is straightforward for a given $ a $ and $ c $. 

\paragraph{Computational complexity of estimators.} 
The V-statistic for $d^{(2)}$ as written in Lemma~\ref{lemma:KVE-estimation} is bottlenecked by the matrix multiplications. We may note however that for two matrices $\b A, \b B$ it holds that 
\begin{align*}
\Tr(\b A^\top  \b B) = \langle \b A \circ \b B \rangle,
\end{align*}
where $\langle \cdot \rangle$ denotes the sum over elements and $\circ$ denotes the Hadamard product. We also note that for for $\b H_n = \frac{1}{n} \b 1_n \b 1_n^\top$ we have $ \big ( \b A \b H_n \big)_{i,j} = \frac{1}{n}\sum_{c=1}^{n} A_{i,c} $. Using both of these tricks we may compute both $d^{(2)}$ and CSIC without any matrix multiplications, which brings the computational complexity down to $O(N^2)$ for both. For a comparison of actual computation time, see Fig.~\ref{fig:kmekvecomptime} and Fig.~\ref{fig:hsiccsiccomptime}, where the average computational times for out methods are compared to the KME and and HSIC for $N$ between 50 and 2000.
	
\begin{figure}[h]
    \centering
    \includegraphics[width=.4\textwidth]{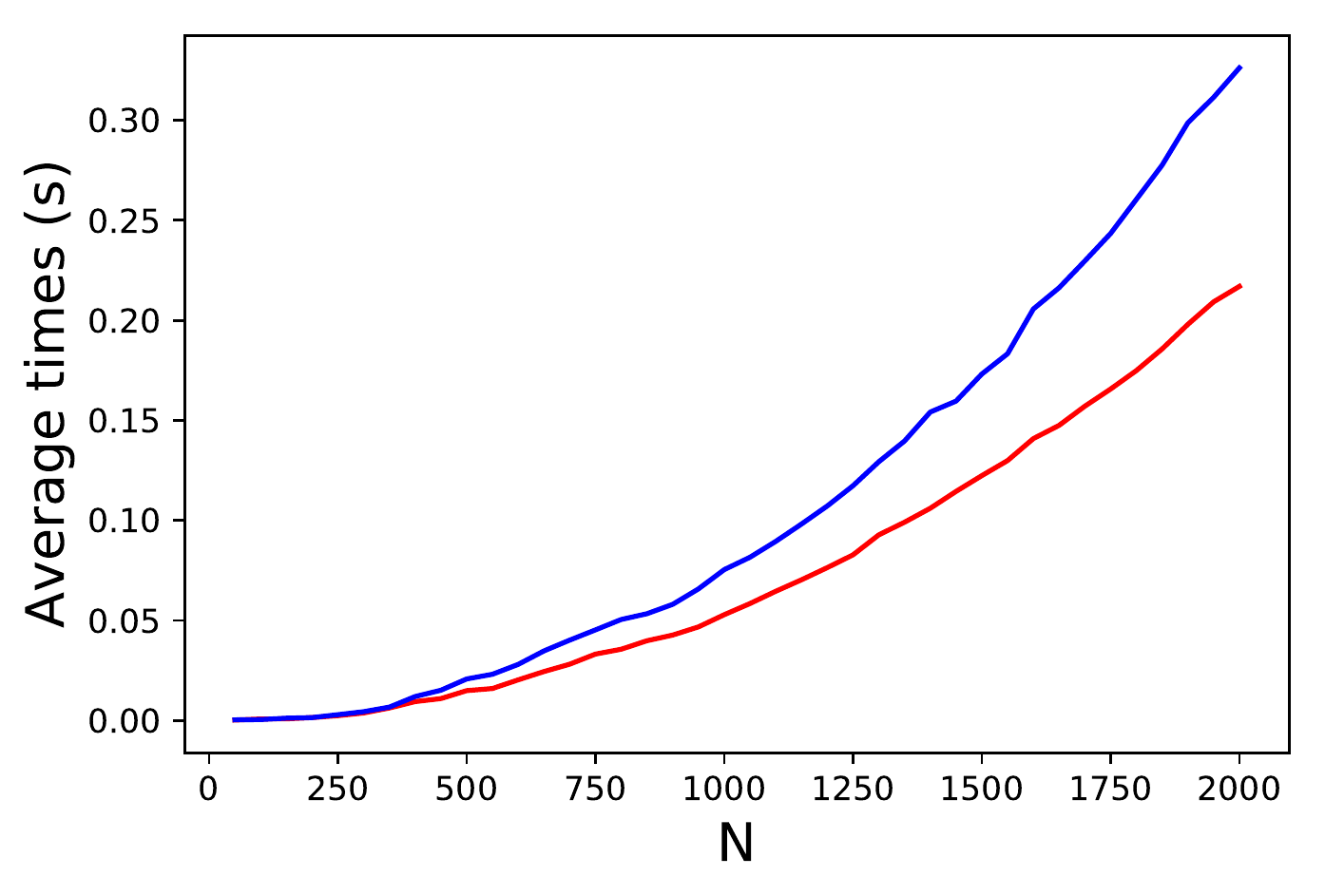}
    \caption{Average computational time in seconds for KME (red) and $d^{(2)}$ (blue) for sample size $N$ between $50$ and $2000$.} \label{fig:kmekvecomptime}
\end{figure}

\begin{figure}[h]
    \centering
    \includegraphics[width=.4\textwidth]{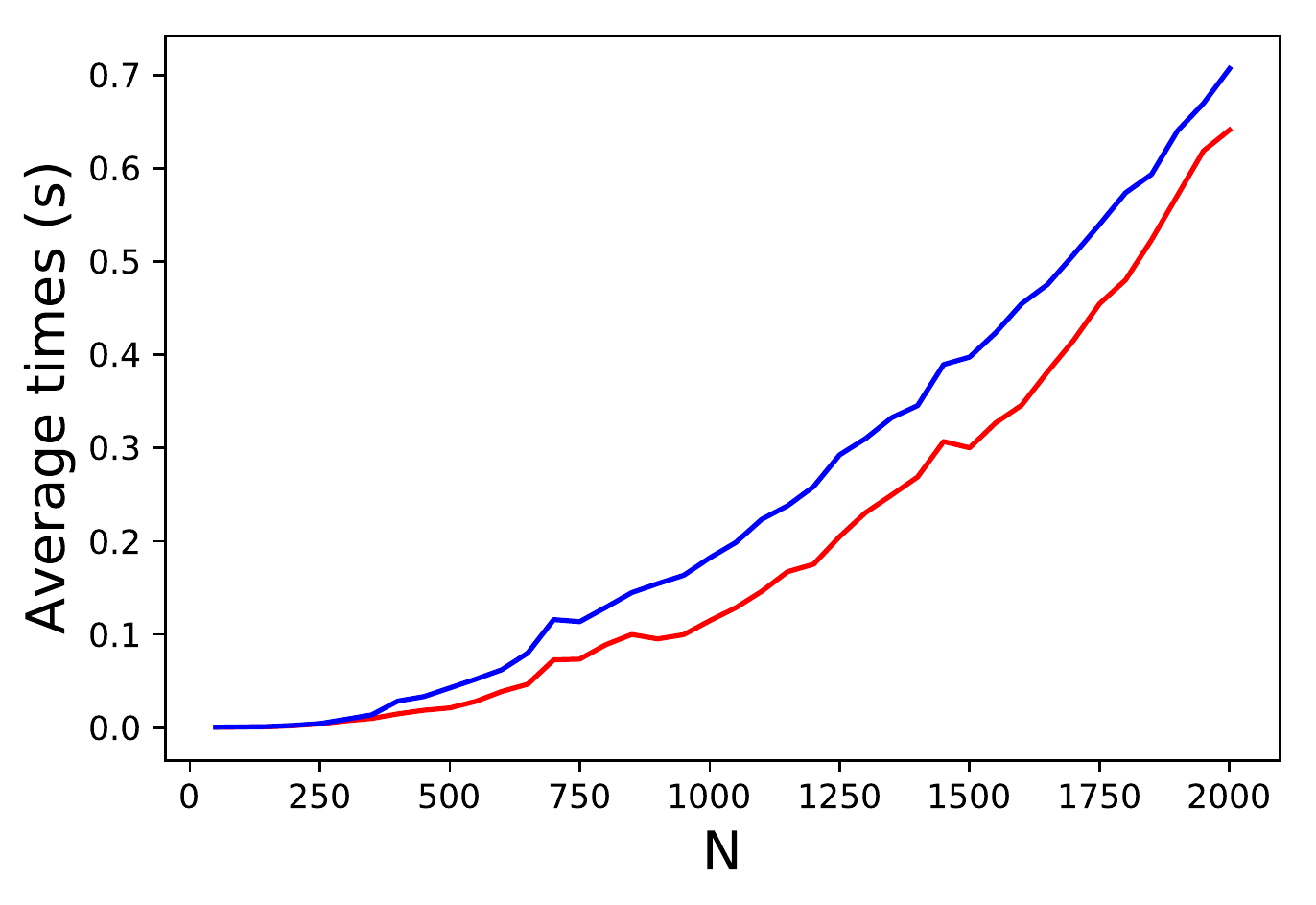}
    \caption{Average computational time in seconds for HSIC (red) and CSIC (blue) for sample size $N$ between $50$ and $2000$.} \label{fig:hsiccsiccomptime}
\end{figure}

\paragraph{Type I error on the Seoul Bicycle data.}
The results when comparing the winter data to itself is presented in Fig.~\ref{fig:seoul2}. As we see the performance is similar for both estimators and lies between $5$ and $10\%$.
\begin{figure}
    \centering
    \includegraphics[width=.4\textwidth]{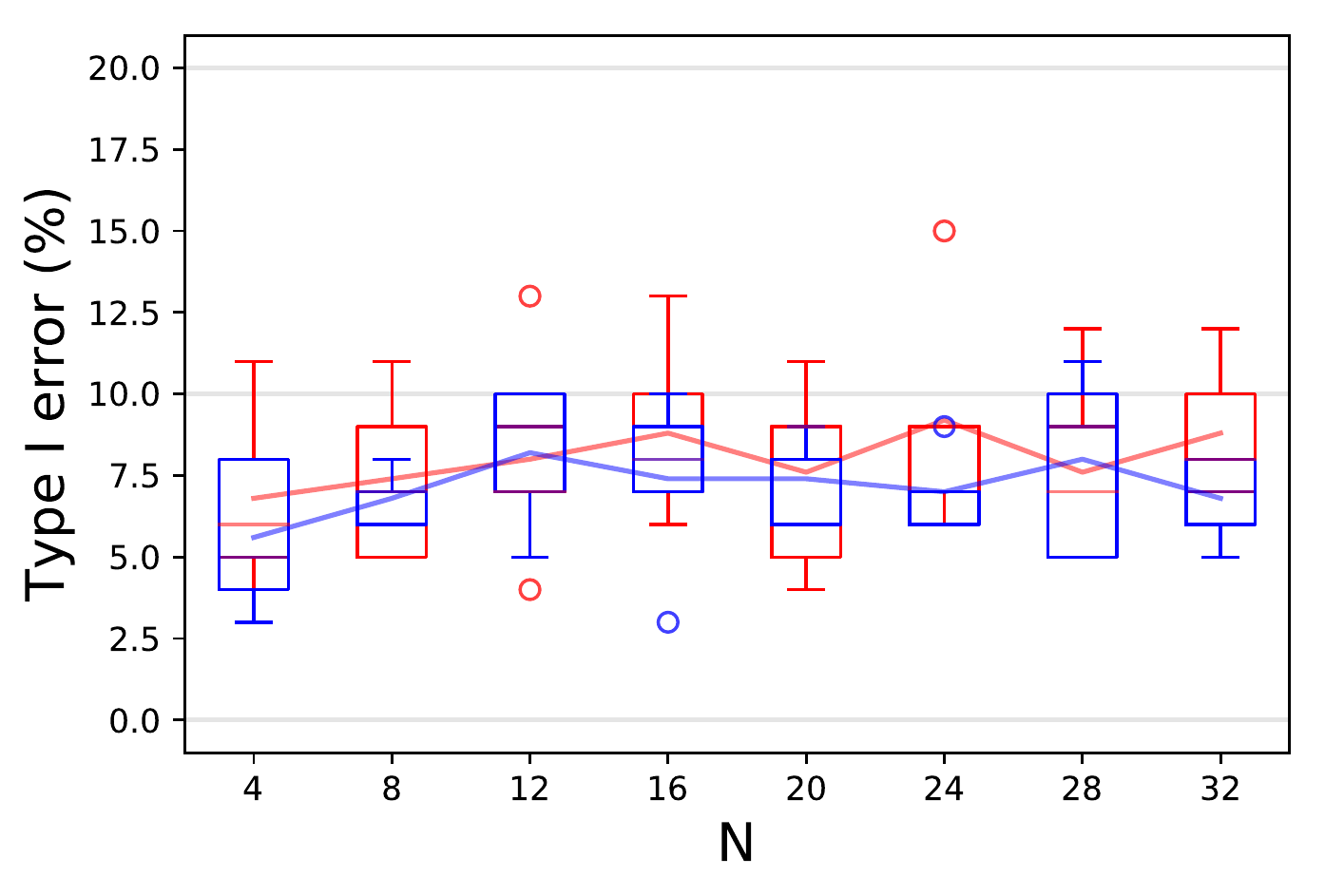}
    \caption{Type I errors using MMD (red) and $d^{(2)}$ (blue) on the Seoul bicycle data set.} \label{fig:seoul2}
\end{figure}

\paragraph{Classical vs.\ kernelized cumulants.} 
Using the same distributions as in the synthetic independence testing experiment, we now compare $X$ with $Y_{0.5}^2$ to contrast independence testing with classical cumulants with their kernelized counterpart. The results are summarized in Table~\ref{table:GMComp} where they are displayed as the median value $ \pm $ half the difference between the 75th and 25th percentile. We consider every combination of classical vs.\  kernelized, variance vs.\ skewness, and two different sample sizes. One can observe that the classical variance based test performs poorly compared to a classical skewness test,  the kernelized variance test is almost as powerful as the kernelized skewness test, and in all cases the kernelized tests deliver higher power. 

 \begin{table}
 	\caption{Comparison of classical and kernelized cumulants for independence testing with both variance and skewness.}
        \label{table:GMComp}    
 	\begin{subfigure}{0.5\textwidth} \centering
 		\begin{tabular}{lcc}
 			\toprule
 			\multicolumn{1}{l}{N=20} & \multicolumn{1}{c}{Variance} & \multicolumn{1}{c}{Skewness} \\
 			\midrule 
 			Classical & $19\% \pm 3.0\%$ & $56\% \pm 3.5\%$  \\
 			Rbf kernel & $39\% \pm 4.5\%$ & $59\% \pm 3.0\%$  \\
 			\bottomrule
 		\end{tabular}
        \end{subfigure}
        \begin{subfigure}{0.5\textwidth} \centering
            \begin{tabular}{lcc}
 			\toprule
 			\multicolumn{1}{l}{N=30} & \multicolumn{1}{c}{Variance} & \multicolumn{1}{c}{Skewness} \\
 			\midrule 
 			Classical & $17\% \pm 0.5\%$ & $68\% \pm 1.0\%$  \\
 			Rbf kernel & $65\% \pm 3.5\%$ & $79\% \pm 1.5\%$  \\
 			\bottomrule
 		\end{tabular}
        \end{subfigure}
 \end{table}
	
\section{Kernel trick computations} \label{app:KT}

Here we show how to arrive at the expressions used for the V-statistics used in the experiments.

Given a real analytic function $ f(x, \dots, x_d) = \sum_{\bi \in \N^d} f_\bi x^\bi $ in $m$ variables with nonzero radius of convergence and Hilbert spaces $\cH_1, \dots, \cH_d$ we may (formally) extend $f$ to a function
\begin{align*}
    f_\otimes : \prod_{i=1}^d\cH_i \to \T, \quad f_\otimes(x_1, \dots, x_d) = \prod_{\bi \in \N^d} f_\bi x^{\otimes \bi}.
\end{align*}
Moreover, if the Hilbert spaces are RKHSs then we have the following result.
\begin{lemma}[Nonlinear kernel trick] \label{lem:nonlinearkerneltrick}
    For any collection of RKHSs $\cH_1, \dots, \cH_d$ with feature maps $\varphi_i : \cX_i \to \cH_i$, assume that $f$ and $g$ are real analytic functions with radii of convergence $r(f)$ and $r(g)$ such that 
    $\max_{1 \leq i \leq d} \sup_{x\in\cX_i} \vert \varphi_i(x)\vert < \min(r(f), r(g))$.
    Then
    \begin{align*}
        \langle f_\otimes\big(\varphi_1(x_1), \dots, \varphi_d(x_d)\big), g_\otimes\big(\varphi_1(y_1), \dots, \varphi_d(y_d)\big) \rangle_\T &
        = \sum_{\bi \in \N^d} f_\bi g_\bi k_1(x_1,y_1)^{i_1}\dots k_d(x_d,y_d)^{i_d}.
    \end{align*}
\end{lemma}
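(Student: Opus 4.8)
The plan is to reduce the identity to the Hilbert-space structure of $\T$ as an $\ell^2$-type direct sum over the multi-grading $\T = \prod_{\bi\in\N^d}\cH^{\otimes\bi}$ (Appendix~\ref{sec:tensor algebras}), after which the equality becomes a coordinate-wise computation; the only genuine work is to check that $f_\otimes(\varphi_1(x_1),\dots,\varphi_d(x_d))$ and $g_\otimes(\varphi_1(y_1),\dots,\varphi_d(y_d))$ actually belong to $\T$ and not merely to the ambient product space.

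First I would unwind the definitions. By construction $f_\otimes(\varphi_1(x_1),\dots,\varphi_d(x_d)) = \big(f_\bi\,\varphi(x)^{\otimes\bi}\big)_{\bi\in\N^d}$ with $\varphi(x)^{\otimes\bi} \coloneqq \varphi_1(x_1)^{\otimes i_1}\otimes\dots\otimes\varphi_d(x_d)^{\otimes i_d}\in\cH^{\otimes\bi}$, and similarly for $g$. Recalling that $\T$ carries the inner product $\langle a,b\rangle_\T = \sum_{\bi}\langle a^\bi,b^\bi\rangle_{\cH^{\otimes\bi}}$, the claim is then exactly
\[
  \langle f_\otimes(\varphi(x)),\, g_\otimes(\varphi(y))\rangle_\T \;=\; \sum_{\bi\in\N^d} f_\bi g_\bi\,\langle \varphi(x)^{\otimes\bi},\,\varphi(y)^{\otimes\bi}\rangle_{\cH^{\otimes\bi}},
\]
together with the evaluation of each factor: by the definition of the tensor-product inner product (Appendix~\ref{app:tensors}) and the reproducing property $\langle\varphi_j(u),\varphi_j(v)\rangle_{\cH_j}=k_j(u,v)$ one gets $\langle\varphi(x)^{\otimes\bi},\varphi(y)^{\otimes\bi}\rangle_{\cH^{\otimes\bi}} = \prod_{j=1}^d k_j(x_j,y_j)^{i_j}$. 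Given that both arguments lie in $\T$, the coordinate-wise expansion of the inner product is automatic, with absolute convergence of the right-hand series guaranteed by the Cauchy--Schwarz inequality in $\ell^2$.

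Thus the substantive step is membership in $\T$. Put $\rho \coloneqq \max_{1\le i\le d}\sup_{x\in\cX_i}\|\varphi_i(x)\|_{\cH_i}$, which is finite since $\|\varphi_i(x)\|_{\cH_i}^2 = k_i(x,x)$ and the kernels are bounded, and which by hypothesis satisfies $\rho < \min(r(f),r(g))$. From $\|\varphi(x)^{\otimes\bi}\|_{\cH^{\otimes\bi}}^2 = \prod_{j=1}^d \|\varphi_j(x_j)\|_{\cH_j}^{2 i_j} \le \rho^{2\deg(\bi)}$ it suffices to show $\sum_{\bi\in\N^d}|f_\bi|^2\rho^{2\deg(\bi)} < \infty$ (and likewise for $g$). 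Since $\rho$ lies strictly inside the domain of absolute convergence of the power series $f$ (pass, if needed, to an intermediate radius $\rho < s < r(f)$), the sum $\sum_\bi |f_\bi|\rho^{\deg(\bi)}$ is finite; in particular the terms $|f_\bi|\rho^{\deg(\bi)}$ are bounded by some constant $C$, so $|f_\bi|^2\rho^{2\deg(\bi)} \le C\,|f_\bi|\rho^{\deg(\bi)}$ is summable. Hence $f_\otimes(\varphi(x)),\, g_\otimes(\varphi(y))\in\T$, which closes the argument.

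I expect the only real obstacle to be the bookkeeping in this last estimate: the hypothesis bounds each feature-map norm $\|\varphi_i(\cdot)\|$ separately, whereas the norms $\|\varphi(x)^{\otimes\bi}\|$ involve a product over all $d$ coordinates, so one must route everything through the single constant $\rho$ and relate a $\deg(\bi)$-weighted summability bound on $(f_\bi)$ to the stated radius of convergence. Everything else --- interchanging the inner product with the sum, and evaluating $\langle\varphi(x)^{\otimes\bi},\varphi(y)^{\otimes\bi}\rangle$ as a product of $k_j(x_j,y_j)^{i_j}$ --- is routine once the two elements are known to lie in $\T$.
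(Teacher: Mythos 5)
Your proposal is correct and follows essentially the same route as the paper's proof: expand the $\T$-inner product coordinate-wise over the multi-grading and evaluate each component via the reproducing property $\langle\varphi_j(u),\varphi_j(v)\rangle_{\cH_j}=k_j(u,v)$, with the radius-of-convergence hypothesis ensuring absolute convergence. The only difference is that you spell out the membership of $f_\otimes(\varphi(x))$, $g_\otimes(\varphi(y))$ in $\T$ and the $\ell^2$/Cauchy--Schwarz justification for interchanging sum and inner product, which the paper compresses into the remark that the power series converge absolutely.
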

\begin{proof}
    Since the image of the $\varphi_i$s lie inside the radius of convergence of $f_\otimes $ and $g_\otimes$ the power series converge absolutely and we can write
    \begin{eqnarray*}
        \lefteqn{\langle f_\otimes\big(\varphi^{\otimes \bi}(x^\bi)\big), g_\otimes\big(\varphi^{\otimes \bi}(y^\bi)\big) \rangle_{\T}  = \langle \sum_{\bi \in \N^d} f_\bi\varphi^{\otimes \bi}(x^\bi), \sum_{\bi \in \N^d} g_\bi\varphi^{\otimes \bi}(y^\bi) \rangle_{\T}}  \\ 
        &&= \sum_{\bi \in \N^d} f_\bi g_\bi \langle \varphi^{\otimes \bi}(x^\bi), \varphi^{\otimes \bi}(y^\bi) \rangle_{\cH^{\otimes\bi}}  = \sum_{\bi \in \N^d} f_\bi g_\bi k_1(x_1,y_1)^{i_1}\dots k_d(x_d,y_d)^{i_d},
    \end{eqnarray*}
    where $\cH = \cH_1 \times \dots \times \cH_d$.
\end{proof}
Using Lemma \ref{lem:nonlinearkerneltrick}, we can choose kernels $k_i : \cX^2_i \to \R$ with associated RKHSs $\cH_i$ and feature maps $\varphi_i$ and some $ \bi \in \N^d $ with $\deg(\bi) = m$. We make the observation that with $X=(X_1,\ldots,X_d)\sim \meas$, $Y=(Y_1,\ldots,Y_d)\sim \meass$ and $k^{\otimes \bi}$ and $\cH^{\otimes \bi}$ as in \eqref{eq:k-otimes-i}, one has
\begin{align*} 
    \langle \kappa^{\bi}(\meas), \kappa^{\bi}(\meass) \rangle_{\cH^{\otimes \bi}}  &= \langle \sum_{\pi \in P(m)}c_\pi \E_{\meas^\bi_\pi} \varphi^{\otimes\bi}(X^\bi) , \sum_{\tau \in P(m)} c_\tau \E_{\meass^\bi_\tau} \varphi^{\otimes\bi}(Y^\bi) \rangle_{\cH^{\otimes \bi}} \\
    &= \hspace{-0.15cm} \sum_{\pi, \tau \in P(m)}\hspace{-0.15cm} c_\pi c_\tau \langle \E_{\meas^\bi_\pi} \varphi^{\otimes\bi}(X^\bi) , \E_{\meass^\bi_\tau} \varphi^{\otimes\bi}(Y^\bi) \rangle_{\cH^{\otimes \bi}} \\
    &= \hspace{-0.15cm}\sum_{\pi, \tau \in P(m)} \hspace{-0.15cm} c_\pi c_\tau \E_{\meas^\bi_\pi \otimes \meass^\bi_\tau} \langle \varphi^{\otimes\bi}(X^\bi) , \varphi^{\otimes\bi}(Y^\bi) \rangle_{\cH^{\otimes \bi}} \\
    &= \hspace{-0.15cm}\sum_{\pi, \tau \in P(m)} \hspace{-0.15cm} c_\pi c_\tau \E_{\meas^\bi_\pi\otimes \meass^\bi_\tau} k^{\otimes \bi}((X_1, \dots, X_m), (Y_1, \dots, Y_m)), 
\end{align*}
Since 
\begin{align*}
    \lVert \kappa^{\bi}(\meas) \rVert^2_{\cH^{\otimes \bi}} &= \langle \kappa^{\bi}(\meas), \kappa^{\bi}(\meas) \rangle_{\cH^{\otimes \bi}} \\
    \lVert \kappa^{\bi}(\meas) - \kappa^{\bi}(\meass) \rVert^2_{\cH^{\otimes \bi}} 
    &= \langle \kappa^{\bi}(\meas), \kappa^{\bi}(\meas) \rangle_{\cH^{\otimes \bi}}
    + \langle \kappa^{\bi}(\meass), \kappa^{\bi}(\meass) \rangle_{\cH^{\otimes \bi}} 
    - 2\langle \kappa^{\bi}(\meas), \kappa^{\bi}(\meass) \rangle_{\cH^{\otimes \bi}}
\end{align*}
one gets the expected kernel trick statements of Theorem~\ref{thm:cumulant metric} and Theorem~\ref{thm: cumulant independence}.

We are now interested in explicitly computing the expression $\lVert \kappa^{(1,2)}_{k,\ell}(\meas) \rVert^2_{\cH_k^{\otimes 1} \otimes \cH_\ell^{\otimes 2}}$, $\lVert \kappa^{(2)}_{k}(\meas) - \kappa^{(2)}_{k}(\meass) \rVert^2_{{\cH}^{(1,1)}}$ and $\lVert \kappa^{(3)}_{k}(\meas) - \kappa^{(3)}_{k}(\meass) \rVert^2_{\cH_k^{\otimes 3}}$, and their corresponding V-statistics. Recall that for a (w.l.o.g.) symmetric, measurable function $h(z_1,\dots,z_m)$, the V-statistic of $h$ with $N$ samples $Z_1,\dots,Z_N$ is defined as 
\begin{align*}
    \operatorname{V}(h;Z_1,\dots,Z_N) &\coloneqq N^{-m} \sum_{i_1,\dots,i_m=1}^N h(Z_{i_1},\dots,Z_{i_m}).
\end{align*}
Under fairly general conditions, the V-statistic converges in distribution to $\E[h(Z_1,\dots,Z_m)]$ and a well-developed theory describes this convergence \citep{vanderwaart00asymptotic,serfling80approximation,arcones92bootstrap}.
\begin{example}[Estimating $\lVert \kappa^{(2)}_{k}(\meas) - \kappa^{(2)}_{k}(\meass) \rVert^2_{{\cH}^{(1,1)}}$] \label{ex:d2stat}
    Let $X,X', X'', X'''$ denote independent copies of $\meas$ and $Y,Y',Y'',Y'''$ denote independent copies of $\meass$. The full expression for $\lVert \kappa^{(2)}_{k}(\meas) - \kappa^{(2)}_{k}(\meass) \rVert^2_{{\cH}^{(1,1)}}$ is 
    \begin{align} \label{eq:k2}
        \lVert \kappa^{(2)}_{k}(\meas) - \kappa^{(2)}_{k}(\meass) \rVert^2_{{\cH}^{(1,1)}}
        &= \E k(X,X')k(X'',X''') + \E k(Y,Y')k(Y'',Y''') \\
        &\quad + \E k(X,X')^2  + \E k(Y,Y')^2 \nonumber\\
        &\quad + 2\E k(X,Y)k(X',Y) + 2\E k(X,Y)k(X,Y')\nonumber \\
        &\quad - 2 \E k(X,Y)k(X',Y') - 2 \E k(X,Y)^2 \nonumber\\
        & \quad - 2 \E k(X,X')k(X,X'') - 2\E k(Y,Y')k(Y,Y''). \nonumber
    \end{align}
    Given samples $(x_i)_{i=1}^N$, $(y_i)_{i=1}^M$ from $\meas$ and $\meass$ respectively the corresponding V statistic is 
    \begin{align}\label{eq:k2-hat} 
        &\frac{1}{N^4}  \sum_{i,j,\kappa,l = 1}^N k(x_i,x_j)k(x_\kappa,x_l)
        + \frac{1}{M^4}      \sum_{i,j,\kappa,l = 1}^M k(y_i,y_j)k(y_\kappa,y_l)\\ 
        &+ \frac{1}{N^2}      \sum_{i,j = 1}^N k(x_i,x_j)^2 
        + \frac{1}{M^2}      \sum_{i,j = 1}^M k(y_i,y_j)^2 \nonumber \\
        &+ \frac{2}{N^2M}     \sum_{i,\kappa = 1}^N\sum_{j = 1}^M k(x_i,y_j)k(x_\kappa,y_j) 
        + \frac{2}{NM^2}     \sum_{i = 1}^N\sum_{j,\kappa = 1}^M k(x_i,y_j)k(x_i,y_\kappa) \nonumber\\ 
        &- \frac{2}{N^2M^2}   \sum_{i,l = 1}^N\sum_{j,\kappa = 1}^M k(x_i,y_j)k(x_\kappa,y_l) 
        - \frac{2}{NM}       \sum_{i = 1}^N\sum_{j = 1}^M k(x_i,y_j)^2\nonumber\\
        &- \frac{2}{N^3}      \sum_{i,j,\kappa = 1}^N  k(x_i,x_j)k(x_i,x_\kappa)
        - \frac{2}{M^3}      \sum_{i,j,\kappa = 1}^M k(y_i,y_j)k(y_i,y_\kappa). \nonumber
    \end{align}
    Let us define the Gram matrices $\b K_x  = [k(x_i, x_j)]_{i,j=1}^N \in \R^{N\times N}$, $\b K_y = [k(y_i, y_j)]_{i,j=1}^M \in \R^{M\times M}$, $\b K_{x,y} = [k(x_i, y_j)]_{i,j=1}^{N,M} $ and
    let $\b H_N = \frac{1}{N} \b 1_N \b 1_N^\top \in \R^{N\times N}$, $\b H_M = \frac{1}{M} \b 1_M \b 1_M^\top \in \R^{M\times M}$ be the centering, then \eqref{eq:k2-hat} can be rewritten as
    \begin{align*}
        &\frac{1}{N^2}\Tr (\b H_N \b K_x\b H_N\b K_x)
        + \frac{1}{M^2}\Tr (\b H_M \b K_y\b H_M\b K_y) 
        + \frac{1}{N^2} \Tr (\b K_x^2) 
        + \frac{1}{M^2}\Tr (\b K_y^2) \\
        &+ \frac{2}{NM}\Tr (\b K_{xy}\b H_N\b K_{xy}) 
        + \frac{2}{NM} \Tr (\b K_{xy}\b H_M\b K_{xy}^\top) - \frac{2}{NM}\Tr (\b H_M \b K_{xy}^\top \b H_N\b K_{xy}) - \frac{2}{NM}\Tr (\b K_{xy}^2)\\
        & - \frac{2}{N^2}\Tr (\b K_x\b H_N\b K_x) - \frac{2}{M^2}\Tr (\b K_y\b H_M\b K_y) 
    \end{align*}
    which simplifies to
    \begin{align*}
        &\frac{1}{N^2}  \Tr\big[(\b K_x (\b I - \b H_N) )^2 \big] 
        + \frac{1}{M^2}\Tr\big[(\b K_y (\b I - \b H_M) )^2 \big] - \frac{2}{NM} \Tr\big[\b K_{xy}(\b I - \b H_M) \b K_{xy}^\top(\b I - \b H_N) \big].
    \end{align*}
    This estimator can be computed in quadratic time.
\end{example}

\begin{example}[Estimating $\lVert \kappa^{(1,2)}_{k,\ell}(\meas) \rVert^2_{\cH_k^{\otimes 1} \otimes \cH_\ell^{\otimes 2}}$] \label{ex:CSICstat}
    Let k denote the kernel on $\cX_1$ and $\ell$ denote the kernel on $\cX_2$.  Let $(X, Y),(X', Y'),(X'', Y''), (X^{(3)}, Y^{(3)}), (X^{(4)}, Y^{(4)}),$ $ (X^{(5)}, Y^{(5)})$ denote independent copies of $\meas \in \cP(\cX_1 \times \cX_2)$. The full expression for  $\lVert \kappa^{(1,2)}_{k,\ell}(\meas) \rVert^2_{\cH_k^{\otimes 1} \otimes \cH_\ell^{\otimes 2}}$ is  
    \begin{align*}
        &\E k(X,X')k(X,X')\ell(Y,Y') - 4 \E k(X,X')k(X,X'')\ell(Y,Y') \\
        &- 2 \E k(X,X')k(X,X')\ell(Y,Y'') 
        + 4 \E k(X,X')k(X,X'')\ell(Y,Y^{(3)}) \\  
        &+ 2 \E k(X,X')k(X'',X^{(3)})\ell(Y,Y') 
        + 2 \E k(X,X')k(X'',X^{(3)})\ell(Y,Y^{(3)}) \\
        &+ 4 \E k(X,X')k(X'',X')\ell(Y,Y^{(3)}) 
        + \E k(X,X')k(X,X')\ell(Y'',Y^{(3)}) \\
        &- 8 \E k(X,X')k(X'',X^{(3)})\ell(Y^{(4)},Y')
        - 4 \E k(X,X')k(X'',X')\ell(Y^{(4)},Y^{(3)}) \\
        &+ 4 \E k(X,X')k(X'',X^{(3)})\ell(Y^{(4)},Y^{(5)}).
    \end{align*}
    Given samples $(x_i, y_i)_{i=1}^N$ from $\meas$ the corresponding V-statistic for this expression is
    \begin{align*}
        &\frac{1}{N^2} \sum_{i,j = 1}^N  k(x_i,x_j)k(x_i,x_j)\ell(y_i,y_j) 
         - \frac{4}{N^3} \sum_{i,j,\kappa = 1}^N       k(x_i,x_j)k(x_i,x_\kappa)\ell(y_i,y_j) \\
        & - \frac{2}{N^3} \sum_{i,j,\kappa = 1}^N       k(x_i,x_j)k(x_i,x_j)\ell(y_i,y_\kappa) 
         + \frac{4}{N^4} \sum_{i,j,\kappa,l = 1}^N    k(x_i,x_j)k(x_i,x_\kappa)\ell(y_i,y_l) \\
         &+ \frac{2}{N^4} \sum_{i,j,\kappa,l = 1}^N     k(x_i,x_j)k(x_\kappa,x_l)\ell(y_i,y_j) 
         + \frac{2}{N^4} \sum_{i,j,\kappa,l = 1}^N     k(x_i,x_j)k(x_\kappa,x_l)\ell(y_i,y_l) \\
         &+ \frac{4}{N^4} \sum_{i,j,\kappa,l = 1}^N     k(x_i,x_j)k(x_\kappa,x_j)\ell(y_i,y_l) 
         + \frac{1}{N^4} \sum_{i,j,\kappa,l = 1}^N     k(x_i,x_j)k(x_i,x_j)\ell(y_\kappa,y_l) \\
         &- \frac{8}{N^5} \sum_{i,j,\kappa,l,m = 1}^N   k(x_i,x_j)k(x_\kappa,x_l)\ell(y_m,y_j)
         - \frac{4}{N^5} \sum_{i,j,\kappa,l,m = 1}^N   k(x_i,x_j)k(x_\kappa,x_j)\ell(y_m,y_l) \\
         &+ \frac{4}{N^6} \sum_{i,j,\kappa,l,m,n = 1}^N k(x_i,x_j)k(x_\kappa,x_l)\ell(y_m,y_n). 
    \end{align*}
    Using the shorthand notation $\b K = \b K_x, \b L = \b L_y $ and $\b H = \b H_N$ and denoting by $\circ$ the Hadamard product $[\b A\circ \b B]_{i,j} =  A_{i,j} B_{i,j}$ and $\langle \cdot \rangle $ the sum over all elements of a matrix $\langle \b A \rangle = \sum_{i,j = 1}^N A_{i,j}$, the V-statistic above can be written in the simpler form
    \begin{align*}
        \frac{1}{N^2} \Bigg\langle &
        \b K \circ \b K \circ \b L 
        - 4 \b K \circ \b K \b H \circ \b L 
        - 2 \b K \circ \b K \circ \b L \b H \\
        &+ 4\b K\b H \circ \b K \circ \b L\b H  
        + 2 \b K \circ \b L \left\langle\frac{\b K}{N^2}\right\rangle 
        + 2 \b K \b H \circ \b H\b K \circ \b L \\  
        &+ 4 \b K \circ \b H\b K \circ \b L\b H  
        + \b K \circ \b K \left\langle\frac{\b L}{N^2}\right\rangle 
        - 8 \b K \circ \b L\b H  \left\langle\frac{\b K}{N^2}\right\rangle  \\
        &- 4 \b K \circ \b H \b K \left\langle\frac{\b L}{N^2}\right\rangle 
        + 4\left\langle\frac{\b K}{N^2}\right\rangle^2 \b L \Bigg\rangle.
    \end{align*}
\end{example}
Again this estimator can be computed in quadratic time.

\begin{example}[Estimating $\lVert \kappa^{(3)}_{k}(\meas) - \kappa^{(3)}_{k}(\meass) \rVert^2_{\cH_k^{\otimes 3}}$] \label{ex:k3}
    In order to estimate $d^{(3)}(\meas, \meass)$ we note that one can write
    \begin{align*}
        \lVert \kappa^{(3)}_{k}(\meas) - \kappa^{(3)}_{k}(\meass) \rVert^2_{\cH_k^{\otimes 3}} &= 
        \lVert \kappa^{(3)}_{k}(\meas)  \rVert^2_{\cH_k^{\otimes 3}}
      + \lVert \kappa^{(3)}_{k}(\meass) \rVert^2_{\cH_k^{\otimes 3}} \\
      & \quad - 2 \langle \kappa^{(3)}_{k}(\meas), \kappa^{(3)}_{k}(\meass) \rangle_{\cH_k^{\otimes 3}}.
    \end{align*}
    We can estimate the first two terms like in Example~\ref{ex:CSICstat}, and the third term can be expressed as
    \begin{align*}
        \langle \kappa^{(3)}_{k}(\meas), \kappa^{(3)}_{k}(\meass) \rangle_{\cH_k^{\otimes 3}}
        &= \E k(X,Y)^3 - 3 \E k(X,Y)^2k(X,Y')^2\\
        & \quad - 3 \E k(X,Y)^2k(X',Y)^2 + 6 \E k(X,Y)k(X,Y')k(X',Y)\\
        & \quad + 3 \E k(X,Y)^2k(X',Y') + 2 \E k(X,Y)k(X',Y)k(X'',Y)\\
        & \quad + 2 \E k(X,Y)k(X,Y')k(X,Y'') - 6 \E k(X,Y)k(X,Y')k(X',Y'')\\
        & \quad - 6 \E k(X,Y)k(X',Y)k(X'',Y') + 4 \E k(X,Y)k(X',Y')k(X'',Y'').
    \end{align*}
    For simplicity we will assume that we have an equal number of samples $(N)$ from both measures $ (x_i)_{i=1}^N \in \meas$ and $(y_i)_{i=1}^N \in\meass$. The V-statistic for $\langle \kappa^{(3)}_{k}(\meas), \kappa^{(3)}_{k}(\meass) \rangle_{\cH_k^{\otimes 3}}$ can be expressed as
    \begin{align*}
           &\frac{1}{N^2} \sum_{i,j = 1}^N                k(x_i,y_j)^3 
         - \frac{3}{N^3} \sum_{i,j,\kappa = 1}^N         k(x_i,y_j)^2 k(x_i,y_\kappa) \\
         &- \frac{3}{N^3} \sum_{i,j,\kappa = 1}^N         k(x_i,y_j)^2 k(x_\kappa, y_i) 
         + \frac{6}{N^4} \sum_{i,j,\kappa, l = 1}^N      k(x_i,y_j)k(x_i,y_\kappa)k(x_l,y_j) \\
         &+ \frac{3}{N^4} \sum_{i,j,\kappa,l = 1}^N       k(x_i,y_j)^2k(x_\kappa,y_l) 
         + \frac{2}{N^4} \sum_{i,j,\kappa,l = 1}^N       k(x_i,y_j)k(x_\kappa,y_j)k(x_l,y_j) \\
         &+ \frac{2}{N^4} \sum_{i,j,\kappa,l = 1}^N       k(x_i,y_j)k(x_i,y_\kappa)k(x_i,y_l)
         - \frac{6}{N^5} \sum_{i,j,\kappa,l, m = 1}^N   k(x_i,y_j)k(x_i,y_\kappa)k(x_l,y_m) \\
         &- \frac{6}{N^5} \sum_{i,j,\kappa,l, m = 1}^N   k(x_i,y_j)k(x_\kappa,y_j)k(x_l,y_m) 
         + \frac{4}{N^6} \sum_{i,j,\kappa,l, m, n = 1}^N k(x_i,y_j)k(x_\kappa,y_l)k(x_m,y_n).
     \end{align*}
    Using the notation $\b K_{xy} = [k(x_i, y_j)]_{i,j=1}^{N} $, this estimator simplifies to
    \begin{align*}
    \frac{1}{N^2} \Bigg\langle &
    \b K_{xy} \circ \b K_{xy} \circ \b K_{xy} 
    - 3 \b K_{xy} \circ \b K_{xy}  \circ \b H \b K_{xy} \\
    &- 3 \b K_{xy} \circ \b K_{xy} \circ \b K_{xy} \b H 
    + 6\b K_{xy} \circ \b K_{xy} \b H \circ \b H \b K_{xy} \\
    &+ 3 \b K_{xy} \circ \b K_{xy} \left\langle\frac{\b K_{xy}}{N^2}\right\rangle 
    + 2 \b K_{xy} \circ \b H\b K_{xy} \circ \b H\b K_{xy} \\
    &+ 2 \b K_{xy} \circ \b K_{xy} \b H \circ \b K_{xy} \b H 
    - 6\b K_{xy} \circ \b K_{xy} \b H \left\langle\frac{\b K_{xy}}{N^2}\right\rangle \\
    &- 6\b K_{xy} \circ \b H \b K_{xy} \left\langle\frac{\b K_{xy}}{N^2}\right\rangle 
    + 4\left\langle\frac{\b K}{N^2}\right\rangle^2 \b K_{xy} \Bigg\rangle. 
    \end{align*}
    We mention also that the first two terms $\lVert \kappa^{(3)}_{k}(\meas) \rVert^2_{\cH_k^{\otimes 3}}, \lVert \kappa^{(3)}_{k}(\meass) \rVert^2_{\cH_k^{\otimes 3}}$ can be computed a little more simply than in Example~\ref{ex:CSICstat} since the expressions have more symmetry, using the notation $\b K_{x} = [k(x_i, x_j)]_{i,j=1}^{N} $ we can write down the V-statistic for $\lVert \kappa^{(3)}_{k}(\meas) \rVert^2_{\cH_k^{\otimes 3}}$ as 
    \begin{align*}
        \frac{1}{N^2} \Bigg\langle &
        \b K_{x} \circ \b K_{x} \circ \b K_{x} 
        - 6 \b K_{x} \circ \b K_{x} \b H \circ \b K_{x} \\
        &+ 4\b K_{x}\b H \circ \b K_{x} \circ \b K_{x}\b H  
        + 3 \b K_{x} \circ \b K_{x} \left\langle\frac{\b K_{x}}{N^2}\right\rangle \\
        &+ 6 \b K_{x} \b H \circ \b H\b K_{x} \circ \b K_{x} 
        - 12 \b K_{x} \circ \b H \b K_{x} \left\langle\frac{\b K_{x}}{N^2}\right\rangle \\
        &+ 4\left\langle\frac{\b K_{x}}{N^2}\right\rangle^2 \b K_{x} \Bigg\rangle
    \end{align*}
    with a similar expression for $\lVert \kappa^{(3)}_{k}(\meass) \rVert^2_{\cH_k^{\otimes 3}}$. The estimator can be computed in quadratic time.
\end{example}

\end{document}